\theoremstyle{plain}
\newtheorem{theorem}{Theorem}[section]
\newtheorem{proposition}[theorem]{Proposition}
\newtheorem{lemma}[theorem]{Lemma}
\theoremstyle{definition}
\theoremstyle{remark}
\newcommand{\E}{\mathbb{E}}
\newcommand{\R}{\mathbb{R}}
\newcommand{\Var}{\mathrm{Var}}
\newcommand{\Cov}{\mathrm{Cov}}
\newcommand{\cX}{\mathcal{X}}
\newcommand{\cY}{\mathcal{Y}}
\newcommand{\cZ}{\mathcal{Z}}
\newcommand{\bigmid}{\mathrel{\big\vert}}
\newcommand{\Bigmid}{\mathrel{\Big\vert}}
\newcommand{\middlemid}{\;\middle|\;}
\newcommand{\simiid}{\stackrel{iid}{\sim}}
\newcommand{\indep}{\perp\!\!\!\perp}
\newcommand{\PhiY}{\Phi^\theta_{\scriptscriptstyle{Y\!{|}\!X}}}
\newcommand{\PhiYY}{\Phi^\theta_{\scriptscriptstyle{Y_1\!,\!Y_2\!{|}\!X}}}
\newcommand{\pgt}{p_{\scriptscriptstyle{Y\!{|}\!X}}}
\newcommand{\pgtvec}{\bm{p}_{\scriptscriptstyle{Y\!{|}\!X}}}
\newcommand{\phaty}{\hat{p}^{\theta}_{\scriptscriptstyle{Y\!{|}\!X}}}
\newcommand{\phatprimey}{\hat{p}^{\theta'}_{\scriptscriptstyle{Y\!{|}\!X}}}
\newcommand{\sigmahat}{\hat{\bm{\Sigma}}^{\theta}}
\newcommand{\sigmahatprime}{\hat{\bm{\Sigma}}^{\theta'}}
\newcommand{\sigmacheat}{\hat{\bm{\Sigma}}^{\theta}_{\scriptscriptstyle{Y_1\!,\!Y_2\!{|}\!X}}}
\newcommand{\vhat}{\hat{V}^{\theta}}
\newcommand{\phatyy}{\hat{p}^{\theta}_{\scriptscriptstyle{Y_1\!,\!Y_2\!{|}\!X}}}
\newcommand{\phatyc}{\hat{p}^{\theta}_{\scriptscriptstyle{Y_2\!{|}\!Y_1\!,\!X}}}
\newcommand{\phatym}{\hat{p}^{\theta}_{\scriptscriptstyle{Y_1\!{|}\!X}}}
\newcommand{\phatymtwo}{\hat{p}^{\theta}_{\scriptscriptstyle{Y_2\!{|}\!X}}}
\newcommand{\Ccheat}{C^{\theta}_{\scriptscriptstyle{\!\textsc{Cheat}}}}
\newcommand{\vcheat}{\hat{V}^{\theta}_{\scriptscriptstyle{\!\textsc{Cheat}}}}
\newcommand{\Ydec}{\tilde{Y}}
\definecolor{tab10blue}{HTML}{1F77B4}
\definecolor{tab10orange}{HTML}{FF7F0E}
\definecolor{fig1red}{HTML}{cc4125}
\definecolor{fig1blue}{HTML}{4285f4}
\definecolor{fig1green}{HTML}{489d28}
\DeclareMathOperator*{\argmax}{arg\,max}
\icmltitlerunning{Experts Don't Cheat: Learning What You Don't Know by Predicting Pairs}
\begin{document}

\twocolumn[
\icmltitle{
Experts Don't Cheat: Learning What You Don't Know By Predicting Pairs
}

\icmlsetsymbol{equal}{*}

\begin{icmlauthorlist}
\icmlauthor{Daniel D. Johnson}{gdm,uoft}
\icmlauthor{Daniel Tarlow}{gdm}
\icmlauthor{David Duvenaud}{uoft}
\icmlauthor{Chris J. Maddison}{uoft}
\end{icmlauthorlist}

\icmlaffiliation{gdm}{Google DeepMind}
\icmlaffiliation{uoft}{University of Toronto, Department of Computer Science, Ontario, Canada}

\icmlcorrespondingauthor{Daniel D. Johnson}{ddjohnson@cs.toronto.edu}

\icmlkeywords{Machine Learning, ICML, epistemic uncertainty, aleatoric uncertainty, uncertainty quantification, distribution-free, calibration, confidence intervals, reliability, misspecification, underfitting, hallucination, grouping loss, nonparameteric inference, generative models, language models}

\vskip 0.3in
]

\printAffiliationsAndNotice{}  %

\begin{abstract}
Identifying how much a model $\phaty$ knows about the stochastic real-world process $\pgt$ it was trained on is important to ensure it avoids producing incorrect or ``hallucinated'' answers or taking unsafe actions.
But this is difficult
for generative models
because
probabilistic predictions do not distinguish between per-response noise (aleatoric uncertainty) and lack of knowledge about the process (epistemic uncertainty), and existing epistemic uncertainty quantification techniques tend to
be overconfident
when the model underfits.
We propose a general strategy for teaching a
model to both approximate $\pgt$ and also estimate
the remaining gaps between $\phaty$ and $\pgt$:
train it to predict \emph{pairs} of independent responses drawn from the true conditional distribution, allow it to ``cheat'' by observing one response while predicting the other, then measure how much it cheats.
Remarkably, we prove that being good at cheating (i.e. cheating whenever it improves your prediction) is 
equivalent to being \emph{second-order calibrated}, a principled extension of ordinary calibration that allows us to construct provably-correct frequentist confidence intervals for $\pgt$ and detect incorrect responses with high probability.
We demonstrate empirically that our approach accurately estimates how much models don't know across ambiguous image classification, (synthetic) language modeling, and partially-observable navigation tasks, outperforming existing techniques.
\end{abstract}

\section{Introduction}

\begin{figure}
    \centering
    \includegraphics[width=\linewidth]{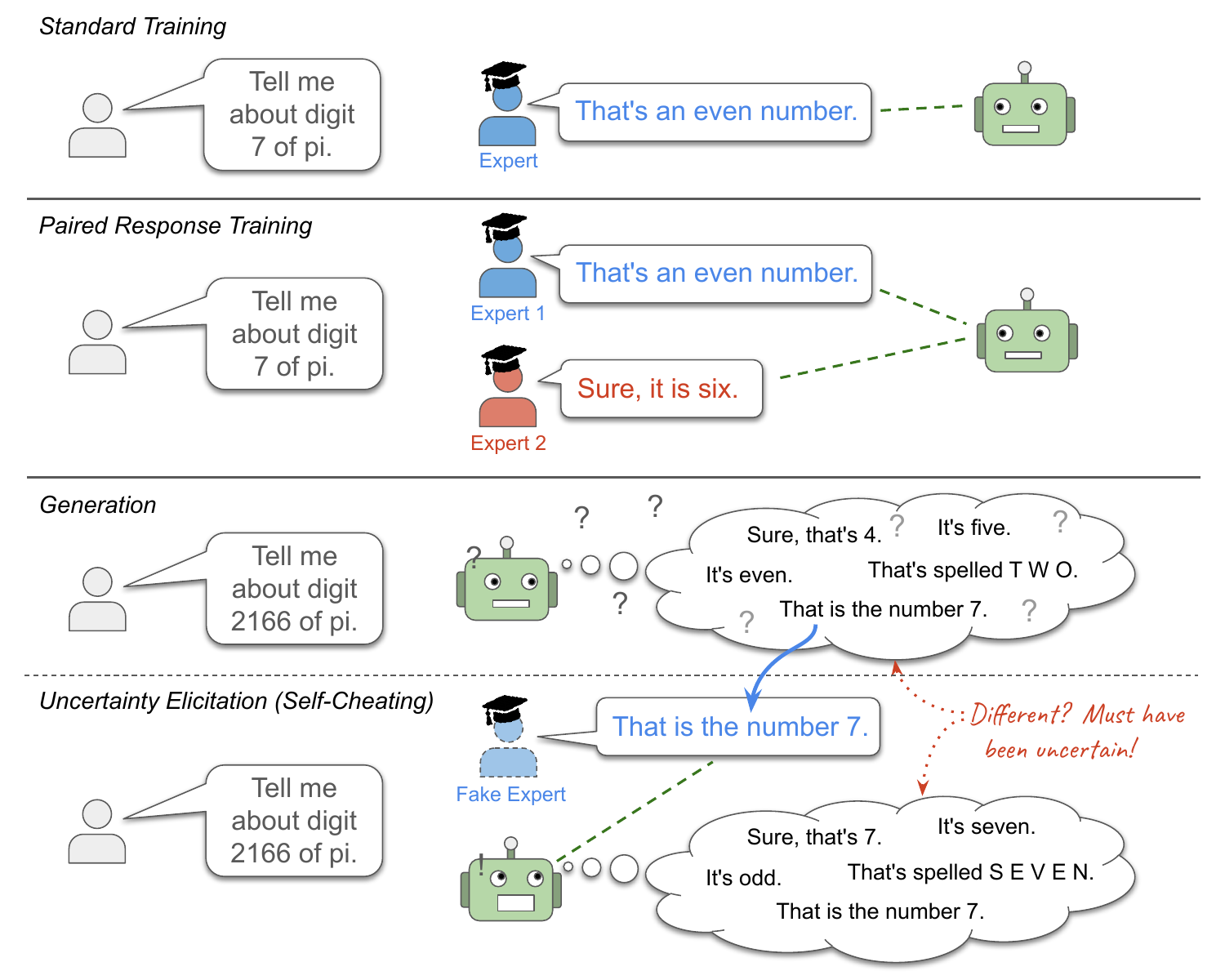}
    \vspace{-2em}
    \caption{
    We train a model (\textcolor{fig1green}{green $\hat{p}_{\theta}$}) to predict pairs of i.i.d. ground-truth answers (\textcolor{fig1blue}{blue \CIRCLE} and \textcolor{fig1red}{red \CIRCLE}), and allow it to ``cheat'' by observing one (\textcolor{fig1blue}{\CIRCLE}) while predicting the other (\textcolor{fig1red}{\CIRCLE}).
    Calibrated models only need to cheat when there is something they don't know, so the amount that the model cheats when its own guesses are presented as expert answers can be used to construct provably-correct ``cheat-corrected'' estimates of how close $\phaty$ is to $\pgt$.}
    \label{fig:teaser}
\end{figure}

When a generative model $\phaty$
(such as a large language model)
is trained to imitate a stochastic real-world process $\pgt$, it's important to identify what the model doesn't know about the process. Missing information can cause even well-trained models to ``hallucinate'' incorrect claims \citep{Ji2022SurveyOH,Kalai2023CalibratedLM}, make unjustified decisions \citep{HbertJohnson2018MulticalibrationCF}, or exhibit ``self-delusions'' that conflate cause and effect \citep{ortega2021shaking}.
Unfortunately, detecting missing information is very difficult when the true responses $Y$ are not deterministic functions of the input $X$, because probabilistic predictions made by $\phaty$ must account for both the model's uncertainty about the process (called ``epistemic uncertainty'') and the variability intrinsic to $\pgt$ (``aleatoric uncertainty'').
For example, if responding to a query $X = \,$``Tell me about digit 5641 of $\pi$'',
the predicted probability of a response (e.g. ``That is 7'') may be small either because the model does not know how $\pgt$ would respond (e.g. whether the answer is actually ``That is 4''), or simply because there are many plausible responses under $\pgt$ (e.g. ``Sure, it's an odd number'').

If we want to determine whether our model knows enough about $\pgt$ for us to trust its responses, we cannot rely on the value of $\phaty(y|x)$ alone, since it may just be small because $\pgt(y|x)$ was small. Instead, what matters is
whether $\phaty(y|x)$ is close to $\pgt(y|x)$.
Unfortunately, although we can use metrics like cross-entropy or marginal likelihood to measure improvements in $\phaty$ toward $\pgt$, we do not generally know the entropy of $\pgt$ itself, so it is difficult to know how much our model can still improve.  In fact, when training on a dataset of $(X,Y)$ pairs it is in general \emph{impossible} to tell how close $\phaty$ is to $\pgt$ without making assumptions about $\pgt$ \citep{barber2020distribution}. And if we make assumptions that turn out to be false, ensembling or Bayesian-inference-based approaches can produce highly-confident low-uncertainty estimates despite converging to a model that fails to fit important patterns in the data.

In this work, we show that these limitations can be overcome without making assumptions about $\pgt$ if we instead make a small modification to the training procedure: collect and train on \emph{pairs} of responses $(Y_1, Y_2)$ for each $X$. Our strategy is based on the following intuition:
if an unscrupulous student doesn't know the answer to a question, they could improve their guess by peeking at someone else's answer. By analogy, if a model's prediction $\phaty(\cdot|x)$ does not match the true distribution $\pgt(\cdot| x)$, the model should be able to improve its prediction if it \emph{cheats} by peeking at a sample $y_1 \sim \pgt(\cdot|x)$ from the distribution first.
And since models only benefit from cheating when they do not already know the distribution, the amount that a calibrated model cheats gives us exactly what we need to robustly estimate the gaps between $\phaty$ and $\pgt$.
Our contributions are:
\begin{itemize}[left=\parindent,topsep=0pt]
    \item 
    We define \emph{second-order calibration}, an extension of ordinary calibration that requires models to additionally report how much the true probabilities $\pgt(\cdot| x)$ (co)vary around $\phaty(\cdot| x)$ when there are inputs the model cannot distinguish (\cref{fig:simplex-calibration}). 
    We also demonstrate that popular epistemic uncertainty quantification approaches are not second-order calibrated under misspecification (\cref{fig:intro_1d_problem}).
    \item We show that second-order calibration is equivalent to ordinary calibration over pairs of responses $(y_1, y_2)$, and propose a simple modification to standard maximum-likelihood training (``\emph{training models to cheat}'' as in \cref{fig:teaser}) which incentivizes models to become second-order calibrated given sufficient capacity and training data.
    \item We prove that, given a calibrated model of pairs, you can construct confidence intervals for the true probabilities $\pgt(y|x)$ and reliable tests for ``statistical hallucinations'' (responses $y$ with $\pgt(y|x) = 0$). Our tests rely on a novel and easily-computable \emph{cheat-corrected epistemic confidence} metric, and can be combined with most off-the-shelf decoding strategies to construct new selective decoders with bounded hallucination rates.
    \item For binary $\cY = \{0,1\}$, we further show that you can construct nontrivial confidence intervals for $\pgt$ even with a miscalibrated model as long as you have a calibration set of paired responses, without making any assumptions about the form of $\pgt$. This means that impossibility results for distribution-free probability regression \citep{barber2020distribution} do not apply when we use paired responses.
    \item We demonstrate that pair-based variance estimates are empirically second-order well-calibrated on the \mbox{CIFAR-10H} perceptual uncertainty dataset \citep{peterson2019human}, outperforming a variety of existing uncertainty quantification baselines while only requiring small modifications to the data format and output layer.
    \item We also train Transformer \citep{vaswani2017attention} sequence models on paired responses in synthetic language modeling and partially-observable gridworld tasks, and show that our statistical-hallucination tests enable reliable detection of false statements and unsafe actions despite never observing any such errors during training.
\end{itemize}

\section{Second-Order Calibrated Models Report Where They Know The True Conditional}
Let $\mathcal{X}$ be a set of inputs (e.g. prompts or images), and $\mathcal{Y}$ be an arbitrary discrete set of possible responses (such as token sequences or class labels).
Suppose we train a model $\phaty$ on a dataset collected from a query distribution $p(X)$ and a ground-truth conditional distribution $\pgt(Y|X)$, with $X \in \mathcal{X}$ and $Y \in \mathcal{Y}$, and we then use this model to predict the distribution of $Y$ for new $X \sim p(X)$ drawn at inference time.
How can we tell if our model $\phaty$ knows enough to match $\pgt$ for these new queries? Specifically, how can we obtain a reliable estimate of the gap between $\phaty(\cdot | x)$ and $\pgt(\cdot | x)$?

\subsection{Calibrated Models Can Be Far From Perfect}\label{sec:predictive_calibration_grouping}

A common way to measure the quality of $\phaty$ is to measure its \emph{calibration}: if we aggregate over inputs $X$ that have the same predicted probability $\phaty(y|X)$, we should hope the true fraction for which $Y=y$ to be about $\phaty(y|X)$.

\begin{restatable}{definition}{RestateDefnCalibration}\label{defn:calibration}
Let $\Delta^{\cY}$ denote the set of probability distributions over the discrete space $\cY$.
A predictor $\phaty \!: \cX \to \Delta^{\cY}$ is \textbf{(first-order) calibrated}
if there exists a \textbf{grouping function} $\Phi : \cX \to \cZ_\Phi$
such that $\phaty$ maps each input $x \in \cX$ to the average ground-truth distribution $\pgt$ across random inputs $X$ in the same \emph{equivalence class} $[x]_\Phi = \{ x' : \Phi(x) =\Phi(x')\} \subset \cX$:
\vspace{-0.25em}
\begin{align}
\phaty(y|x) &= \E\big[ \pgt(y|X) \bigmid X \in [x]_\Phi \big]
\label{eqn:calibdefn}\\
&= p\big( Y=y \bigmid \Phi(X)=\Phi(x) \big).
\nonumber
\end{align}
\end{restatable}
\vspace{-1em}
Calibration is usually defined for the specific grouping function
$\PhiY \!\!:\! \cX \!\to\! \R^\cY$ with $\PhiY\!(x)_y \!=\! \phaty(y|x)$, so that the groups are the subsets of $\cX$ that map to the same predicted distribution \citep{kumar2019verified,vaicenavicius2019evaluating,perez2022beyond}.
We define calibration in terms of an arbitrary grouping function $\Phi$ to emphasize that a model $\phaty$ can ignore parts of $X$ and still be well-calibrated;
in this case the grouping function $\Phi(x)$ identifies the subsets of $\cX$ that the model distinguishes between.
These two definitions are equivalent  \citep{gupta2020distribution}, since $\PhiY$ is the coarsest $\Phi$ satisfying \cref{eqn:calibdefn}:

\begin{restatable}{proposition}{RestatePropCalibCoarse}\label{prop:calib_coarse}
If Eqn. (\ref{eqn:calibdefn}) holds for some fixed $\Phi$, then it must also hold for
$\PhiY \!: \cX \to \R^\cY$, where $\PhiY\!(x)_y \triangleq \phaty\!(y|x)$.
\end{restatable}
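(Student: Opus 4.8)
The plan is to show that the grouping function $\PhiY$ induced by the model's own predictions is a \emph{coarsening} of any grouping function $\Phi$ that witnesses \cref{eqn:calibdefn}, and then to recover the calibration identity for $\PhiY$ from the one for $\Phi$ via the tower property.

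First I would observe that the right-hand side of \cref{eqn:calibdefn}, namely $\E[\pgt(y|X) \mid X \in [x]_\Phi]$, depends on $x$ only through the value $\Phi(x)$. Hence the map $x \mapsto \phaty(\cdot\,|x)$ --- and therefore $\PhiY$ itself --- is a function of $\Phi(x)$, so every $\Phi$-equivalence class lies inside a single $\PhiY$-equivalence class. In particular $[x]_\Phi \subseteq [x]_{\PhiY}$, so $p(X\in[x]_{\PhiY}) \ge p(X\in[x]_\Phi) > 0$ and the conditional expectation over $[x]_{\PhiY}$ is well defined; moreover $[x]_{\PhiY}$ is $\Phi$-saturated, i.e.\ a union of $\Phi$-classes, which is what licenses the conditioning step below.

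Next I would condition on $\Phi(X)$ inside the event $\{X \in [x]_{\PhiY}\}$. By \cref{eqn:calibdefn} applied with the witness $\Phi$, we have $\E[\pgt(y|X)\mid \Phi(X)] = \phaty(y|X)$; and for $X\in[x]_{\PhiY}$ the quantity $\phaty(y|X)$ equals the constant $\phaty(y|x)$ by definition of $\PhiY$. Since $[x]_{\PhiY}$ is $\Phi$-saturated, the tower property gives
$$\E\big[\pgt(y|X)\bigmid X\in[x]_{\PhiY}\big] \;=\; \E\big[\,\E[\pgt(y|X)\mid \Phi(X)]\ \bigmid\ X\in[x]_{\PhiY}\big] \;=\; \phaty(y|x),$$
which is precisely \cref{eqn:calibdefn} with grouping function $\PhiY$; the second form ($p(Y=y\mid\PhiY(X)=\PhiY(x))$) then follows immediately, or from the already-noted equivalence of the two lines of \cref{defn:calibration}.

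I don't anticipate a real obstacle: the content is entirely the tower property, and the only things needing care are the bookkeeping that $\PhiY$ is genuinely coarser than $\Phi$ (so that $[x]_{\PhiY}$ is $\Phi$-saturated) and the positivity of $p(X\in[x]_{\PhiY})$. A reader who prefers to avoid conditional-expectation machinery can instead note that $[x]_{\PhiY}$ decomposes into $\Phi$-classes whose $\pgt$-averages are all equal to $\phaty(\cdot\,|x)$, so the pooled average --- a probability-weighted convex combination of identical vectors --- is again $\phaty(\cdot\,|x)$.
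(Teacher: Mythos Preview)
Your proposal is correct and follows essentially the same route as the paper: both arguments observe that $\PhiY$ factors through $\Phi$ (the paper writes this as $\PhiY(x)=h(\Phi(x))$ for an explicit $h$, you phrase it as $\PhiY$ being a coarsening of $\Phi$), and then both invoke the tower property to pass the calibration identity from $\Phi$ to $\PhiY$. The only differences are cosmetic---your attention to positivity and $\Phi$-saturation, and the alternative ``convex combination of identical vectors'' reading you offer at the end, are nice touches but not substantively different from the paper's chain of equalities.
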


\begin{figure}
    \centering
    \includegraphics[width=\linewidth]{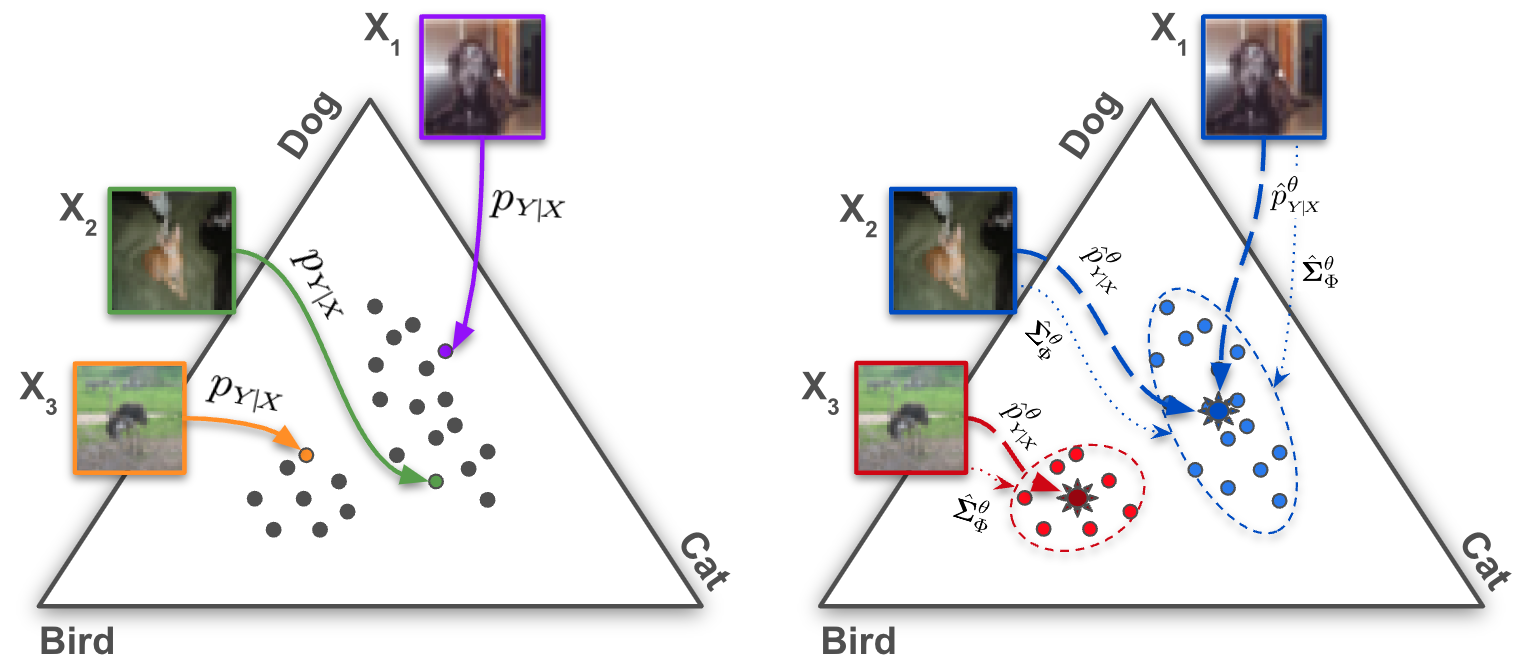}
    \vspace{-2em}
    \caption{
    Each input point $x$ (e.g. an ambiguous image) has its own ground-truth response distribution  $\pgt(\cdot|x)$ (e.g. possible human annotator labels for $x$), but
    first-order calibration only requires the model's prediction $\phaty$ to be an \emph{average} of $\pgt$ across an arbitrary grouping of examples (red and blue), which means $\phaty$ can still be far from $\pgt$ for each individual $x$.
    A second-order-calibrated model additionally measures the suboptimality of this approximation by predicting the per-group \emph{covariance} $\sigmahat$ of $\pgt$, but this is challenging because $\pgt$ itself is never observed.
    }
    \label{fig:simplex-calibration}
\end{figure}

(We defer proofs of all theoretical results to \cref{app:theory}.)

A well-calibrated predictor can still be a bad estimate of $\pgt$ if it fails to distinguish inputs with different true probabilities $\pgt(y|X)$ and thus averages across them.
For example, a calibrated coin-flip predictor might output $\phaty(\textsc{\footnotesize{Heads}}|x) = 50\%$ because it knows coin $x$ is fair, or because it cannot distinguish coins $x_+$ and $x_-$ with opposite biases. In the first case $\phaty\!(\textsc{\footnotesize{Heads}}|x) \!=\! \pgt\!(\textsc{\footnotesize{Heads}}|x)$ and the model is optimal, but in the second the model is suboptimal because it has put inputs with $\pgt(y|x_+) \ne \pgt(y|x_-)$ into the same group. This additional error is called the \emph{grouping loss} \citep{perez2022beyond,kull2015novel}, which can be lower-bounded but is difficult to upper-bound.

\subsection{Second-Order Calibration Measures The Gap}\label{sec:second_order_calibration}
It would be useful if we could get a model to tell us how far $\phaty(y|x)$ might be from $\pgt(y|x)$ for each $x$, conditioned on what the model ``knows''. We make this precise by proposing the following definition.

\begin{restatable}{definition}{RestateDefnSecondOrderCalibration}\label{defn:second_order_calibration}
A predictor $\phaty \!: \cX \to \Delta^{\cY}$ and covariance estimator $\sigmahat \!: \cX \to \R^{\cY\times\cY}$ are \textbf{second-order calibrated} if there exists a grouping function $\Phi$ such that
$\phaty$ and $\sigmahat$ map each input $x \in \cX$ to the average \emph{and covariance matrix} of the ground truth probability vector $\pgtvec\!(\cdot|x) \in \Delta^{\cY}$ across inputs $X$ in the same equivalence class under $\Phi$:
\begin{align*}
    \phaty(y | x) &= \E\big[\pgt(y | X) \bigmid
    X \in [x]_\Phi
    \big],
    \\
    \sigmahat\!(x) &= \Cov\!\Big[ \pgtvec\!(\cdot|X),\, \pgtvec\!(\cdot|X)
    \!\Bigmid\!
    X \in [x]_\Phi
    \!\Big]
\end{align*}
where $\pgtvec(\cdot|x)_y = \pgt(y|x)$.
We call $\sigmahat$ the \textbf{epistemic covariance} of the true conditional $\pgtvec\!(\cdot|x)$ under $\Phi$.
\end{restatable}

If we had a second-order-calibrated predictor, we could use it to identify how tightly concentrated the true probability vector $\pgtvec$ is around the model's best guess $\phaty$ (as shown in \cref{fig:simplex-calibration}), which would tell us whether $\phaty$ is a good approximation of $\pgt$.
In our coin-flip example,
a second-order-calibrated model would report $\sigmahat(x)_{y, y} = 0$
if it knows the coin is fair, and $\sigmahat(x)_{y, y} > 0$ if it can't tell which way $x$ is biased (i.e. if $\Phi(x) = \Phi(x_+) = \Phi(x_-)$).

Unfortunately, it is not straightforward to construct a second-order-calibrated predictor, because we only observe a \emph{sample} $Y \sim \pgt(\cdot|x)$ and not the full $\pgtvec$. Second-order calibration requires the predictor to distinguish between epistemic and aleatoric uncertainty, but the variance $\Var(Y| \Phi(X))$ of $Y$ itself (for a binary $Y$) still only measures the total uncertainty and is thus a first-order quantity.

\begin{figure}
    \centering
    \includegraphics[width=\linewidth]{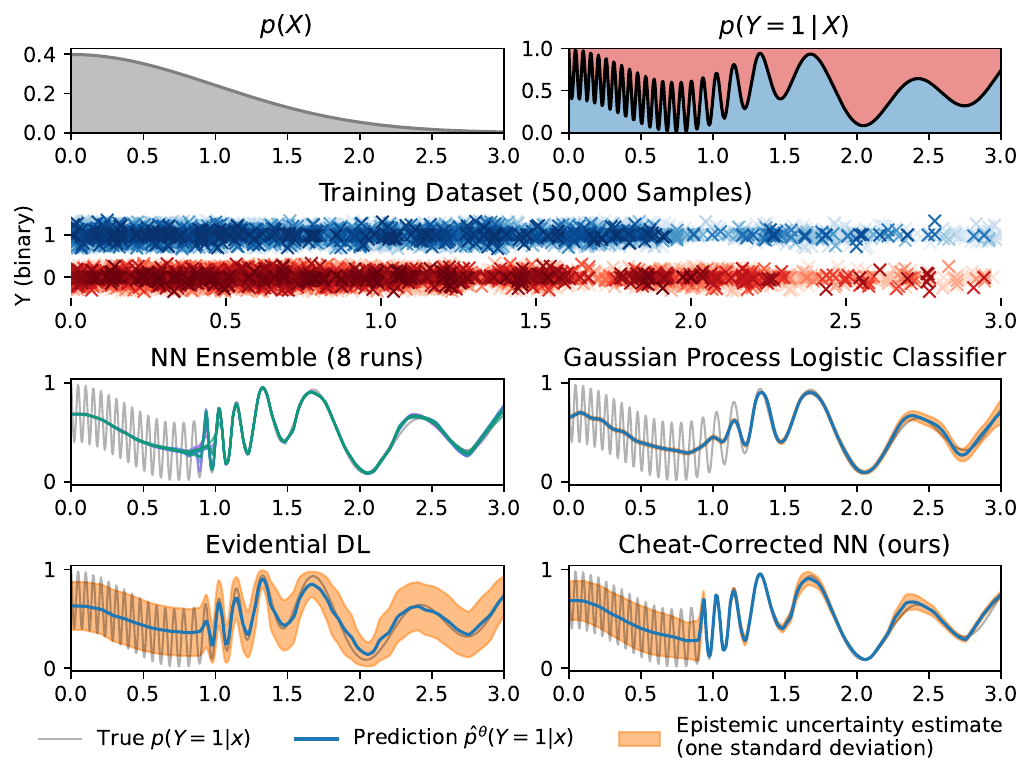}
    \vspace{-2em}
    \caption{
    Popular epistemic uncertainty quantification methods are under- or overconfident when $\pgt$ does not match their assumptions. Given a large number of samples $X \in \R, Y \in \{0,1\}$, ensembles and misspecified Gaussian process classifiers report low uncertainty at convergence despite failing to match $\pgt$ around $x \approx 0$; Evidential DL \citep{Sensoy2018EvidentialDL} reports high uncertainty near $x \approx 2.0$ despite fitting well. In contrast, by using \emph{two} samples $(Y_1, Y_2)$ for each $X$, our method reports uncertainty that matches the true gap $(\phaty - \pgt)^2$ even when it underfits.
    }
    \label{fig:intro_1d_problem}
\end{figure}

\subsection{Existing Epistemic Uncertainty Estimators Under- or Over-estimate The Gap For Underfit Models}
Existing techniques for estimating epistemic uncertainty often attempt to estimate how much $\pgt$ could vary given what the model ``knows''. For instance, Gaussian processes \citep{bernardo1998regression} and Bayesian neural networks \citep{Goan2020BayesianNN} impose a prior distribution over the generative process, then evaluate the variance of the prediction under an approximate posterior \citep{Kendall2017WhatUD}. Other related strategies include ensembling \citep{Lakshminarayanan2016SimpleAS}, injecting noise into the model or training process \citep{Gal2015DropoutAA, Osband2021EpistemicNN, Maddox2019ASB}, or 
predicting a ``distribution over distributions''
\citep{Sensoy2018EvidentialDL,malinin2018predictive}.

We might hope that these estimates would be second-order calibrated, but unfortunately this is not generally the case, especially if the model is misspecified or underfit relative to $\pgt$. We demonstrate this in \cref{fig:intro_1d_problem} by applying a variety of methods to a fixed $\pgt$ with both low- and high-frequency variation (discussed more in \cref{appendix:training_1d_regression}).
With a large training set, an ensemble and a Gaussian Process classifier both converge to highly confident but incorrect solutions, because the prior was misspecified and did not include $\pgt$.
Evidential DL \citep{Sensoy2018EvidentialDL}, on the other hand, is underconfident because its objective biases its uncertainty estimates \citep{bengs2022pitfalls,bengs2023second}.
In practice, even the largest models are likely to underfit in some regions of $\cX$, making this a serious concern if we wish to reliably estimate how far $\phaty$ actually is from $\pgt$.

\section{Second-Order Calibration From Paired $Y$s}

How can we obtain a second-order calibrated model?
We now show that making second-order-calibrated predictions about individual response \emph{probabilities} is equivalent to making first-order-calibrated predictions about \emph{paired responses}.

Suppose we have a model $\phatyy(Y_1, Y_2 | X)$ predicting a distribution over $\cY \times \cY$, and let $\phatym$, $\phatymtwo$, $\phatyc$ be the induced marginal and conditional distributions.
If $\phatyy$ is calibrated at predicting a pair of independent responses $Y_1, Y_2\simiid \pgt(\,\cdot\,|X)$, it must be the case that
\begin{align*}
\phatyy(y_1, y_2|x) &= \E\Big[ \pgt(y_1|X)\!\cdot\!\pgt(y_2|X) \!\Bigmid\! X \!\in\! [x]_\Phi \! \Big]
\end{align*}
for some $\Phi$. 
How much should we expect $y_2$ to depend on $y_1$ according to this model? Although $Y_1$ and $Y_2$ are independent given $X$, they may not be independent conditioned on $\Phi(X)$, i.e. conditioned on what our model ``knows'' about $X$. In this case, we should expect a calibrated model to ``\emph{cheat}'' by using information about $y_1$ to better inform its prediction of $y_2$. We can quantify this by measuring how correlated the possible outcomes are under the model:

\begin{restatable}{definition}{DefnCheatCorrCov}\label{defn:cheat_corr_cov}
The \textbf{pair covariance} of $\phatyy$ is
\begin{align*}
\sigmacheat\!(x)_{y_i, y_j} &\triangleq \phatyy\!(y_i, y_j | x) - \phatym\!(y_i|x)\,\phatymtwo\!(y_j|x)
\end{align*}
\end{restatable}
$\sigmacheat\!(x)_{y_i, y_j}$ is the difference between the predicted joint and what we would expect if $Y_1$ and $Y_2$ were independent given $\Phi(X)$.
It turns out that this is exactly what we need to construct a second-order-calibrated predictor of $\pgt$:

\begin{restatable}{theorem}{RestateThmCheatEquivalence}\label{thm:cheat_equivalence}
If $\phatyy$ is first-order calibrated at predicting pairs $(Y_1, Y_2)$,
then its marginal $\phatym$ and pair covariance $\sigmacheat$ are second-order calibrated at predicting $\pgt$.
Moreover, this is a bijection: for any second-order-calibrated $(\phatprimey, \sigmahatprime)$, there is a unique first-order-calibrated $\phatyy$ with $\phatprimey = \phatym$ and $\sigmahatprime=\sigmacheat$.
\end{restatable}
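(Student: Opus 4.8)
The plan is to verify the two directions of the correspondence by direct computation with the calibration identities. Fix a grouping function $\Phi$. For the forward direction, suppose $\phatyy$ is first-order calibrated at predicting independent pairs, so that $\phatyy(y_1,y_2|x) = \E[\pgt(y_1|X)\pgt(y_2|X) \mid X \in [x]_\Phi]$. I would first compute the marginals: summing over $y_2$ and using $\sum_{y_2}\pgt(y_2|X)=1$ gives $\phatym(y_1|x) = \E[\pgt(y_1|X)\mid X\in[x]_\Phi]$, which is exactly the first condition of \cref{defn:second_order_calibration} (and likewise $\phatymtwo = \phatym$ on each group). Then I would substitute into \cref{defn:cheat_corr_cov}:
\begin{align*}
\sigmacheat(x)_{y_i,y_j}
&= \E\big[\pgt(y_i|X)\pgt(y_j|X)\bigmid X\in[x]_\Phi\big]\\
&\quad - \E\big[\pgt(y_i|X)\bigmid X\in[x]_\Phi\big]\,\E\big[\pgt(y_j|X)\bigmid X\in[x]_\Phi\big],
\end{align*}
which is precisely $\Cov[\pgtvec(\cdot|X),\pgtvec(\cdot|X)\mid X\in[x]_\Phi]$, the second condition. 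So $(\phatym,\sigmacheat)$ is second-order calibrated with the same $\Phi$.

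For the reverse direction, given second-order-calibrated $(\phatprimey,\sigmahatprime)$ with witness $\Phi$, I would \emph{define} $\phatyy(y_1,y_2|x) \triangleq \sigmahatprime(x)_{y_1,y_2} + \phatprimey(y_1|x)\phatprimey(y_2|x)$ and check three things. First, that this is a valid distribution over $\cY\times\cY$: nonnegativity and normalization follow because, expanding the definitions of $\phatprimey$ and $\sigmahatprime$ as conditional expectation and covariance, $\phatyy(y_1,y_2|x) = \E[\pgt(y_1|X)\pgt(y_2|X)\mid X\in[x]_\Phi]$, which is manifestly a mixture of genuine product distributions, hence itself a distribution; summing over $y_1,y_2$ gives $1$. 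Second, that $\phatyy$ so defined is first-order calibrated at predicting independent pairs — immediate from the same rewriting, since it equals $\E[\pgt(y_1|X)\pgt(y_2|X)\mid X\in[x]_\Phi]$. Third, that its induced marginal and pair covariance recover $\phatprimey$ and $\sigmahatprime$: the marginal computation from the forward direction gives $\phatym = \phatprimey$, and then $\sigmacheat(x)_{y_i,y_j} = \phatyy(y_i,y_j|x) - \phatym(y_i|x)\phatymtwo(y_j|x) = \sigmahatprime(x)_{y_i,y_j}$ by construction.

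Finally I would argue uniqueness: any first-order-calibrated $\phatyy$ with $\phatym = \phatprimey$ and $\sigmacheat = \sigmahatprime$ must satisfy $\phatyy(y_1,y_2|x) = \sigmacheat(x)_{y_1,y_2} + \phatym(y_1|x)\phatymtwo(y_2|x) = \sigmahatprime(x)_{y_1,y_2} + \phatprimey(y_1|x)\phatprimey(y_2|x)$, which pins it down completely (note $\phatymtwo = \phatym$ on each group for a pair-calibrated model, since both equal $\E[\pgt(\cdot|X)\mid X\in[x]_\Phi]$). So the map is a bijection.

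The one subtlety I expect to need care is the handling of the grouping function across the bijection. Going forward, $\phatyy$ and $(\phatym,\sigmacheat)$ share the \emph{same} $\Phi$, so there is nothing to check. Going backward, I construct $\phatyy$ from $(\phatprimey,\sigmahatprime)$ using \emph{their} witness $\Phi$; one might worry the resulting $\phatyy$ is also calibrated under some coarser grouping that would give a different pair distribution, but \cref{prop:calib_coarse} resolves this — the induced marginal $\phatym = \PhiY^{\theta}$ is already the coarsest valid grouping, and the construction $\phatyy = \sigmacheat + \phatym\otimes\phatym$ depends only on $(\phatprimey,\sigmahatprime)$, not on the choice of witness, so the map is well-defined. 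The rest is bookkeeping with $\sum_y \pgt(y|X) = 1$.
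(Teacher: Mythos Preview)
Your proposal is correct and follows essentially the same route as the paper's proof: both directions hinge on the identity $\Cov[\pgt(y_i|X),\pgt(y_j|X)] = \E[\pgt(y_i|X)\pgt(y_j|X)] - \E[\pgt(y_i|X)]\E[\pgt(y_j|X)]$, with the inverse map defined by $\phatyy = \sigmahatprime + \phatprimey \otimes \phatprimey$. If anything, your version is slightly more thorough than the paper's---you explicitly verify that the constructed $\phatyy$ is a bona fide distribution and that $f\circ g = \mathrm{id}$ (the paper checks only $g\circ f = \mathrm{id}$), and your uniqueness argument via $\phatymtwo = \phatym$ is a clean alternative to the paper's inverse-composition check.
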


This equivalence means that techniques for training first-order-calibrated models can also be used to construct second-order calibrated models whenever it is possible to draw multiple samples from $\pgt$ (e.g. by asking two random human experts to label $X$). In particular, we propose to directly train a model $\phatyy(Y_1, Y_2 | X)$ to predict paired responses by minimizing the standard cross-entropy loss
\[
-\E_{\substack{X \sim p(X),\\Y_1, Y_2 \sim \pgt}}\Big[ \log\phatyy(Y_1, Y_2 | X) \Big]
\]
over a dataset of $(X^{(i)}, Y_1^{(i)}, Y_2^{(i)})$ triples.
Since cross-entropy is a proper scoring rule \citep{kull2015novel}, we can expect that our model will become more calibrated over $\cY \times \cY$ as it improves. Indeed, calibration is linked to generalization ability \citep{Carrell2022TheCG} and hallucination behavior \citep{Kalai2023CalibratedLM} and tends to emerge in sufficiently-high-capacity models \citep{Basiok2023WhenDO,openai2023gpt4,Kadavath2022LanguageM}.
We note that if our model is explicitly factorized as
\begin{align*}
\phatyy(y_1, y_2 | x) = \phatym(y_1 | x) \cdot \phatyc(y_2 | y_1, x)
\end{align*}
(e.g. an autoregressive model), we expect it to learn to ``cheat'' by copying information from $Y_1$ to $Y_2$ whenever there are regularities between $Y_1$ and $Y_2$ that aren't explained away by what the model knows. This is exactly what we want, because calibration \emph{requires} $\phatyy$ to cheat whenever $\phatym \ne \pgt$; we can then use \cref{thm:cheat_equivalence} to
determine how close $\phatym$ is to $\pgt$.
Informally, an expert doesn't need to cheat, so if you let your model cheat and it does, it must not know the answer to your question.

\section{Bounding Approximation Error With Pairs}
\subsection{Pair Predictors Can Bound Their Own Individual-Response Errors By Self-Cheating}\label{sec:pair_predictors_bound_error}
We now derive a number of properties which are particularly useful when using $\phatym$ to imitate $\pgt$:
bounded deviation between $\phaty$ and $\pgt$, and bounded probability of producing outputs where $\pgt(y|x) = 0$.
These results rely on the fact that,
conditioned on the matrix $\PhiYY(x) \in \R^{\cY\times\cY}$ of model outputs (with $\PhiYY(x)_{y_1, y_2} = \phatyy(y_1, y_2 |x)$),
we can treat $\pgt(y|X)$ as a random variable whose mean is $\phatym(y|X)$ and variance is $\vcheat(y | X)$, defined below:

\begin{restatable}{definition}{RestateDefnCheatVar}\label{thm:cheat_var}
The \textbf{cheat-corrected epistemic variance} of $\pgt$ for response $y$ to query $x$ (under $\phatyy$) is
\begin{align*}
\vcheat(y | x) &\triangleq \phatym\!(y | x)\,\big( \phatyc\!(y | y, x) - \phatym\!(y | x)  \big).
\end{align*}
\end{restatable}
$\vcheat$ can be computed easily by scoring $y$ twice, once under the marginal distribution of $Y_1$ and once when the model ``self-cheats'' by conditioning on $y$ (as $Y_1$) when predicting $y$ again (as $Y_2$). Furthermore, it agrees with the diagonal entries of $\sigmacheat\!(x)$ as long as $\phatyy$ is symmetric (which is true if $\phatyy$ is calibrated). This means we can use it to bound the distance between $\phatym$ and $\pgt$.

\begin{restatable}{theorem}{RestateThmCheatGrouping}\label{thm:conf_cheat_grouping}
Suppose $\phatyy$ is calibrated.
Let $A$ be any event and $\Ydec \in \cY$ be any (possibly random) value such that $\Ydec, A \indep X \mid \PhiYY(X)$. Then
\vspace{-0.2em}
\begin{align*}
\E\Big[
\!\big(\phatym\!(\Ydec | X) - \pgt\!(\Ydec | X) \big)^2 \!\Bigmid\! A
\Big] = \E\Big[
\vcheat\!(\Ydec | X) \!\Bigmid\! A
\Big].
\end{align*}
Furthermore, for any $\beta \in (0,1)$,
\vspace{-0.2em}
\begin{align*}
\textstyle{}P\left[\Big| \phatym(\Ydec | X) - \pgt(\Ydec | X) \Big| \ge \!\sqrt{\!\frac{\strut\vcheat(\Ydec | X)}{\beta}}\, \!\middlemid\! A \right] \le \beta.
\end{align*}
\end{restatable}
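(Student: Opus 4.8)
The plan is to condition on ``what the pair model knows'' about the input — the matrix $\PhiYY(X)$ — reduce the left-hand side to a conditional variance of $\pgt(\Ydec\mid X)$, and then recognize that this conditional variance is exactly $\vcheat$ by unfolding the definition of calibration. First I would normalize the grouping: by \cref{prop:calib_coarse} applied with response set $\cY\times\cY$, if $\phatyy$ is calibrated for \emph{some} grouping function then it is calibrated for $\PhiYY$ itself, so take $\Phi = \PhiYY$ and write $G \triangleq \PhiYY(X)$. Then $\phatym(\cdot\mid X)$, $\phatyc(\cdot\mid\cdot,X)$, and $\vcheat(\cdot\mid X)$ are all deterministic functions of $G$ (each is read off from the matrix $\PhiYY(X)$), and the hypothesis becomes $(\Ydec, A)\indep X \mid G$.

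Next I would show the cheat-corrected variance is a genuine conditional variance. Purely algebraically, $\vcheat(y\mid x) = \phatym(y\mid x)\,\phatyc(y\mid y, x) - \phatym(y\mid x)^2 = \phatyy(y,y\mid x) - \phatym(y\mid x)^2$. Calibration of $\phatyy$ gives $\phatyy(y,y\mid x) = \E[\pgt(y\mid X')^2 \mid G(X') = g]$, and summing the calibration identity over the second coordinate (using $\sum_{y_2}\pgt(y_2\mid X') = 1$) gives $\phatym(y\mid x) = \E[\pgt(y\mid X') \mid G(X') = g]$, where $g = G(x)$. Hence
\[
\vcheat(y\mid x) \;=\; \E\big[\pgt(y\mid X')^2 \bigmid G(X')=g\big] - \big(\E[\pgt(y\mid X')\mid G(X')=g]\big)^2 \;=\; \Var\big[\pgt(y\mid X') \bigmid G(X')=g\big],
\]
which is just the diagonal of the second-order-calibration claim of \cref{thm:cheat_equivalence}, re-derived directly.

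Now fix a value $g$ of $G$ and $\tilde{y}$ of $\Ydec$ and condition on $\{G=g,\ \Ydec=\tilde{y}\}\cap A$. On this event $\phatym(\Ydec\mid X)$ equals the constant $\bar{p} \triangleq \E[\pgt(\tilde{y}\mid X')\mid G(X')=g]$, and — this is where the hypothesis $(\Ydec,A)\indep X\mid G$ enters — the conditional law of $X$ coincides with its law given $\{G=g\}$ alone. Therefore
\[
\E\big[(\phatym(\Ydec\mid X) - \pgt(\Ydec\mid X))^2 \bigmid G=g,\ \Ydec=\tilde{y},\ A\big] = \E\big[(\bar{p} - \pgt(\tilde{y}\mid X))^2 \bigmid G=g\big] = \Var\big[\pgt(\tilde{y}\mid X)\bigmid G=g\big] = \vcheat(\tilde{y}\mid x),
\]
since $\bar{p}$ is exactly the conditional mean of $\pgt(\tilde{y}\mid X)$ given $G=g$; the same computation shows $\E[\phatym(\Ydec\mid X)-\pgt(\Ydec\mid X)\mid G=g,\Ydec=\tilde{y},A] = 0$. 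Taking $\E[\,\cdot\mid A]$ of the displayed identity and using that $\vcheat(\Ydec\mid X)$ is a function of $(G,\Ydec)$ yields the first equation. For the tail bound, apply Markov's inequality to the nonnegative variable $(\phatym(\Ydec\mid X)-\pgt(\Ydec\mid X))^2$ conditionally on $\{G=g,\Ydec=\tilde{y}\}\cap A$ with threshold $\vcheat(\tilde{y}\mid x)/\beta$, obtaining conditional probability at most $\beta$ whenever $\vcheat(\tilde{y}\mid x)>0$ (and when $\vcheat(\tilde{y}\mid x)=0$ the difference is almost surely $0$, so the bound is degenerate and one uses a strict-inequality convention, or restricts to $\{\vcheat(\Ydec\mid X)>0\}$); integrating this conditional bound against the law of $(G,\Ydec)$ given $A$ gives the claimed inequality.

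The proof has no deep obstacle; the one place care is needed is the use of $(\Ydec, A)\indep X\mid\PhiYY(X)$ in the conditioning step, which is precisely what lets us replace ``the law of $X$ given everything observed'' by ``the law of $X$ given the model's group,'' thereby making the model's own prediction $\phatym(\Ydec\mid X)$ the relevant conditional mean that cancels the first moment; without this hypothesis the identity fails. The only technical wrinkle is the $\vcheat = 0$ edge case in the tail bound, handled by interpreting the threshold event with a strict inequality.
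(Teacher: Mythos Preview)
Your proposal is correct and follows essentially the same approach as the paper: condition on the pair-model's output $\PhiYY(X)$ and on $\Ydec$, use the conditional-independence hypothesis to replace the law of $X$ given everything by its law given $\PhiYY(X)$ alone, identify the squared error as a conditional variance equal to $\vcheat$, and then apply Chebyshev/Markov conditionally before integrating. The only cosmetic difference is that the paper obtains $\vcheat(y\mid x)=\Var[\pgt(y\mid X)\mid \PhiYY(X)]$ by invoking symmetry of $\phatyy$ and \cref{thm:cheat_equivalence}, whereas you re-derive it directly from the calibration identity; your explicit handling of the $\vcheat=0$ edge case is an extra bit of care the paper leaves implicit.
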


This is a input-dependent (frequentist) confidence interval for $\Ydec$; if our model reports a small value of $\vcheat(\Ydec | X)$, we can guess that $\phatym(\Ydec | X)$ is close to $\pgt(\Ydec | X)$ and be right most of the time.
(For instance, if $A$ is the event where our example coin-flip predictor predicts 50\% \textsc{Heads} with epistemic variance $\le \epsilon$, at least 95\% of the coins with that property must have a bias within $\sqrt{\varepsilon/.05}$ of 50\%.)

When $\cY$ is large, we may be less interested in directly estimating $\pgt$ for a particular $y$, and more interested in making sure we don't
generate any response $y$ for which $\pgt(y|x)$ was actually zero; we call such a response a \emph{statistical hallucination}.\footnote{
The term ``hallucination'' is often used to mean ``output with false factual claims''. These count as statistical hallucinations as long as $\pgt$ never produces them, but statistical hallucinations also include behavior such as taking unsafe actions that $\pgt$ would avoid, making a math error where $\pgt$ would be correct, or failing to satisfy any other property of all samples generated by $\pgt$.
} We can do so using the following metric:

\begin{restatable}{definition}{RestateDefnEpistemicConfidence}\label{defn:epistemic_confidence}
The \textbf{cheat-corrected epistemic confidence} of $\phatyy$ about response $y$ to query $x$ is
\vspace{-0.5em}
\[
\Ccheat(y | x) \triangleq \frac{\phatym(y | x)}{\phatyc(y|y,x)}~~~\text{(or 0 if $\phatym(y|x) = 0$)}.
\]
\vspace{-0.5em}
\end{restatable}
$\Ccheat$ measures the \emph{relative} likelihood with and without self-cheating, with the denominator correcting for the ``aleatoric'' aspects of $y$ that remain unpredictable even when the model cheats. 
Similar to $\vcheat$, it can be computed easily by scoring $y$ twice. 
$\Ccheat$ is also properly normalized:

\begin{restatable}{proposition}{RestatePropConfBounded}\label{thm:conf_bounded}
If $\phatyy$ is calibrated, then for any $x \in \cX, y \in \cY$ we have $0 \le \Ccheat(y | x) \le 1$, with $\Ccheat(y | x) = 1$ if and only if $\phatym(y|x) = \pgt(y|x)$.
\end{restatable}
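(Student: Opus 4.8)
The plan is to re-express $\Ccheat(y|x)$ in terms of the first two moments of $\pgt(y|X)$ over the model's equivalence class, and then read off both the bounds and the equality case from the nonnegativity of a variance (equivalently, Cauchy--Schwarz). Since $\phatyy$ is first-order calibrated there is a grouping function $\Phi$ with $\phatyy(y_1,y_2|x)=\E\!\big[\pgt(y_1|X)\,\pgt(y_2|X)\bigmid X\in[x]_\Phi\big]$; let $P_y$ abbreviate the random variable $\pgt(y|X)$ with $X\sim p(X\mid X\in[x]_\Phi)$. Marginalizing and setting $y_1=y_2=y$ gives $\phatym(y|x)=\E[P_y]$ and $\phatyy(y,y|x)=\E[P_y^2]$, so when $\phatym(y|x)>0$ we have $\phatyc(y|y,x)=\E[P_y^2]/\E[P_y]$ and hence
\[
\Ccheat(y|x)=\frac{\phatym(y|x)}{\phatyc(y|y,x)}=\frac{\E[P_y]^2}{\E[P_y^2]}.
\]
Equivalently, one can shortcut this via \cref{thm:cheat_equivalence}: a calibrated $\phatyy$ is symmetric, so $\vcheat(y|x)$ equals the diagonal entry $\sigmacheat(x)_{y,y}=\Var(P_y)\ge 0$, giving $\phatyc(y|y,x)=\phatym(y|x)+\vcheat(y|x)/\phatym(y|x)$.

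Given this, $\Ccheat(y|x)\ge 0$ is immediate since $\phatym$ and $\phatyc$ are probabilities (and $P_y\ge0$), and $\Ccheat(y|x)\le 1$ follows from $\E[P_y]^2\le\E[P_y^2]$ (Jensen/Cauchy--Schwarz), equivalently from $\vcheat(y|x)\ge0$, which forces $\phatyc(y|y,x)\ge\phatym(y|x)$. For the equality case, $\Ccheat(y|x)=1\iff\E[P_y]^2=\E[P_y^2]\iff\Var(P_y)=0\iff P_y$ is $p$-almost-surely constant on $[x]_\Phi$, i.e.\ $\pgt(y|X)=\phatym(y|x)$ for $p$-a.e.\ $X\in[x]_\Phi$, and in particular $\pgt(y|x)=\phatym(y|x)$.

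I expect the only real friction to be bookkeeping around the degenerate case $\phatym(y|x)=0$ (where $\phatyc(y|y,x)$ is undefined and $\Ccheat$ is set to $0$ by the convention in \cref{defn:epistemic_confidence}) and, relatedly, clarifying what ``$\pgt(y|x)$'' should mean for a grouped calibrated predictor: when $\E[P_y]=0$ and $P_y\ge0$, $\pgt(y|\cdot)$ is already $p$-a.s.\ zero on the group, so the equality characterization is cleanest when read as applying to inputs with $\phatym(y|x)>0$ (equivalently, the identity $\phatym(y|x)=\pgt(y|x)$ is to be understood as $\pgt(y|\cdot)$ being $p$-a.s.\ constant on $[x]_\Phi$). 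The inequalities themselves are a one-line variance argument, so nothing else should be delicate.
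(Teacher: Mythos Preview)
Your proposal is correct and follows essentially the same route as the paper: both arguments reduce the upper bound to the nonnegativity of $\vcheat(y|x)=\Var(P_y)$, with the paper writing $\Ccheat=(1+\vcheat/\phatym^2)^{-1}$ and you writing $\Ccheat=\E[P_y]^2/\E[P_y^2]$, which are algebraically equivalent. If anything, your treatment of the equality case (via $\Var(P_y)=0\Rightarrow P_y$ a.s.\ constant on $[x]_\Phi$) and of the degenerate case $\phatym(y|x)=0$ is more explicit than the paper's, which only remarks that equality forces numerator and denominator to agree.
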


And we can use it to bound the statistical-hallucination rate of any well-behaved decoding algorithm:

\begin{restatable}{theorem}{RestateThmConfHallucination}\label{thm:conf_hallucination}
Suppose $\phatyy$ is calibrated.
Let $A$ be the event that a decoding algorithm responds to a query $X$, and $\Ydec \in \cY$ be its response.
If $A, \Ydec \indep X \mid \PhiYY(X)$, then the statistical hallucination rate of the generated responses is bounded above as
\begin{align*}
    P\left[ \pgt(\Ydec|X) = 0 \;\middle|\; A \right] \le 1 - \E\left[\Ccheat(\Ydec | X) \;\middle|\; A\right].
\end{align*}
\vspace{0.1em}
\end{restatable}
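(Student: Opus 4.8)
The plan is to condition on the model's pair-prediction matrix $\PhiYY(X)$, reinterpret $\Ccheat$ as a ratio of conditional moments of $\pgt(y\mid X)$ within each group, and then close with a one-line second-moment (Cauchy--Schwarz / Paley--Zygmund-type) inequality for a nonnegative random variable.

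First I would fix the grouping. Calibration of $\phatyy$ over pairs gives $\phatyy(y_1,y_2\mid x)=\E[\pgt(y_1\mid X)\pgt(y_2\mid X)\mid X\in[x]_\Phi]$ for \emph{some} $\Phi$; applying \cref{prop:calib_coarse} to the pair-predictor (whose response space is $\cY\times\cY$) lets me take $\Phi=\PhiYY$. Summing out $y_2$, and separately setting $y_1=y_2=y$, identifies the conditional moments I need: $\phatym(y\mid x)=\E[\pgt(y\mid X)\mid \PhiYY(X)=\PhiYY(x)]$ and $\phatyy(y,y\mid x)=\E[\pgt(y\mid X)^2\mid \PhiYY(X)=\PhiYY(x)]$. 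Since $\phatyc(y\mid y,x)=\phatyy(y,y\mid x)/\phatym(y\mid x)$, \cref{defn:epistemic_confidence} rewrites cleanly as
\[
\Ccheat(y\mid x)=\frac{\phatym(y\mid x)^2}{\phatyy(y,y\mid x)}=\frac{\E[\pgt(y\mid X)\mid \PhiYY(X)=\PhiYY(x)]^2}{\E[\pgt(y\mid X)^2\mid \PhiYY(X)=\PhiYY(x)]},
\]
with the convention that it is $0$ when $\phatym(y\mid x)=0$. In particular $\Ccheat(y\mid x)$ depends on $x$ only through $\PhiYY(x)$.

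Next I would peel off $A$ and $\Ydec$ using the hypothesis $A,\Ydec\indep X\mid\PhiYY(X)$. By the tower property, $P[\pgt(\Ydec\mid X)=0\mid A]=\E\big[P[\pgt(\Ydec\mid X)=0\mid A,\PhiYY(X),\Ydec]\mid A\big]$, and the conditional independence says that further conditioning on $A$ and $\Ydec$ does not change the law of $X$ given its group; hence the inner probability equals $P[\pgt(y\mid X)=0\mid\PhiYY(X)=M]$ evaluated at $(\Ydec,\PhiYY(X))=(y,M)$. It therefore suffices to show, for the nonnegative random variable $Z:=\pgt(y\mid X)\in[0,1]$ conditioned on $\PhiYY(X)=M$, that $P[Z=0]\le 1-\Ccheat(y\mid x)$ for any representative $x$ with $\PhiYY(x)=M$. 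If $\E[Z^2]=0$ then $Z=0$ a.s., so $\phatym(y\mid x)=0$, $\Ccheat(y\mid x)=0$, and the bound is trivial; otherwise, writing $Z=Z\,\indicator\{Z>0\}$ and applying Cauchy--Schwarz gives $\E[Z]^2\le\E[Z^2]\,\E[\indicator\{Z>0\}]=\E[Z^2]\,P[Z>0]$, so $P[Z>0]\ge\E[Z]^2/\E[Z^2]=\Ccheat(y\mid x)$ and $P[Z=0]\le 1-\Ccheat(y\mid x)$. Integrating back over $(\Ydec,\PhiYY(X))$ given $A$, and using that $\Ccheat(\Ydec\mid X)=\Ccheat(\Ydec\mid x')$ whenever $\PhiYY(x')=\PhiYY(X)$, yields $P[\pgt(\Ydec\mid X)=0\mid A]\le\E[1-\Ccheat(\Ydec\mid X)\mid A]=1-\E[\Ccheat(\Ydec\mid X)\mid A]$.

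The substantive inequality $P[Z>0]\ge\E[Z]^2/\E[Z^2]$ is essentially one line of Cauchy--Schwarz once the problem is localized to a single group; I would also sanity-check that it is tight exactly when $Z$ is a scaled Bernoulli, matching the equality case $\Ccheat=1\iff\phatym=\pgt$ of \cref{thm:conf_bounded}. I expect the main obstacle to be the measure-theoretic bookkeeping of the nested conditioning --- in particular, justifying that $A,\Ydec\indep X\mid\PhiYY(X)$ permits replacing ``condition on $A,\Ydec,\PhiYY(X)$'' by ``condition on $\PhiYY(X)$'' inside the probability of the $X$-measurable event $\{\pgt(\Ydec\mid X)=0\}$ --- along with handling the degenerate $\phatym(y\mid x)=0$ case; the reduction of the grouping to $\PhiYY$ via \cref{prop:calib_coarse} and the moment identification should be routine (indeed the first identity in \cref{thm:conf_cheat_grouping} is the analogous conditional-moment computation for $\vcheat$).
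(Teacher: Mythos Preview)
Your proposal is correct and follows essentially the same structure as the paper's proof: localize to a single group $\PhiYY(X)=\phi$ and response $\Ydec=y$, identify $\Ccheat(y\mid x)=\E[Z]^2/\E[Z^2]$ for $Z=\pgt(y\mid X)$ on that group, prove the pointwise bound $P[Z=0]\le 1-\E[Z]^2/\E[Z^2]$, and integrate. The only cosmetic difference is the inequality invoked for the pointwise step---the paper uses Cantelli's inequality with $\lambda=\E[Z]$ (which reduces to $P[Z\le 0]\le \Var(Z)/(\Var(Z)+\E[Z]^2)=1-\E[Z]^2/\E[Z^2]$), while you use Cauchy--Schwarz/Paley--Zygmund directly; both yield the identical bound, though the paper's route via Cantelli has the side benefit of immediately generalizing to the one-sided threshold bound in their \cref{app:thm:cantelli_bound}.
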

\vspace{-0.1em}
We can use any decoding strategy that only depends on $X$ through $\phatyy$, including temperature sampling, top-$k$/top-$p$ sampling, or beam search (see \citet{zarriess2021decoding} for an overview).
Moreover, we are free to use $\Ccheat(\Ydec | X)$ in the algorithm to ensure that $1 - \Ccheat$ is low. For example, these decoding strategies will all have a statistical hallucination rate at most $\beta$ when $\phatyy$ is calibrated:
\begin{itemize}[topsep=0em]
    \item \textbf{Cheat-corrected selective generation / filtering}: Generate $\Ydec$ using an arbitrary off-the-shelf sampler, but reject it (and don't respond) if $1 - \Ccheat(\Ydec | X) > \beta$.
    \item \textbf{Cheat-corrected rejection sampling}: Repeatedly sample $\Ydec \sim \phatym$ until $1 - \Ccheat(\Ydec | X) < \beta$.
    \item \textbf{Cheat-corrected top-1 search}: Deterministically output (or approximate) $\argmax_{y \in S} \phatym(y | X)$, where $S = \{ y \mathrel{:} 1 - \Ccheat(\Ydec | X) < \beta \}$, or abstain if $S = \varnothing$.
\end{itemize}
Selectively responding only when we find a $\Ydec$ with $1 - \Ccheat(\Ydec | X) < \beta$ ensures that, conditioned on responding (e.g. on the event $A$), our responses will be non-hallucinated with probability at least $1-\beta$.

\subsection{Paired Data Enables Distribution-Free Frequentist Confidence Intervals for $p(Y|X)$}\label{sec:distnfree}
Finally, we show that we can adjust imperfectly-calibrated estimators $\phaty : \cX \to \Delta^{\cY}$ and $\vhat : \cX \to \R^{\cY}$ to obtain robust statistical guarantees about the unobserved true conditional probabilities $\pgt(Y|X)$ without assumptions about $\pgt$, as long as we have access to a held-out \emph{calibration set} $\{(x^{(i)}, y_1^{(i)}, y_2^{(i)})\}_{i=1}^N$ containing paired response data. 
This demonstrates that the impossiblity result of \citet{barber2020distribution} does not apply when we have access to two $Y$s for each $X$.
For simplicity we assume $\cY = \{0,1\}$.

\begin{algorithm}[t]
   \caption{Conservative adjustment of $\vhat$}
   \label{alg:distfree_bound}
\begin{algorithmic}
   \STATE {\bfseries Input:} Calibration set $\{(x^{(i)}, y_1^{(i)}, y_2^{(i)})\}_{i=1}^N$, variance cutoff $\varepsilon > 0$, tolerance $\alpha$, $\phaty$, $\vhat$
   \FOR{$i=1$ {\bfseries to} $N$}
   \STATE $\hat{p}^{(i)} := \phaty(1 | x^{(i)})$,~~ $\hat{v}^{(i)}_\varepsilon := \max\{\vhat(1 | x^{(i)}), \varepsilon\}$
   \STATE $s_\varepsilon^{(i)} := ( y_1^{(i)} - \hat{p}^{(i)} )( y_2^{(i)} - \hat{p}^{(i)} ) / \hat{v}^{(i)}_\varepsilon$
   \ENDFOR
\STATE $(\gamma_\varepsilon^-, \gamma_\varepsilon^+) := \textsc{MeanConfItvl}\big(\{s_\varepsilon^{(i)}\}_{i=1}^N, -\frac{1}{\varepsilon}, \frac{1}{\varepsilon}, \alpha\big)$
\STATE \textbf{return} $\gamma_\varepsilon^+$
\end{algorithmic}
\end{algorithm}

\begin{restatable}{theorem}{RestateThmDistnFreeBound}\label{thm:distnfreebound}
Let $\phaty$, $\vhat$, and $\pgt$ be arbitrary.
With probability at least $1 - \alpha$ (over draws of the calibration set), \cref{alg:distfree_bound} returns a value $\gamma_\varepsilon^+$ such that, for a randomly sampled input $X \sim p(X)$, and any $\beta \in (0, 1), y \in \{0,1\}$,
\vspace{-0.2em}
\begin{align*}
\textstyle{}P\!\!\left[\Big| \phaty(y | X) - \pgt(y | X) \Big| \ge \!\sqrt{\!\frac{\gamma_\varepsilon^+\max\{\vhat(y | X), \varepsilon\}}{\beta}} \right] \!\le\! \beta.
\end{align*}
\end{restatable}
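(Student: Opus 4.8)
The plan is to reduce the entire statement to a single conditional-expectation identity together with Markov's inequality, using \textsc{MeanConfItvl} as a black box that, on i.i.d.\ inputs bounded in $[-1/\varepsilon,1/\varepsilon]$, returns a $(1-\alpha)$-confidence interval for their common mean. First I would compute the mean of the per-example statistic $s_\varepsilon^{(i)}$. Because $Y_1^{(i)}$ and $Y_2^{(i)}$ are drawn i.i.d.\ from $\pgt(\cdot\mid x^{(i)})$, they are conditionally independent given $X^{(i)}$, so the cross term factorizes:
\begin{align*}
\E\big[(Y_1 - \phaty(1\mid X))(Y_2 - \phaty(1\mid X)) \bigmid X\big]
&= \big(\E[Y_1\mid X] - \phaty(1\mid X)\big)^2 \\
&= \big(\pgt(1\mid X) - \phaty(1\mid X)\big)^2 .
\end{align*}
Writing $\hat v_\varepsilon(x) := \max\{\vhat(1\mid x),\varepsilon\}$ and $R(x) := \big(\pgt(1\mid x)-\phaty(1\mid x)\big)^2 / \hat v_\varepsilon(x)$, this gives $\E[s_\varepsilon^{(i)}] = \E_{X\sim p(X)}[R(X)]$. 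Assuming $\phaty$ and $\vhat$ are chosen before the calibration set is drawn, the $s_\varepsilon^{(i)}$ are i.i.d.; and since $y_j^{(i)}\in\{0,1\}$, $\phaty(1\mid x)\in[0,1]$, and $\hat v_\varepsilon^{(i)}\ge\varepsilon$, we have $s_\varepsilon^{(i)}\in[-1/\varepsilon,1/\varepsilon]$ — exactly the input \cref{alg:distfree_bound} hands to \textsc{MeanConfItvl}.

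Next I would invoke the guarantee of \textsc{MeanConfItvl}: with probability at least $1-\alpha$ over the draw of the calibration set, $\gamma_\varepsilon^- \le \E_X[R(X)] \le \gamma_\varepsilon^+$, so in particular $\gamma_\varepsilon^+ \ge \E_X[R(X)]$. Conditioning on this event, and using $R(X)\ge 0$, Markov's inequality gives, for any $\beta\in(0,1)$,
\[
P\!\left[R(X)\ge \tfrac{\gamma_\varepsilon^+}{\beta}\right] \le \frac{\beta\,\E_X[R(X)]}{\gamma_\varepsilon^+} \le \beta .
\]
Unpacking $R(X)\ge\gamma_\varepsilon^+/\beta$ and taking square roots of both (nonnegative) sides yields $|\phaty(1\mid X)-\pgt(1\mid X)|\ge\sqrt{\gamma_\varepsilon^+\max\{\vhat(1\mid X),\varepsilon\}/\beta}$ with probability at most $\beta$, which is the claimed bound for $y=1$. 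For $y=0$ the argument is identical: the numerator of $s_\varepsilon^{(i)}$ is invariant under $y\mapsto 1-y$ since $(1-y_j^{(i)})-\phaty(0\mid x^{(i)}) = -\big(y_j^{(i)}-\phaty(1\mid x^{(i)})\big)$, and likewise $|\phaty(0\mid X)-\pgt(0\mid X)| = |\phaty(1\mid X)-\pgt(1\mid X)|$, so the $y=0$ case follows (using $\vhat(0\mid x)=\vhat(1\mid x)$, which holds automatically when $\vhat$ is the diagonal of a covariance estimator for the binary probability vector, or else by rerunning \cref{alg:distfree_bound} with $y=0$, which changes only the normalization).

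The only genuinely nontrivial ingredient is the conditional-independence factorization in the first step — this is precisely where pairs buy something a single sample cannot, since with one $Y$ the statistic $(Y-\phaty(1\mid X))^2$ has conditional mean $(\pgt(1\mid X)-\phaty(1\mid X))^2 + \Var(Y\mid X)$ and the aleatoric term is irremovable. Two secondary points to handle carefully: the cross term $(y_1-\hat p)(y_2-\hat p)$ is sign-indefinite, so one needs a genuine two-sided mean-estimation routine rather than a one-sided concentration bound for a nonnegative quantity; and the $\varepsilon$-flooring of $\vhat$ is exactly what makes each $s_\varepsilon^{(i)}$ bounded, which is what lets \textsc{MeanConfItvl} (e.g.\ via Hoeffding's inequality) produce a valid interval with no assumptions whatsoever on $\pgt$ — thereby sidestepping the impossibility result of \citet{barber2020distribution}.
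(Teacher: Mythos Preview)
Your proposal is correct and follows essentially the same route as the paper: both compute $\E[s_\varepsilon^{(i)}]$ by conditioning on $X$ and using the conditional independence of $Y_1,Y_2$ to collapse the cross term to $(\pgt(1\mid X)-\phaty(1\mid X))^2$, note the $[-1/\varepsilon,1/\varepsilon]$ boundedness, invoke \textsc{MeanConfItvl} to get $\gamma_\varepsilon^+\ge\E_X[R(X)]$ with probability $1-\alpha$, and finish with Markov's inequality on the nonnegative ratio $R(X)$. Your explicit handling of the $y=0$ case and the remark on why $\varepsilon$-flooring is needed for boundedness are slightly more detailed than the paper's presentation, but the argument is the same.
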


In \cref{alg:distfree_bound}, $\textsc{MeanConfItvl}$ can be any subroutine that builds a $(1-\alpha)$ confidence interval for the mean of a bounded random variable, e.g. Hoeffding's inequality \citep{Hoeffding1963ProbabilityIF} or betting-based algorithms \citep{waudby2020estimating}.
Smaller $\varepsilon$ allows more precise bounds but requires a well-calibrated $\vhat$ and a large calibration set, and if $\phaty$ and $\vhat$ are in fact second-order calibrated then $\gamma_\varepsilon^+$ will approach 1 as $N \to \infty$ and $\varepsilon \to 0$.
The failure probability $\beta$ should be interpreted as an aggregate over $X \sim p(X)$ rather than pointwise; for a fixed process $\pgt$ and fixed $x$ either $\pgt(y|x)$ lies in the interval or it does not.
We show an example of the resulting confidence intervals in \cref{fig:bound-example}, and discuss them further in \cref{appendix:distnfree_theory,appendix:distnfree_experiment}.

\begin{figure}[t]
    \centering
    \includegraphics[width=\linewidth]{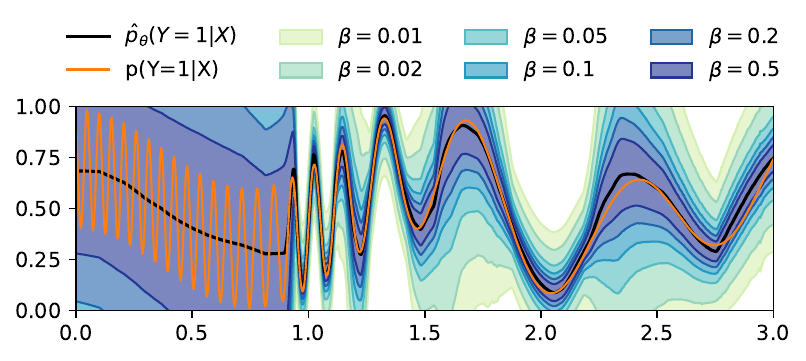} 
    \vspace{-2em}
    \caption{Applying \cref{alg:distfree_bound} to our model from  \cref{fig:intro_1d_problem} produces frequentist confidence intervals for $\pgt(y|X)$ which are provably correct with high probability over random $X$. Here $N=10^6, \varepsilon=0.02^2,$ and $\alpha=0.05$; see \cref{appendix:distnfree_experiment}.}
    \label{fig:bound-example}
\end{figure}

\section{Related Work}

\textbf{Decomposing uncertainty with paired $Y$s.}
Focusing on regression tasks and asymptotic optimality, \citet{lahlou2021deup} estimate aleatoric uncertainty by predicting
$(y_1 - y_2)^2$ for
two real-valued samples from $p(Y|X)$, then use it to quantify epistemic uncertainty. For classification, \citet{narimatsu2023collision} use annotator agreement to quantify aleatoric uncertainty at the population level. Repeated annotations have also been used to improve and evaluate classifiers \citep{peterson2019human,schmarje2022one}.

\textbf{Uncertainty via LLM postprocessing.}
For language models, proposed techniques include verifying, critiquing, or classifying samples \citep{Cobbe2021TrainingVT,Ni2023LEVERLT,Li2022MakingLM,Kadavath2022LanguageM}, or clustering semantically-equivalent samples \citep{kuhn2022semantic,Li2022CompetitionlevelCG,Wang2022SelfConsistencyIC,Chen2023UniversalSF}. This generally requires a task-specific
correctness or similarity metric, and may be less applicable for
generation tasks without well-defined correct answers.
Additionally, most multiple-sample approaches focus on comparing many $Y$s at inference time, whereas our strategy only uses paired $Y$s at training time and then scores each $Y$ individually.

\textbf{Other uses of paired $Y$s.}
In other contexts, paired inputs have been used to learn representations \citep{bromley1993signature, chen2020simple}, and pairwise losses have been used to train energy-based models \citep{gutmann2010noise}. \citet{lin2018pacgan} train a GAN discriminator to distinguish pairs of real v.s. generated images and show that this reduces mode collapse.

\textbf{Uncertainty via dependence on additional information.}
\citet{durasov2022zigzag} train a model to predict the same output both with and without feeding in the correct output as an extra input, and use the change in prediction to measure uncertainty. A key difference between this and our cheat-correction procedure is that  \citeauthor{durasov2022zigzag} treat the output as deterministic (no aleatoric uncertainty) and rely on inductive biases of the predictor rather than calibration. \citet{collier2022transfer} provide additional privileged information about the label process in order to explain away label noise and improve robustness.

\textbf{Uncertainty via extensions of calibration.}
To better measure uncertainty for calibrated models,
\citet{perez2022beyond} propose bounding the population grouping error by partitioning the model's feature space. \citet{HbertJohnson2018MulticalibrationCF} study \emph{multicalibration}, which requires calibration to hold across all computable subsets of a population. 

\textbf{Distribution-free uncertainty quantification.}
A number of approaches have been explored for quantifying uncertainty without making assumptions about the functional form of $p(Y|X)$, generally by using a held-out calibration set. Many build on conformal prediction, and use exchangeability to construct high-probability prediction sets; see \citet{angelopoulos2021gentle} for an introduction. Related approaches can be used to construct calibrated classifiers \citep{kumar2019verified,gupta2020distribution,Park2020PACCP} and randomized predictive distributions \citep{vovk2017nonparametric}.
We discuss these connections in more detail in \cref{app:sec:discussion}.

\begin{table*}[t]
\caption{
\textbf{Cheat-corrected uncertainty estimates are better second-order-calibrated than other techniques, while maintaining similar accuracy.} Our primary metrics: \emph{ECE-2} is second-order calibration error of the variance estimate (best ECE-2 in bold), $\E[\hat{v}^\theta]$ is predicted epistemic variance, and $\E[(\hat{p}^\theta{-}p)^2]$ is actual grouping error (ideally close to $\E[\hat{v}^\theta]$). For comparison, \emph{ECE-1} is first-order calibration error of predicted probabilities, \emph{Acc} is top-1 accuracy on the original labels from CIFAR-10, and \emph{KL} measures the divergence from $\pgt$ (ground-truth annotator labels) to $\phaty$. All metrics are averaged over eight random training seeds, and metrics other than Acc and KL are summed across classes. 
}
\label{tab:cifar10h}
\vskip 0.15in
\begin{center}
\begin{small}
\begin{sc}
\begin{tabular}{l@{\hskip 2em}cc@{\hskip 1.5em}c@{\hskip 1.5em}ccc@{\hskip 2em}cc@{\hskip 1.5em}c@{\hskip 1.5em}cc}
\toprule
& \multicolumn{6}{c}{CIFAR-10H}& \multicolumn{5}{c}{w/ Extra Classes, Scrambled}
\\
\cmidrule(l{-2pt}r{2em}){2-7}\cmidrule(l{-2pt}r){8-12}
Method & \scriptsize \textbf{ECE-2} & \scriptsize $\E[\hat{v}^\theta]$ & \clap{\scriptsize $\E[(\hat{p}^\theta{-}p)^2]$} & \scriptsize ECE-1 & \scriptsize Acc & \scriptsize KL & \scriptsize \textbf{ECE-2}& \scriptsize $\E[\hat{v}^\theta]$ & \clap{\scriptsize $\E[(\hat{p}^\theta{-}p)^2]$} & \scriptsize ECE-1 & \scriptsize KL \\
\midrule
Naive NN & 0.076 & 0.142 & 0.065 & 0.02 & 93.9 & 0.18 & 0.521 & 0.682 & 0.161 & 0.07 & 0.71 \\
NN Ensemble & 0.039 & 0.014 & 0.053 & 0.03 & 94.9 & 0.15 & 0.134 & 0.014 & 0.148 & 0.03 & 0.65 \\
Evidential DL & 0.377 & 0.053 & 0.430 & 1.04 & 88.5 & 1.09 & 0.387 & 0.031 & 0.418 & 0.79 & 2.36 \\
SNGP Cov. & 0.048 & 0.005 & 0.052 & 0.02 & 94.9 & 0.15 & 0.112 & 0.033 & 0.145 & 0.06 & 0.63 \\
Epinet & 0.056 & 0.015 & 0.071 & 0.02 & 93.4 & 0.19 & 0.089 & 0.087 & 0.163 & 0.07 & 0.71 \\
\midrule
Cheat NN & 0.018 & 0.052 & 0.068 & 0.03 & 93.6 & 0.18 & 0.022 & 0.134 & 0.154 & 0.07 & 0.67 \\
Cheat SNGP & \textbf{0.009} & 0.054 & 0.052 & 0.02 & 94.9 & 0.15 & \textbf{0.011} & 0.153 & 0.150 & 0.04 & 0.65 \\
\bottomrule
\end{tabular}
\end{sc}
\end{small}
\end{center}
\vskip -0.1in
\end{table*}

\pagebreak[3] %
\textbf{Predicting distributions-over-distributions.}
Some previous techniques \citep[e.g.][]{Sensoy2018EvidentialDL,malinin2018predictive} have explored measuring uncertainty by predicting a distribution over possible output distributions (sometimes called second-order distributions). However,
\citet{bengs2022pitfalls,bengs2023second} proved that many such approaches do not incentivize faithful reports of uncertainty. \citet{sale2023second} formalize uncertainty measures for second-order distribution predictors in terms of distances to sets of reference distributions. Note that our work uses ``second-order'' in the sense of the second-moment statistics in \cref{thm:cheat_equivalence}, not second-order distributions. Our approach does not predict a full distribution over distributions.

\section{Experiments}

\subsection{Classifying Ambiguous Images}\label{sec:cifar10h}

We demonstrate our technique on CIFAR-10H \citep{peterson2019human}, a relabeling of the CIFAR-10 test set \citep{Krizhevsky2009LearningML} by $>50$ independent annotators per image.
We cast it as a distribution-matching problem rather than an accuracy-maximization problem: the goal is to estimate the fraction of human annotators assigning each label $y$ to each image $x$. In this setting, we expect epistemic uncertainty quantification techniques to distinguish between between images that \emph{human annotators} find ambiguous and images that the \emph{model} has not learned to identify.
Our primary evaluation metric is second-order expected calibration error (ECE-2), the difference between each technique's variance estimate $\vhat$ and the true squared error $\big(\hat{p}_\theta(Y | X) - p(Y|X)\big)^2$, on an in-distribution test set.
Since some uncertainty-quantification methods may affect predictive accuracy, we additionally report the ordinary expected calibration error of $\hat{p}_\theta(Y | X)$ relative to the true annotator labels (ECE-1), the KL divergence between $\hat{p}_\theta(Y | X)$ and the empirical annotator distribution, and the top-1 accuracy with respect to the clean CIFAR-10 labels. We compute ECE-1 and ECE-2 by averaging over 100 quantile bins and summing across classes, as described in \cref{appendix:sec:cifar10h}.

We train pair-prediction models $\phatyy$ to jointly predict two random annotator labels for each minibatch example, with a symmetric $10 \times 10$ softmax output head and either an ordinary wide ResNet backbone (\textbf{Cheat NN}) from \citet{zagoruyko2016wide} or a SNGP backbone (\textbf{Cheat SNGP}) as proposed by \citet{Liu2020SimpleAP}. We then use the marginal $\phatym$ and cheat-corrected variance $\vcheat$ for evaluation. We observed that our models occasionally overfit on the small dataset and produced negative $\vcheat$ estimates due to miscalibration; we regularize them by adding a small penalty for negative eigenvalues, since $\phatyy(\cdot,\cdot|x)$ must be positive semidefinite if $\phatyy$ is calibrated (proven in \cref{appendix:properties_calibrated_of_pairs}).

We compare our approach to a variety of existing uncertainty quantification techniques:
\textbf{SNGP Cov.} \citep{Liu2020SimpleAP}, which uses spectral normalization and a Laplace random-features approximation to a Gaussian process covariance;
\textbf{Evidential DL} \citep{Sensoy2018EvidentialDL}, which uses a regularized Dirichlet output to estimate epistemic uncertainty;
\textbf{Epinet} \citep{Osband2021EpistemicNN}, which models uncertainty by feeding a random ``index'' input through a fixed ``prior'' network and a learned corrector;
\textbf{NN Ensemble} \citep{Lakshminarayanan2016SimpleAS}, which uses the mean and variance across 8 independent ResNet models;
and \textbf{Naive NN}, which uses $\hat{p}_\theta(Y | X)(1-\hat{p}_\theta(Y | X))$ as an estimate of variance (i.e. assuming $Y$ is a deterministic function of $X$). We train these baselines by trating the two randomly-selected annotator labels as separate minibatch examples.

Since CIFAR-10H includes only the original CIFAR-10 test set, we pretrain all models on CIFAR-10N \citep{Wei2021LearningWN}, a relabeling of CIFAR-10's training set by three annotators per image.
We then divide CIFAR-10H's images into two disjoint 5,000-image subsets (with $>50$ annotator labels per image), using the first to train/validate and the second for evaluation metrics.
We tune
hyperparameters
to maximize likelihood on our validation set, but intentionally avoid tuning based on second-order calibration
since this may not be computable under a standard training setup.

As shown in \cref{tab:cifar10h},
our model's cheat-corrected variance estimates are substantially better second-order calibrated than other methods, without sacrificing first-order calibration or predictive accuracy. In particular, most other methods tend to underestimate in-distribution epistemic uncertainty (with $\E[(\phaty - \pgt)^2] > \E[\hat{v}^\theta]$), although Naive NN overestimates it.
We additionally train and evaluate models on a harder task variant, where we both add extra classes to
make $\pgt$ more stochastic
and also scramble the central image pixels to make underfitting more likely, and find that our method remains second-order calibrated,  whereas other techniques become increasingly over- or under-confident.
Of our two models, the SNGP variant performs the best suggesting that well-known techniques for improving first-order calibration also improve second-order calibration when training on pairs.
We also point out that the NN Ensemble baseline gives similar variance estimates and similar ECE-1 values across the two task variants, but has worse ECE-2 and KL divergence scores on the harder variant. This means that ECE-1 and ensemble variance are not sufficient to identify tasks for which the model is a bad fit for $\pgt$, whereas the cheat-corrected variance estimate of our method is more representative of model quality.
Further details for these experiments are provided in \cref{appendix:sec:cifar10h}.

\subsection{English Descriptions of Digits of $\pi$}\label{sec:digitsofpi}

We next demonstrate that our technique
can be directly applied to tasks with large output spaces such as sequence modeling. We construct a synthetic language modeling task that allow us to control the difficulty and amount of stochasticity in the target responses, where the goal is to correctly respond to requests like $x=\,$``Tell me about digit 24 of $\pi$''. Early digits of $\pi$ are sampled more often than later ones, and the target responses are randomly-chosen true statements, such as
 ``Sure, that is the number 6'',
``That's an even number'',
``It is spelled S I X'', or
``Sure, it's spelled with three letters'', which are sampled with different probabilities and exhibit variation in both style and semantic content.

We train a 19M-parameter transformer model \citep{vaswani2017attention} from scratch for 50k iterations, tokenizing and concatenating the query $X$ and two sampled responses $Y_1$ and $Y_2$ for each example. We next sample 120 statements from $\phatym$ for each digit offset from 1 to 3,000, and label each sample
as a statistical hallucination if $\pgt(y|x) = 0$ (e.g. if it is not a true statement about the requested digit).
We then 
evaluate how well the bound in \cref{thm:conf_hallucination} holds in practice by dividing samples into bins based on their predicted confidence $\Ccheat$ and computing the fraction of samples in each bin that were hallucinated. \cref{fig:digitsofpi-precision-recall} (left) shows that the fraction of hallucinated samples is generally slightly lower than $1 - \Ccheat$, as predicted by the bound in \cref{thm:conf_hallucination}. However, somewhat surprisingly, we observe that $\Ccheat(y | x) > 1$ for some samples, which would not occur for a well-calibrated model. Samples with $\Ccheat$ slightly above 1 are usually correct, but in rare cases we also observe very large values of $\Ccheat$ (e.g. about $\approx 10^{4}$ or larger), which tend to happen when the sampled $Y_1$ was malformed and out-of-distribution. 
We believe this stems from the inherent difficulty of making calibrated predictions over the space of all (pairs of) sequences. In practice, we suggest to use $|1 - \Ccheat|$ for thresholding as an alternative to $1 - \Ccheat$, which is equivalent if the model is calibrated. We explore other thresholding options and show specific examples where $\Ccheat(y | x) > 1$ in \cref{appendix:sec:digits-of-pi}.

We next compare different strategies for distinguishing correct and hallucinated samples: ranking by the log-probability of each sample under the model (Total LP), ranking by length-normalized log-probability (Avg. Token LP) \citep{malinin2020uncertainty},
clustering semantically-equivalent answers in groups of $k$ samples and thresholding by cluster size (Clustered) \citep{kuhn2022semantic,Li2022CompetitionlevelCG}, and using our \emph{cheat-based selective filtering} strategy from \cref{sec:pair_predictors_bound_error} (modified to threshold by $|1 - \Ccheat(y|x)| \le \beta$).
We implement correctness and semantic equivalence checks using a lookup table, as described in \cref{appendix:sec:digits-of-pi}.
\cref{fig:digitsofpi-precision-recall} (right) shows that filtering by our confidence measure allows generation of more responses with a lower hallucination rate relative to previously-proposed methods.

\begin{figure}
    \centering
    \includegraphics[width=\linewidth]{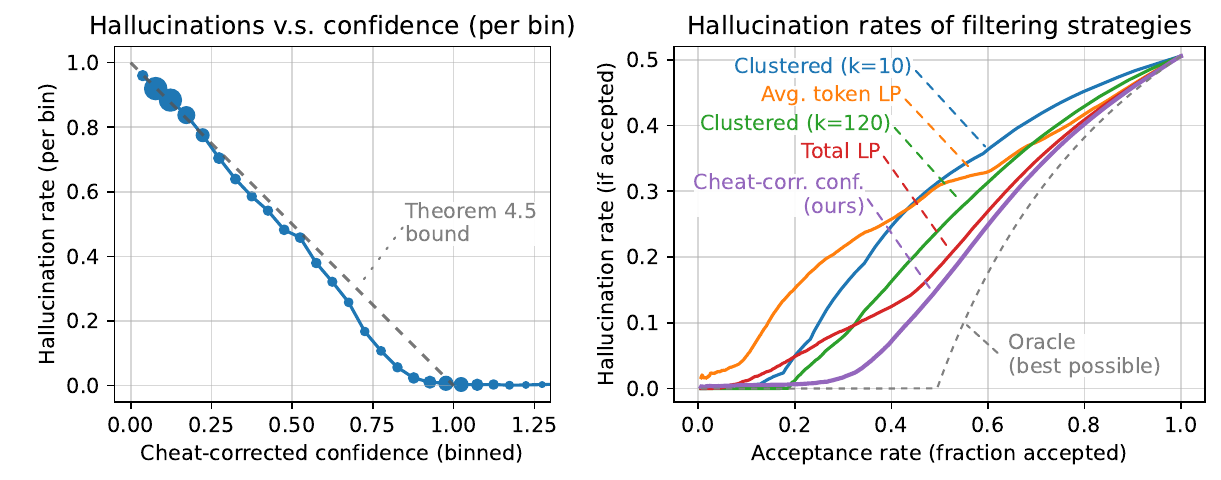}
    \vspace{-2em}
    \caption{
    \textit{Left:} For our digits-of-$\pi$ model, binning samples by $\Ccheat$ shows that hallucination rate is usually $\le 1 - \Ccheat$ as predicted by \cref{thm:conf_hallucination}, although occasionally $\Ccheat > 1$ due to miscalibration.
    \textit{Right:}  Ranking samples by $|1 - \Ccheat(y|x)|$ yields a similar or lower hallucination rate than other common filtering strategies when applied to this model.
    }
    \vspace{-1em}
    \label{fig:digitsofpi-precision-recall}
\end{figure}

\subsection{Safe Offline RL With Unobserved Confounders}\label{sec:frozenlake}
Finally, we show
as a proof of concept
that our approach can detect confounders when doing imitation learning in POMDPs and thus avoid the ``self-delusions'' described by \citet{ortega2021shaking}. We focus on the ``Frozen Lake'' gridworld task \citep{Warrington2020RobustAL}, where agents can take shortcuts across a lake to reach the goal, but a random part of the lake is unsafe to cross in each episode.
We train a model to imitate expert demonstrations, where the experts always know and avoid the location of the unsafe patch, but the model can only see it 50\% of the time.
This partial-observation setting is an extreme example of misspecification, and can be viewed as a restriction on $\Phi(X)$: the model is forbidden from using part of the ``true'' input.
Naive imitation learning in this setting would cause the model to learn to cross the lake randomly, which would be unsafe.

We train an 85M-parameter Transformer to imitate pairs of tokenized trajectories $(Y_1, Y_2)$ drawn randomly from the expert policy, where the two  demonstrations always share the same location of the unsafe patch.
We then apply two of our cheat-corrected decoding strategies (rejection sampling and top-1 search) with the constraint $|1 - \Ccheat| \le 0.05$, and visualize the resulting trajectories in  \cref{fig:frozenlake}. Our strategies behave like ordinary sampling and top-1 search when the unsafe location is visible to the model, but reject paths that cross the lake when the location is hidden, since any such path might have $\pgt(y|x) = 0$. Only the always-safe paths that avoid the lake are kept, since the model is confident that $\pgt(y|x) \approx \phatym(y|x)$ for those trajectories.

\begin{figure}
    \centering
    \includegraphics[width=\linewidth]{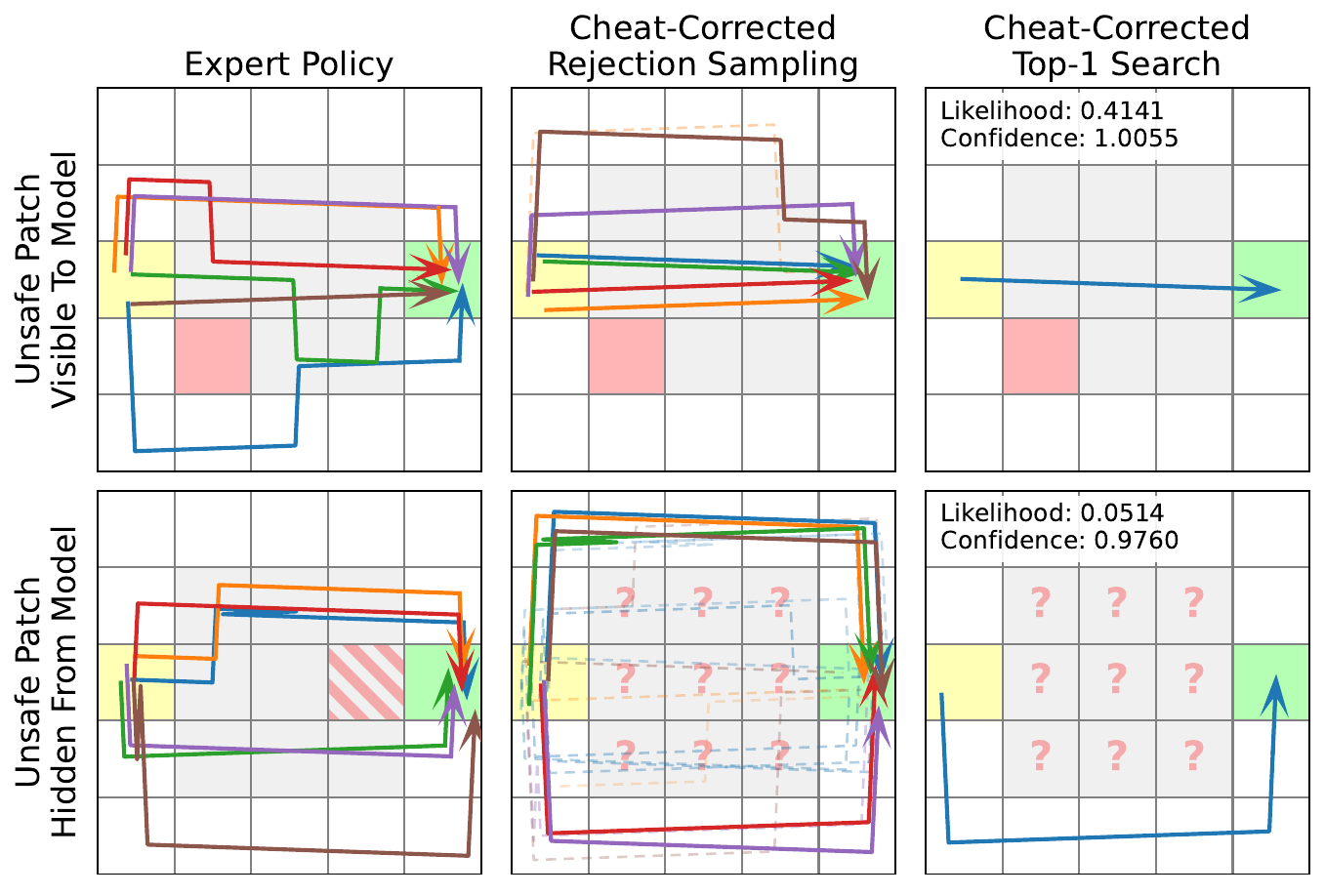}
    \vspace{-2em}
    \caption{Our cheat-corrected decoding strategies (with $\beta = 0.05$) avoid unsafe actions in the ``Frozen Lake'' task. When the unsafe patch (red square) is visible, model samples imitate the expert distribution, and the highest-likelihood path crosses the lake. When it is hidden, $\Ccheat$ is low for possibly-unsafe sampled paths (dashed lines), so our decoding strategies reject them in favor of safe paths.}
    \vspace{-1em}
    \label{fig:frozenlake}
\end{figure}
\section{Discussion}
We have presented a principled new approach for identifying the gaps between a model $\phaty$ and the ground truth $\pgt$, based on a remarkable equivalence between second-order calibration and pair prediction, and proven that calibrated pair predictors can be used to construct provably-correct bounds on $\pgt$. We have further demonstrated that our scheme is practically effective on both classification and sequence-modeling tasks, even without perfect calibration over $\cY\times\cY$.
Although paired responses may not be available for all datasets,
collecting paired fine-tuning data may still be easier than applying architecture-dependent uncertainty quantification strategies, especially for large models.
We are optimistic that our procedure will scale up
to this use case,
and are eager to explore this direction in future work.

\section*{Acknowledgements}
We would like to thank Dami Choi for helping with an early prototype of the idea, and Gustaf Ahdritz, Nikhil Vyas, Zelda Mariet, and Zi Wang for useful discussions. We are also thankful to Ayoub El Hanchi, David Glukhov, Stephan Rabanser, and Jasper Snoek for providing valuable feedback on the paper draft.
Resources used in preparing this research were provided in part by the Province of Ontario, the Government of Canada through CIFAR, and companies sponsoring the Vector Institute. We acknowledge the support of the Natural Sciences and Engineering Research Council of Canada (NSERC), RGPIN-2021-03445.

\section*{Impact Statement}
Our work proposes a general strategy for training a model $\phaty$ to accurately report how well it is able to fit an arbitrary process $\pgt$ on a per-input (or, precisely, per-equivalence-class) level. A large number of machine learning problems can be posed in this form, and the consequences of applying our technique would likely depend on the particular application. Overall, however, we hope our technique will make it easier to build safer and more reliable machine learning systems, by ensuring that they avoid taking unsafe actions or making unfair decisions when they are unable to accurately perform their intended tasks.

\bibliography{references}
\bibliographystyle{icml2024}

\newpage
\appendix
\onecolumn

\clearpage
\section{Sample Visualizations}\label{appendix:sample_visualizations}
In this section we present samples from our models for the Digits of Pi and Frozen Lake tasks (\cref{sec:digitsofpi,sec:frozenlake}).

\begin{figure*}[h!]
    \centering
\fbox{\includegraphics[width=0.675\linewidth,trim={0 5in 0 0},clip]{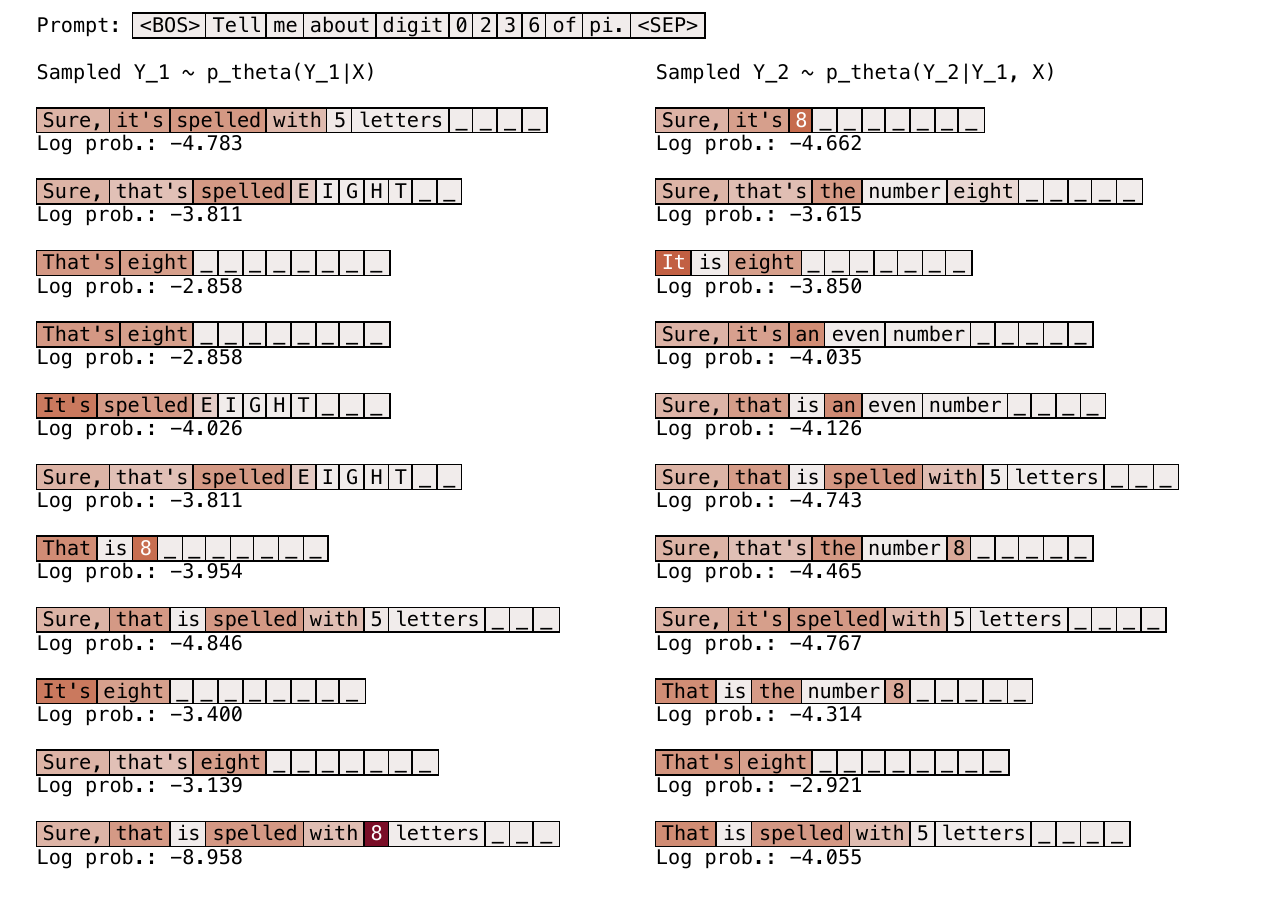}}
    \caption{Results of sampling pairs $(Y_1, Y_2)$ from the model $\phatyy$ when asked about the 236th digit of $\pi$, which it ``knows'' is eight. (Our method does not actually require sampling $Y_2$; we show samples for illustrative purposes only.) Color denotes likelihood, with red denoting less-likely tokens. The left column is $Y_1$ and the right column is $Y_2$ (drawn conditional on $Y_1$); each row is an independent pair of samples for the prompt at the top. Note that the last row's  $Y_1$ is a low-probability mistake which was sampled due to the high temperature. (The model ``knows'' eight is spelled with 5 letters, so $Y_2$ is inconsistent with $Y_1$).}
    \label{appendix:fig:pi_samples_independent_1}
\end{figure*}
\begin{figure*}[h!]
    \centering
\fbox{\includegraphics[width=0.675\linewidth,trim={0 5in 0 0},clip]{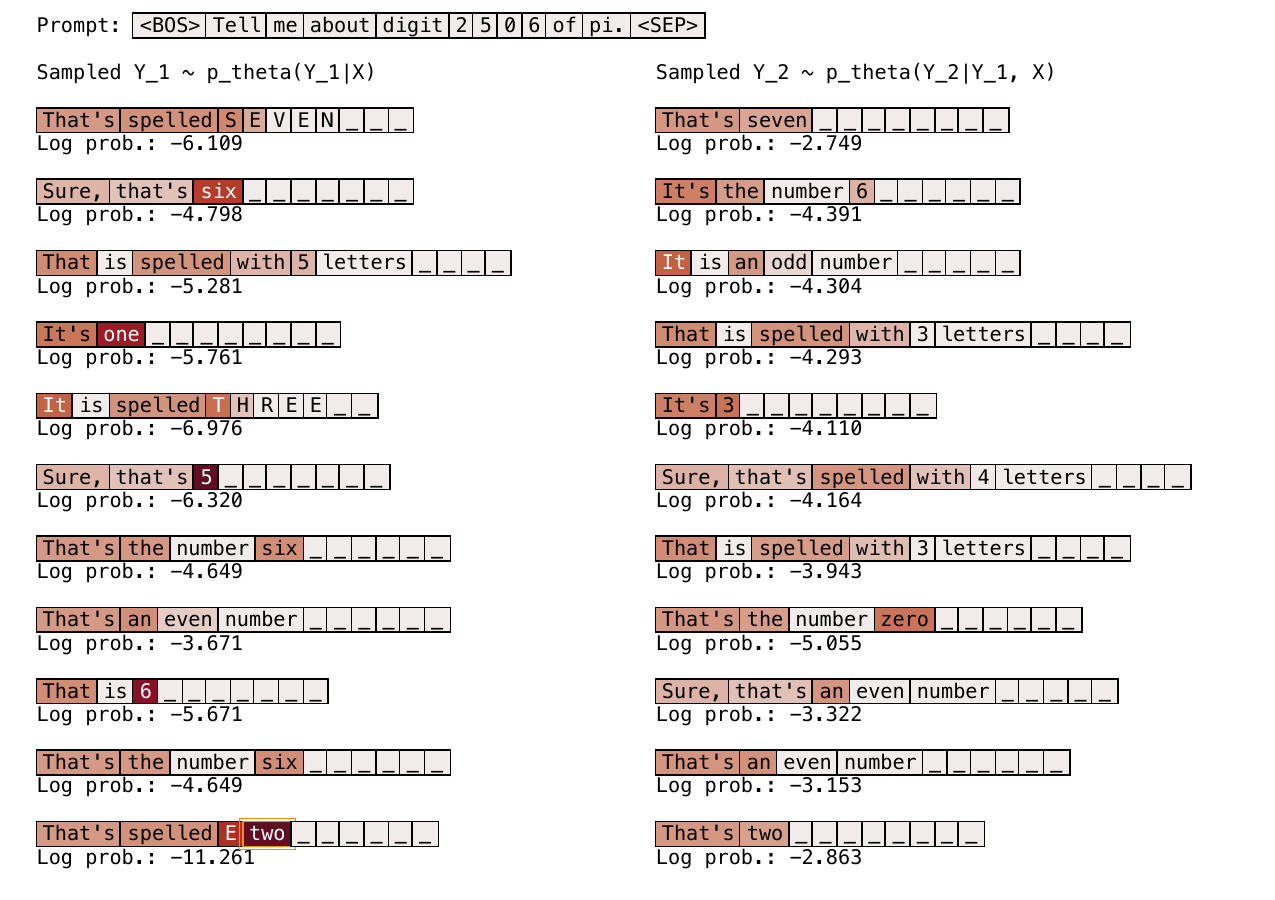}}
    \caption{Results of sampling pairs $(Y_1, Y_2)$ from the model $\phatyy$, when asked about the 2506th digit of $\pi$ (which it has not learned). The sampled $Y_2$ is usually consistent with the $Y_1$ sample, indicating that the model is ``cheating'' well. The last sample of $Y_1$ is malformed due to sampling a low-probability token.}
    \label{appendix:fig:pi_samples_independent_2}
\end{figure*}

\begin{figure*}[p]
    \centering
\fbox{\includegraphics[width=\linewidth,trim={0 3.25in -0.25in -0.25in},clip]{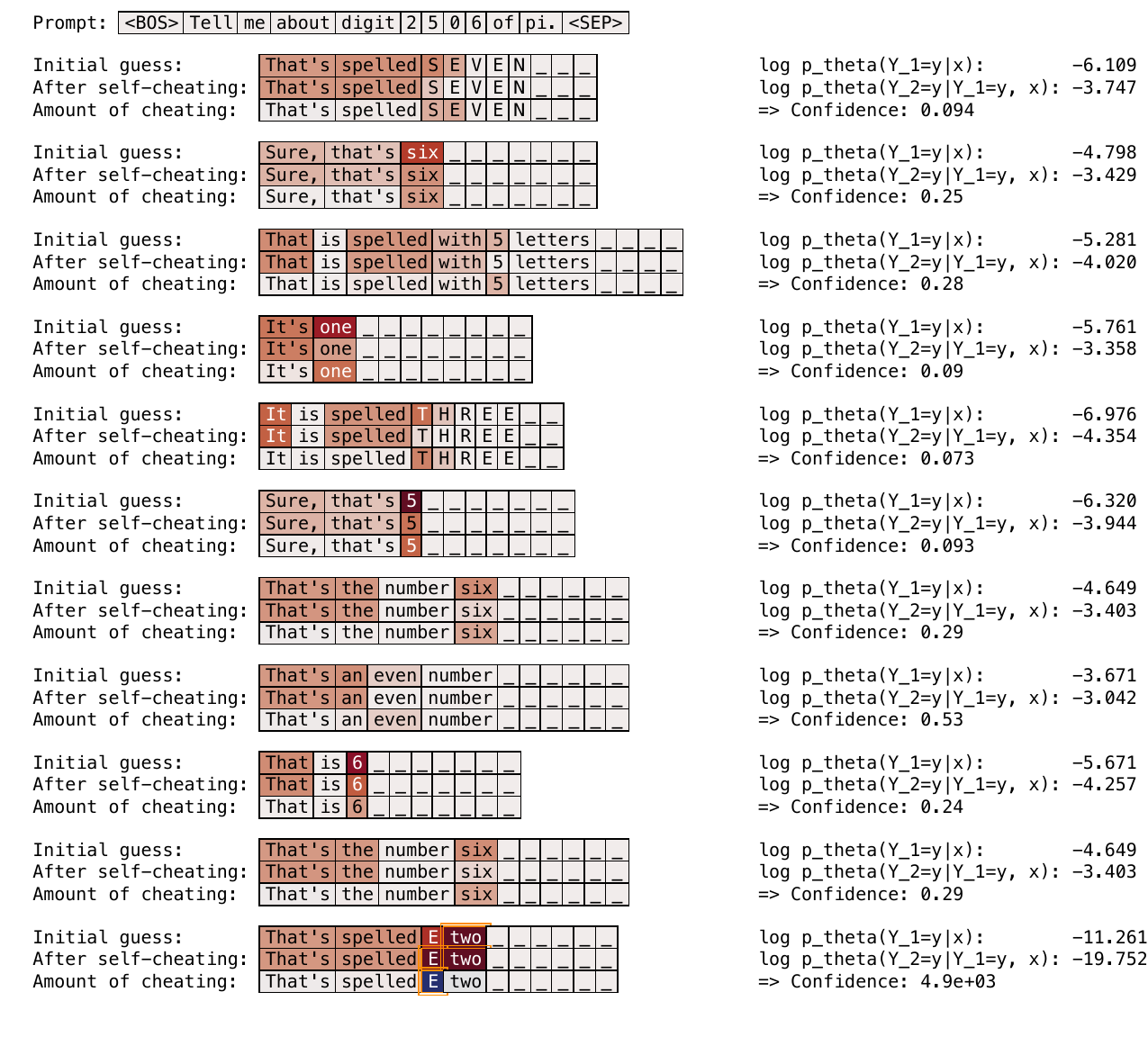}}
    \caption{Scoring samples using our cheat-corrected epistemic confidence, for the 2506th digit of $\pi$ (which it has not learned). Conditioning on $Y_1=y$ reveals information about this digit, so the log probability increases when outputting $Y_2=y$, and we can use the magnitude of the increase as a measurement of confidence. We can also attribute this increase to individual tokens (with red in the ``Amount of cheating:'' rows indicating a token whose likelihood increased after cheating). The last sample is malformed, so has very low probability as either $Y_1$ or $Y_2$, which leads to an outlier confidence greater than 1. We recommend discarding samples with confidences significantly larger than one, e.g. by keeping only those with $|1 - \Ccheat| \le \beta$.}
    \label{appendix:fig:pi_samples_scored_2}
\end{figure*}

\begin{figure*}[p]
    \centering
\fbox{\includegraphics[width=\linewidth,trim={0 3.25in -0.25in -0.25in},clip]{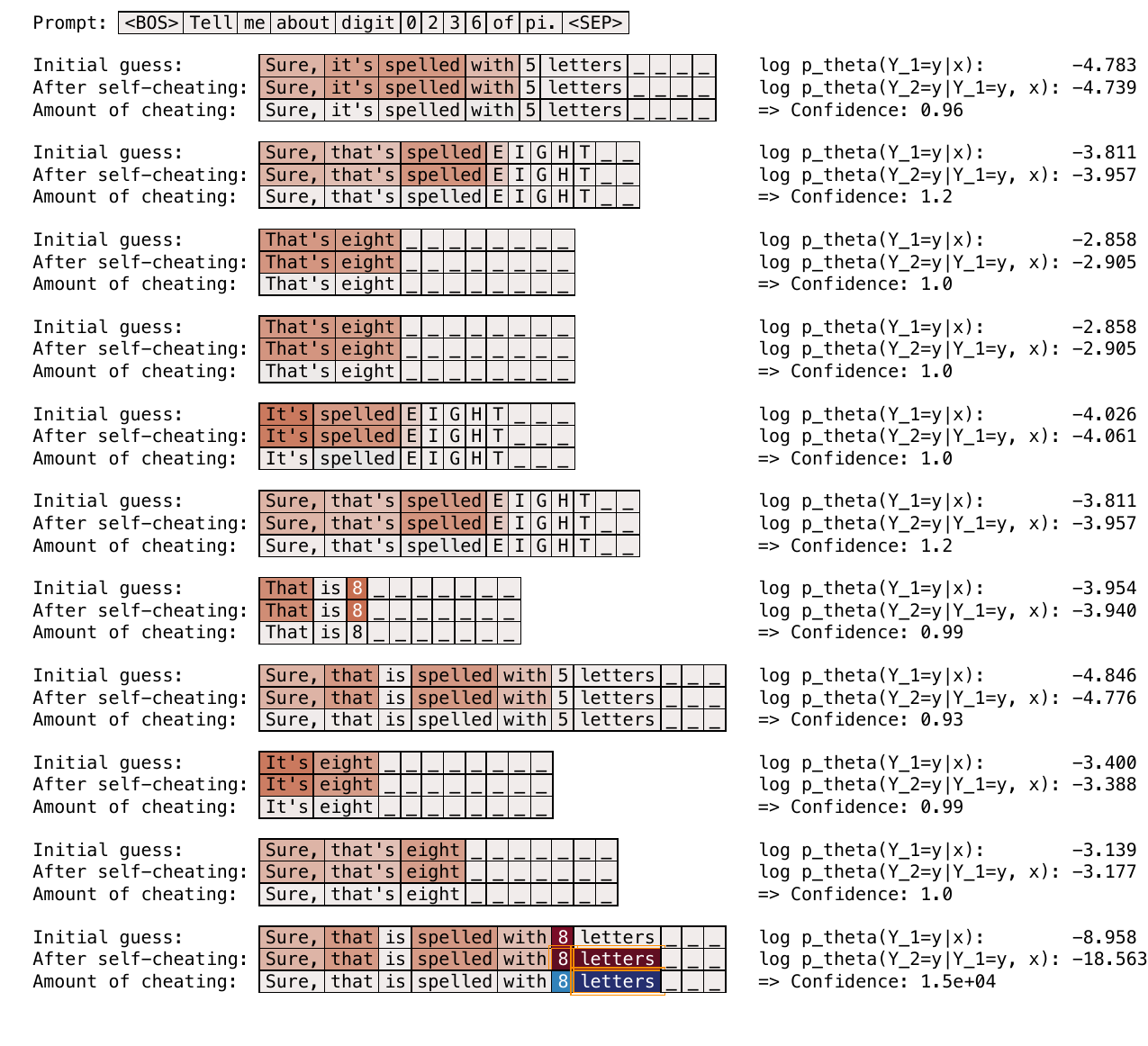}}
    \caption{Scoring samples using our cheat-corrected epistemic confidence, for the 236th digit of $\pi$ (which it ``knows''). We repeat each response twice, comparing the probabilities $\phatym(y|x)$ and $\phatyc(y|y,x)$. Color indicates log probability for ``Initial guess'' and ``After self-cheating'', and differences between log probabilities for ``Amount of cheating''.
    For this prompt, conditioning on $Y_1$ does not significantly change the prediction of the model, because it already knows the value of the 236th digit. However, in the last sample, the probability decreases because the originally-sampled guess was a mistake (see \cref{appendix:fig:pi_samples_independent_1}), leading to an outlier confidence value greater than one.}
    \label{appendix:fig:pi_samples_scored_1}
\end{figure*}

\begin{figure*}[p!]
    \centering
\includegraphics[width=0.9\linewidth]{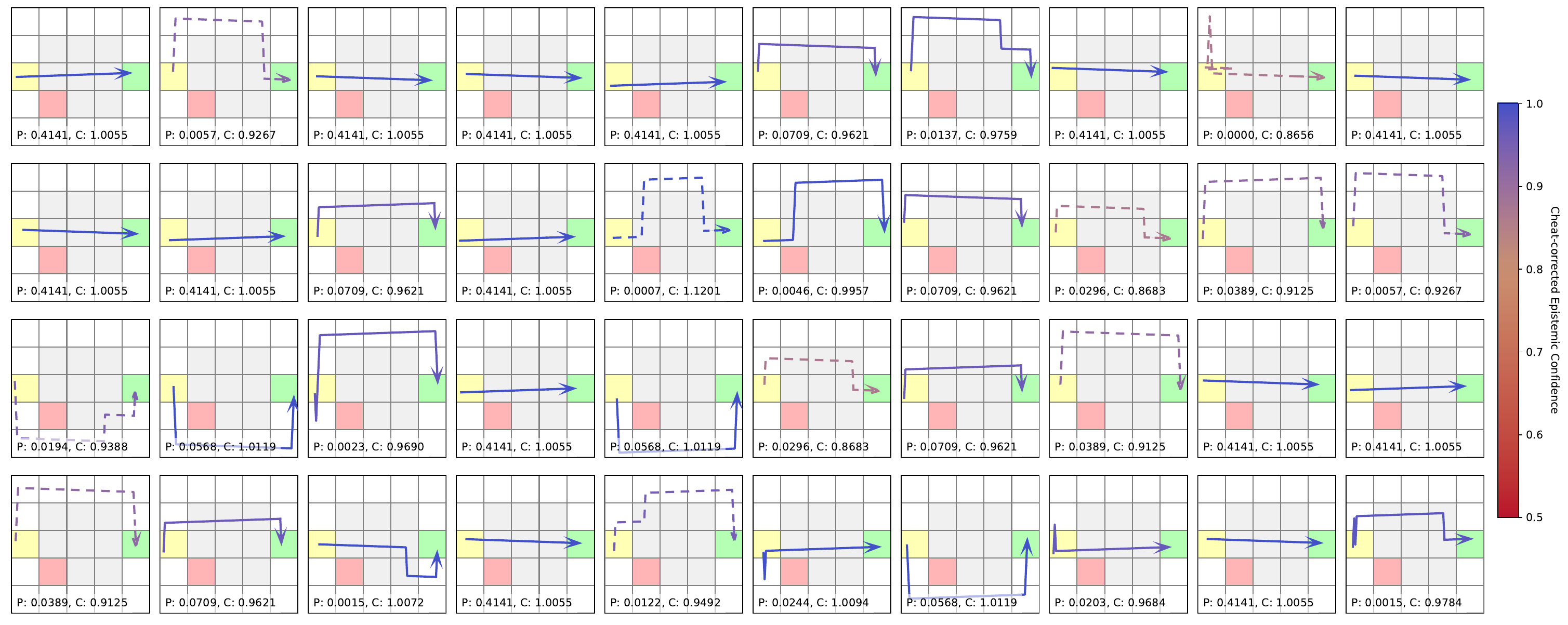}
    \vspace{-1em}
    \caption{Model samples and confidences for the fully-observable version of the ``Frozen Lake'' task, with the unsafe patch in the bottom left. Dashed trajectories indicate samples that we would reject using a $|1 - \Ccheat| \le 0.05$ threshold. We add a small diagonal offset when plotting so that it is easier to follow paths that backtrack; the model itself only predicts discrete actions (left, right, up, down) and moves between grid cells. There is a fair amount of diversity among samples, although our strict decoding strategy does occasionally reject safe paths.}
    \label{appendix:fig:frozen_lake_samples_r1c1}
\end{figure*}
\begin{figure*}[p!]
    \centering
\includegraphics[width=0.9\linewidth]{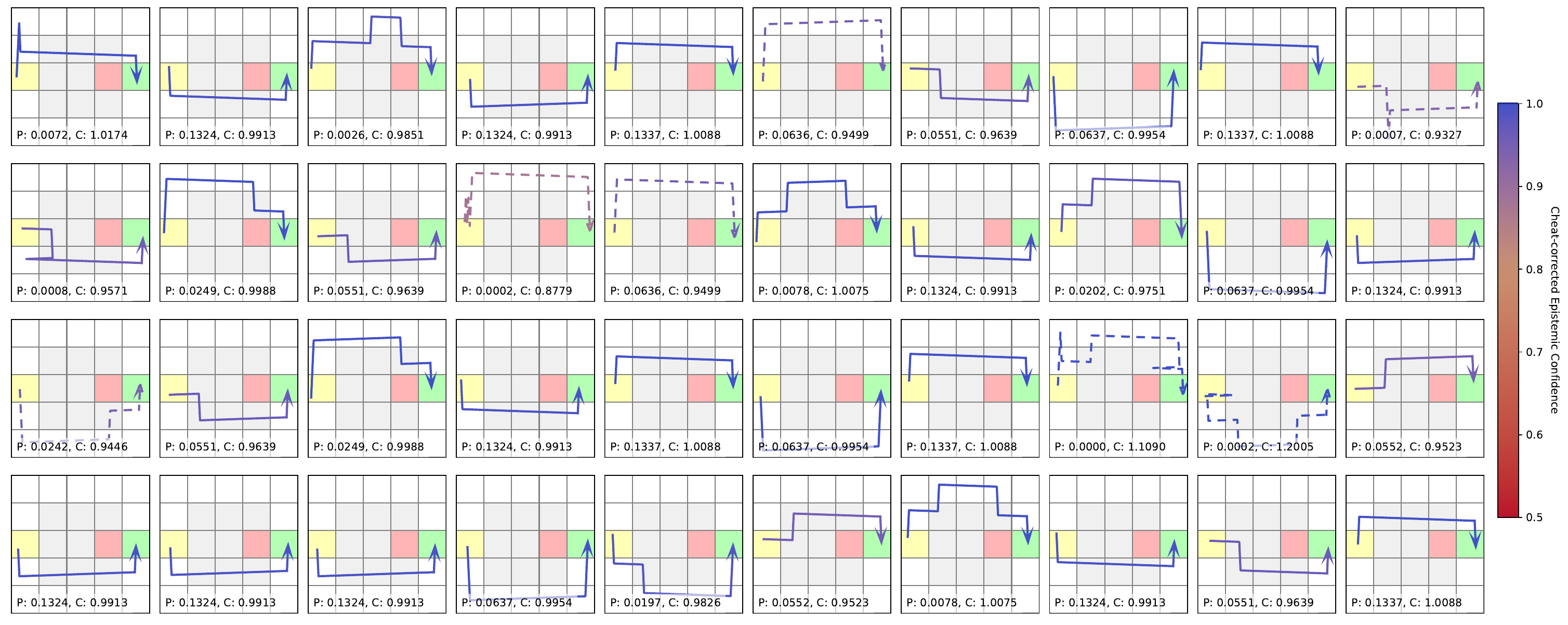}
    \vspace{-1em}
    \caption{Model samples and confidences for the fully-observable version of ``Frozen Lake'' with the unsafe patch in the middle right.}
    \label{appendix:fig:frozen_lake_samples_r3c2}
\end{figure*}
\begin{figure*}[p!]
    \centering
\includegraphics[width=0.9\linewidth]{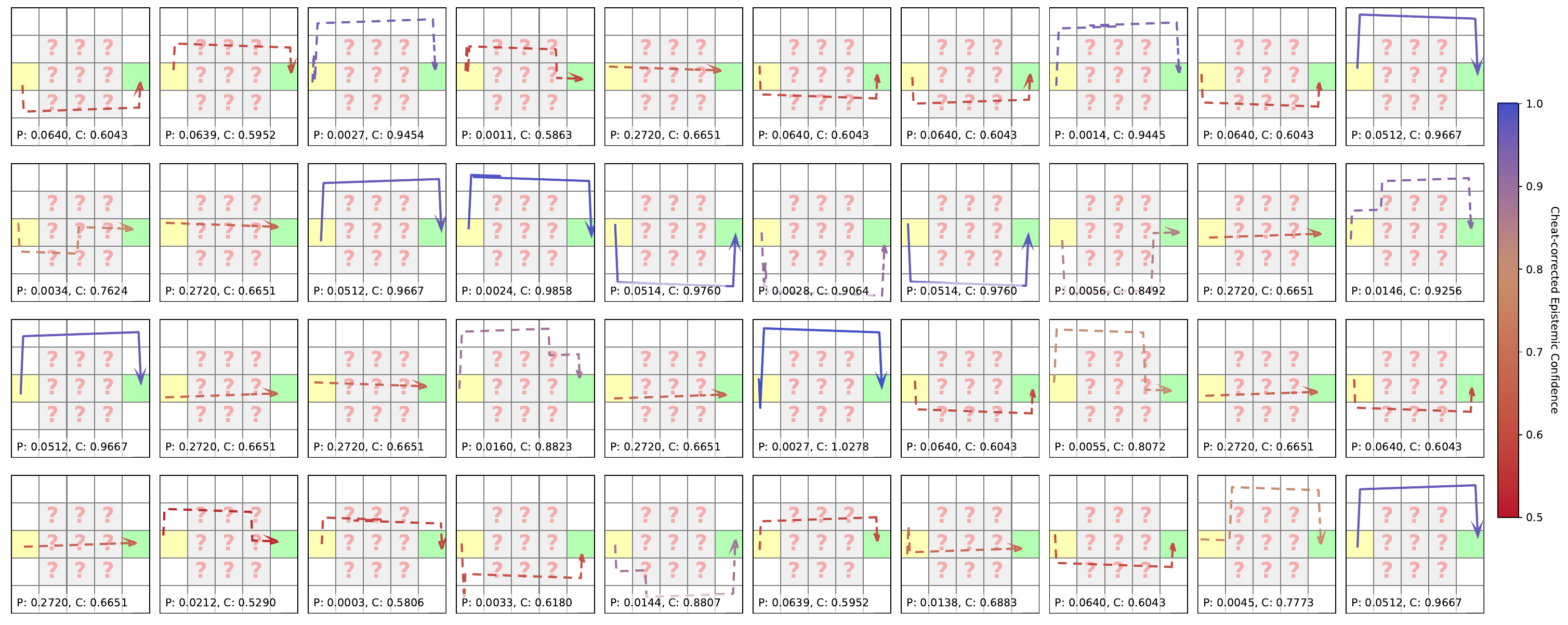}
    \vspace{-1em}
    \caption{Model samples and confidences for ``Frozen Lake'' when the unsafe patch is hidden. Note that samples that cross the lake have much lower confidence when the unsafe patch is hidden, relative to similar trajectories in \cref{appendix:fig:frozen_lake_samples_r1c1,appendix:fig:frozen_lake_samples_r3c2}.}
    \label{appendix:fig:frozen_lake_samples_hidden}
\end{figure*}

\section{Our Distribution-Free Confidence Intervals}\label{appendix:distnfree_experiment}
In this section, we show the results of applying \cref{thm:distnfreebound} to the 1D binary regression problem in \cref{fig:intro_1d_problem}.

\begin{figure*}[h!]
    \centering
\includegraphics[width=0.95\linewidth]{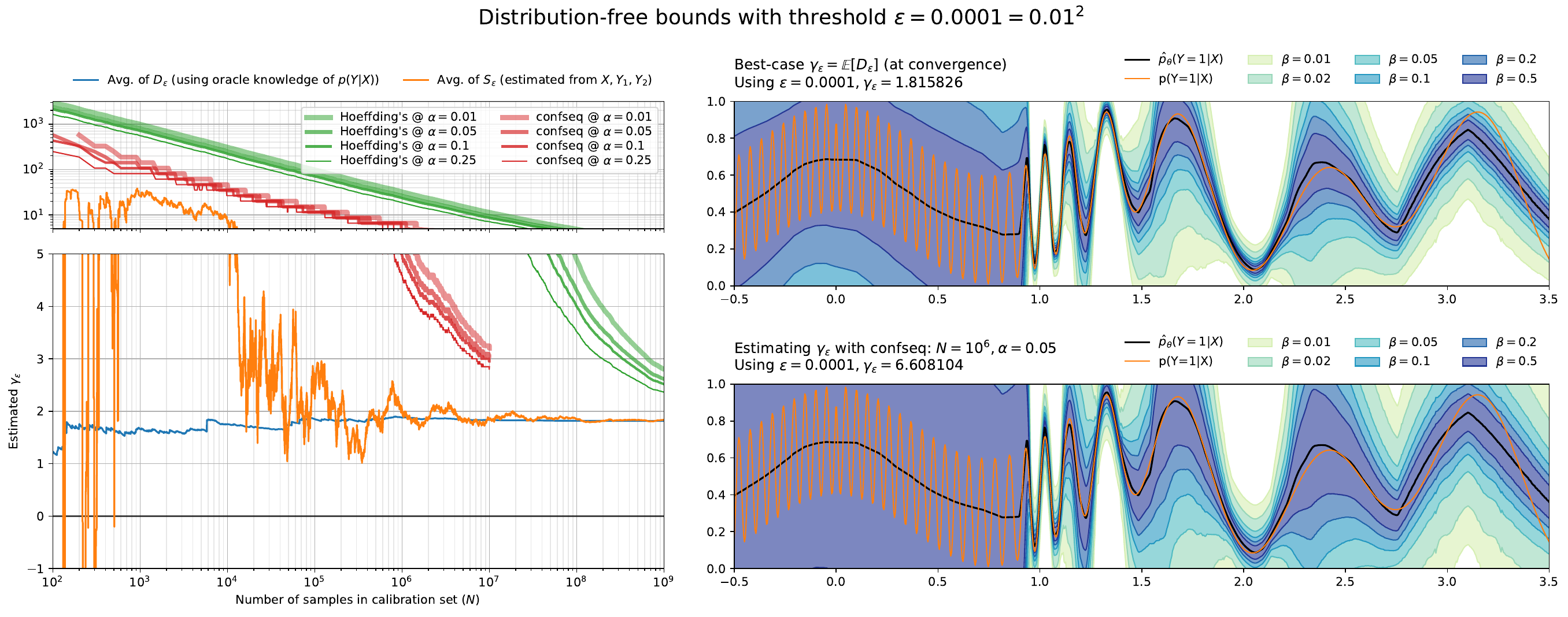}
\includegraphics[width=0.95\linewidth]{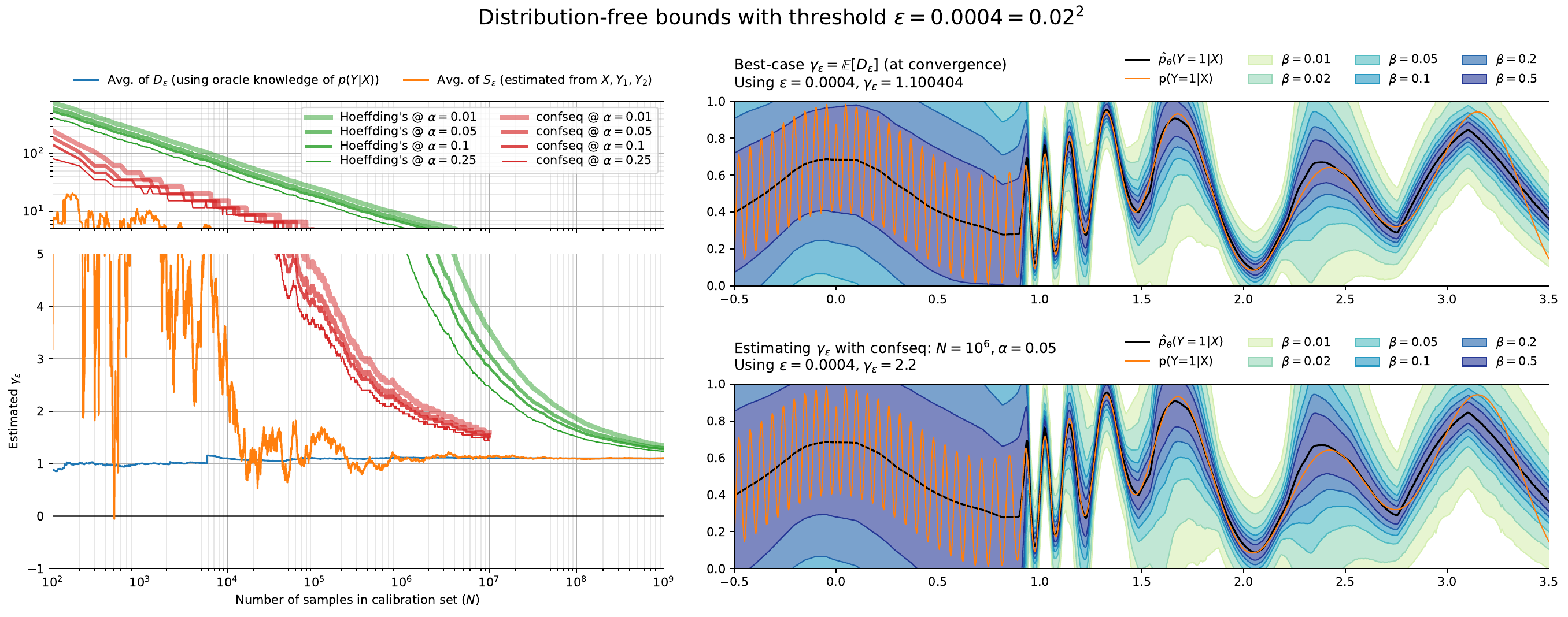}
\includegraphics[width=0.95\linewidth]{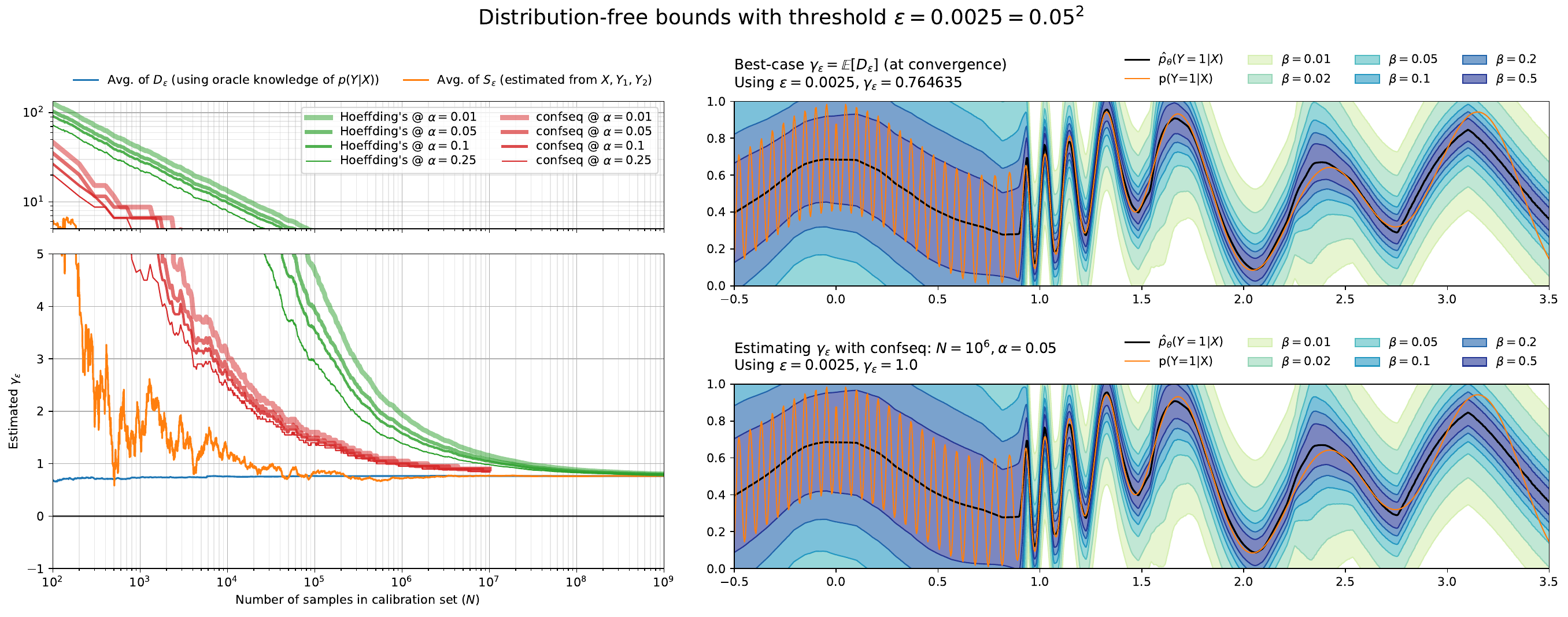}
    \vspace{-1em}
    \caption{Visualization of our distribution-free bound for the toy 1-D binary regression problem in \cref{fig:intro_1d_problem}, with $\varepsilon$ set to $0.01^2, 0.02^2,$ or $0.05^2$. Left: Convergence of $\gamma_\varepsilon$ based on Hoeffding's inequality and \texttt{confseq}, with running averages of $D_\varepsilon$ and $S_\varepsilon$ for reference. Right: Resulting confidence intervals for $p(Y|X)$, using either the best-case $\gamma_\varepsilon = \E[D_\varepsilon]$ or a value of $\gamma_\varepsilon$ returned by \cref{alg:distfree_bound}.}
    \label{appendix:fig:distfree_good}
\end{figure*}

\clearpage

\begin{figure*}[t!]
    \centering
\includegraphics[width=\linewidth]{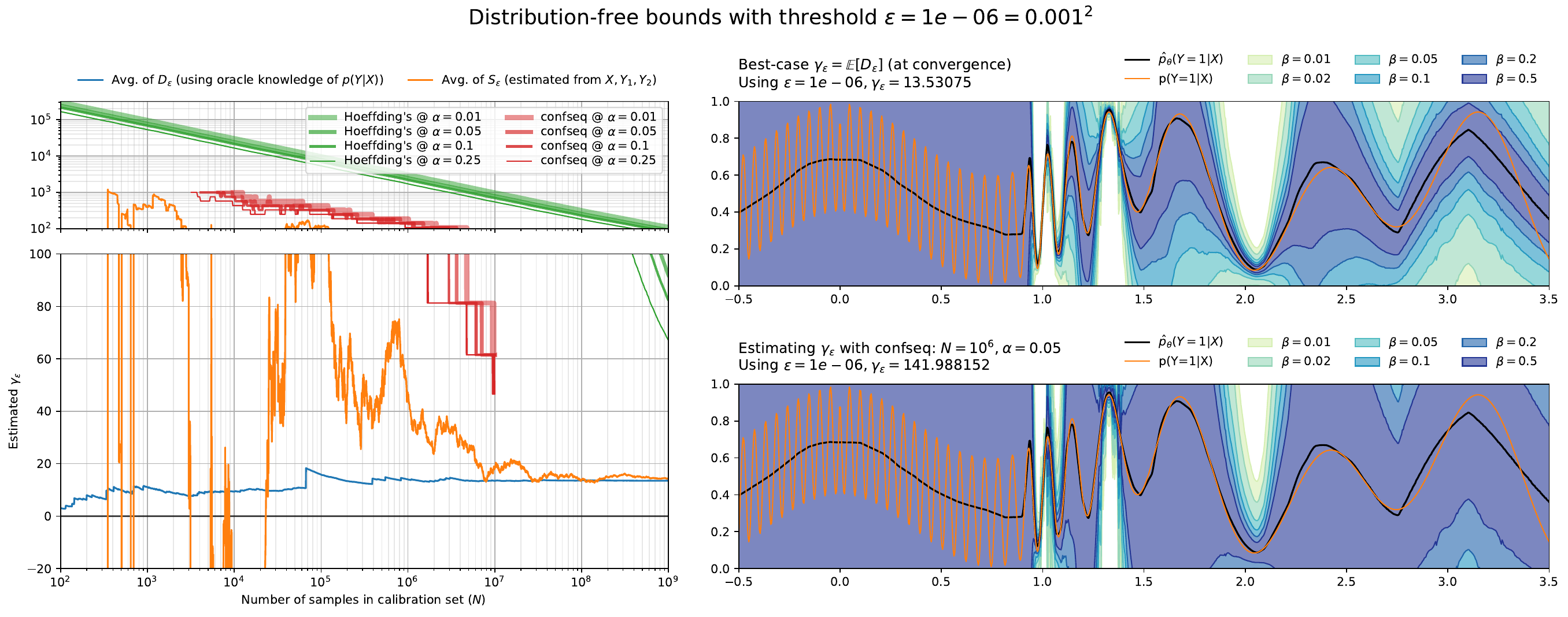}
\\[1em]
\includegraphics[width=\linewidth]{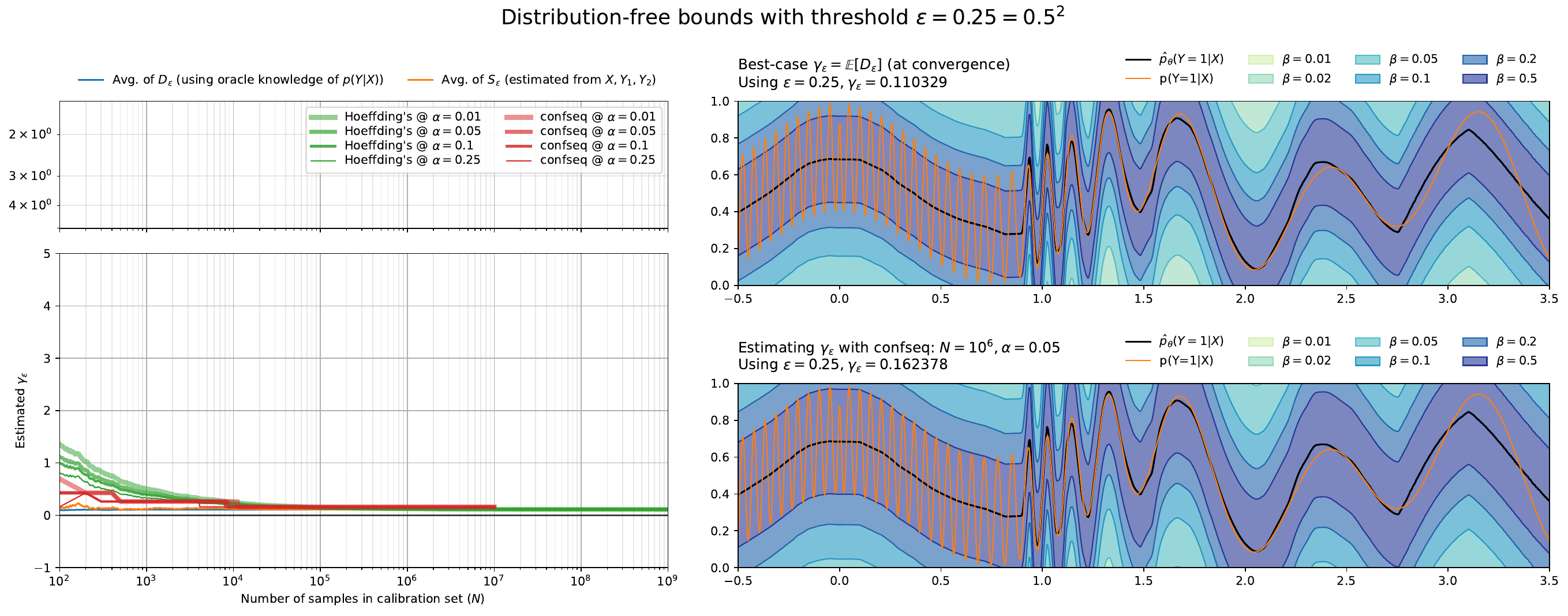}
    \vspace{-2em}
    \caption{Visualization of our distribution-free bound with $\varepsilon$ set to $0.001^2$ (leading to a blowup of $\gamma_\varepsilon$ and a very pessimistic bound) or $0.5^2$ (for which the bound ignores $v^\textsc{Cheat}_\theta$ entirely and has a constant width for all $X$, because $\vcheat(y | x) \le 0.5^2$ everywhere).}
    \label{appendix:fig:distfree_extreme}
\end{figure*}

We hold $(\phatym, \sigmacheat)$ fixed (to the ``Cheat-corrected NN'' described in \cref{appendix:training_1d_regression}), and study the behavior of the bound for different confidence interval algorithms, variance thresholds $\varepsilon$, failure tolerances $\alpha$, and calibration set sizes $N$. We compare two confidence interval algorithms, Hoeffding's inequality \citep{Hoeffding1963ProbabilityIF}, discussed in 
\cref{appendix:distnfree_theory}, and \texttt{confseq} \citep{waudby2020estimating}, desribed below.

In \cref{appendix:fig:distfree_good,appendix:fig:distfree_extreme}, we enumerate $\varepsilon \in \{0.001, 0.01, 0.02, 0.05, 0.5\}$ and plot values of $\gamma_\varepsilon$ as the number of calibration set examples $N$ ranges from 100 to $10^9$. For comparison, we also plot running average estimates of $\E[D_\varepsilon]$ (which require knowledge of $p(Y|X)$) and of $\E[S_\varepsilon]$ (as computed in \cref{alg:distfree_bound}). We also plot the resulting confidence intervals for $p(Y|X)$ at various confidence levels $\beta$, using either an estimate based on $10^6$ calibration set examples or the best-possible value $\E[D_\varepsilon]$ at convergence (computed using oracle knowledge of $p(Y|X)$).

Overall, we observe that \texttt{confseq}'s bounds are considerably tighter than those based on Hoeffding's inequality. Because our model $\phatyy$ is not perfectly calibrated on pairs, setting $\varepsilon$ too small leads to a blowup of $\E[D_\varepsilon]$ and an ineffective bound. Smaller $\varepsilon$ also reduces the rate at which $\gamma$ converges, so setting it to a larger value may be necessary if there is a limit on the size of the calibration set. On the other hand, setting $\varepsilon$ too large produces confidence intervals that are the same width everywhere, ignoring $\vcheat$ and instead using the marginal variance of $\pgt(1|X)$ across all $X$.

Note that, regardless of $\varepsilon$, the resulting bounds are provably correct with high probability (in the sense described by \cref{thm:distnfreebound}). However, when $\varepsilon$ is set too small, it is more likely that the bound is overly conservative, and when $\varepsilon$ is too large, the coverage guarantees are more likely ``trade off'' errors between values of $X$, assigning conservative bounds to some regions and under-covered bounds to others so that the overall coverage target $\beta$ is met. This can be seen in the plot for $\varepsilon = 0.25$. (In a sense \emph{every} such bound must trade off errors between values of $X$, because after fixing $\pgt$, for each $x$ the value of $\pgt(y|x)$ is either in the interval or not. But if $\varepsilon$ is small and the model is well calibrated, this trading-off only occurs between examples in the same equivalence class, i.e. with the same value of $\Phi(x)$.)

\textit{\texttt{confseq} implementation details:} For our \texttt{confseq} bounds, we use the \texttt{betting\_cs} function from the \texttt{confseq} Python package,\footnote{\url{https://github.com/gostevehoward/confseq}} which implements the algorithm described by \citet{waudby2020estimating}. We rescale our $S_\varepsilon$ values so that they are bounded between 0 and 1, as assumed by the algorithm. \texttt{betting\_cs} maintains a finite set of hypotheses about $\E[S_\varepsilon]$ and uses hypothesis testing to reject them; these hypotheses are evenly spaced over the unit interval by default, but we modify it slightly to choose a set of hypotheses that are more-closely concentrated around 0.5, which gives higher precision for $\gamma_\varepsilon \approx 0$ after inverting our rescaling. (The finite hypothesis set is the reason for the discrete jumps in the estimates produced by \texttt{confseq} in \cref{appendix:fig:distfree_good,appendix:fig:distfree_extreme}.) We configure \texttt{betting\_cs} with a prior mean of $1$ and a prior variance of $\frac{0.5^4}{\varepsilon^2}$ for $S_\varepsilon$, which correspond to a prior mean of $\frac{1}{2} + \frac{1}{2\varepsilon}$ and prior variance of $0.5^6$ after rescaling $S_\varepsilon$ to the unit interval. Since betting-based confidence intervals are more computationally expensive than confidence intervals from Hoeffding's inequality, we only run \texttt{confseq} for calibration set sizes smaller than $10^7$.

\clearpage
\section{Additional discussion and related work}\label{app:sec:discussion}

\paragraph{Pair prediction (two $Y$ for the same $X$) v.s. ``joint prediction'' (different $X$s and $Y$s) for epistemic uncertainty:}
As motivation for the Epinet uncertainty-quantification technique, \citet{Osband2021EpistemicNN} have argued that uncertainty-aware agents should be judged not based on their uncertainty about $Y$ for individual inputs $X$, but instead based on their joint distribution over a sequence of $Y^{(i)}$ drawn for a sequence of inputs $X^{(i)}$; they refer to this as ``joint prediction''. While somewhat similar to our proposed pair-prediction formalism in terms of motivation, the focus and applicability of the approaches is quite different.

\emph{Probabistic structure:} The joint prediction criterion assumes the model has some hierarchical structure, such that it is possible to express a joint distribution over the outcomes $Y^{(i)}$ for different inputs $X^{(i)}$. Bayesian neural networks and Epinets satisfy this criterion, but not all neural networks express a joint in this way. In particular, a cheat-corrected pair-prediction neural network (as we propose) does not assume any joint distribution over outcomes for different $X$; its outputs can be converted into well-calibrated pointwise estimates of variance, but each prediction is made pointwise (e.g. for this particular $x$ or the $x$es in a particular equivalence class).

\emph{Evaluation criterion:} \citet{Osband2021EpistemicNN} propose to use joint prediction primarily as an evaluation metric, based on theoretical results showing that joint predictions perform well for decision making \citep{wen2021predictions}. This can be interpreted as measuring how quickly a model can ``learn'' from new data to improve its predictions on future data points. In contrast, our work focuses on pair prediction as a way to train a model to be second-order calibrated, which then lets us estimate how accurately a model predicts its distance from $p(Y|X)$. Our evaluation metric is then the pointwise calibration of the second-order estimates.

\emph{Training objective:} Our proposed training objective in directly trains a model to predict pairs, and our distribution-free adjustment procedure also directly uses paired data. This ensures that our approach can be statistically valid even if the model is misspecified or computationally limited, but requires the data collection process to be modified. On the other hand, \citet{Osband2021EpistemicNN} do not train their models based on a joint prediction objective, but instead show that a per-sample log-likelihood objective leads to good joint predictions under the assumption that the data was generated by a distribution with a specific known form. This implies that the Epinet training objective is not necessarily second-order calibrated or robust to misspecification, and we find empirical evidence of this in \cref{sec:cifar10h}.

\paragraph{Pair prediction v.s. minimum Bayes risk / repeated sampling techniques:}
Many postprocessing-based techniques improving model outputs using multiple samples, including clustering-based approaches, can be interpreted as instances of \emph{minimum Bayes risk} (MBR) decoding \citep{bertsch2023s}. In MBR decoding, after obtaining a model $\hat{p}_\theta$ approximating some generative process, actions are selected not based on their likelihood under the model, but instead based on some error function $L(y, y')$ that compares possible outputs; an output $y'$ is ``good'' if it achieves a low error in expectation across alternative outputs $y$ sampled from the model  $\hat{p}_\theta(Y|X=x)$. For instance, $L$ might return 1 if two outputs are semantically equivalent.

Although our approach and MBR decoding both draw repeated samples from a conditional distribution of $Y$ given $X$, they differ on \emph{which} distribution is sampled. In MBR decoding, the training data usually consists of only one $Y$ drawn from $p(Y|X)$ for each $X$, but multiple samples are drawn from $\hat{p}_\theta(Y|X)$ and compared at inference time. In contrast, in our pair-prediction technique, the training data must consist of two samples $Y_1, Y_2$ drawn from $p(Y|X)$ for each $X$, but at inference time we can sample and score single outputs from $\hat{p}_\theta(Y|X)$.

MBR also requires specification of a task-relevant error function $L(y, y')$, and does not distinguish between aleatoric and epistemic uncertainty in $\hat{p}_\theta(Y|X)$, but instead distinguishes between ``risky''/``unusual'' and ``safe''/``common'' samples using $L(y, y')$. In contrast, our technique is task-agnostic and explicitly distinguishes aleatoric and epistemic uncertainty.
Note that, because it is task-agnostic, our approach may report uncertainty about hard-to-predict parts of $Y$ even if they are not relevant to the downsteram task, whereas MBR decoding can explicitly ignore the irrelevant parts when computing $L$.

\paragraph{Comparison to other distribution-free statistical guarantees:} Our adjustment procedure in \cref{sec:distnfree} has some similarities to previous algorithms for distribution-free prediction.

Conformal prediction is a particularly common and powerful form of distribution-free inference; see \citet{angelopoulos2021gentle} for an introduction. In general, conformal prediction lifts a predictor of \emph{points} (i.e. samples $y_i \in \cY$) into a predictor of \emph{prediction sets} (subsets of $\cY$) such that, for a new input drawn from the same distribution, the result lies in the predicted set with high probability. The basic idea is to associate a ``conformal score'' to each outcome in $\cY$ that measures how badly the predictor was wrong (e.g. the prediction error), estimate an upper quantile of the conformal scores for the actual outcomes in the dataset, then construct a prediction set by removing any observation whose conformal score would be higher than this quantile). This idea is closely linked to that of hypothesis testing.

Directly applying conformal prediction to a binary classification problem produces prediction sets that are subsets of $\cY = \{0,1\}$, but this is not ideal if there is uncertainty about $Y$, because then the best prediction set will often be $\{0,1\}$ itself, which is trivial and uninformative; similar issues may also arise for larger $\cY$ in high-uncertainty settings.
Related to our work, \citet{barber2020distribution} investigated the feasibility of constructing confidence intervals for the probability $p(Y=1|X)$ instead of the samples $Y$ themselves. Working under the assumption that the data consists of $(X, Y)$ pairs and that each $X$ is seen at most once, \citeauthor{barber2020distribution} proved that any confidence intervals for $p(Y=1|X)$ must necessarily also be a prediction set for $Y$ itself. In other words, the confidence interval cannot be a tight bound on the true $p(Y=1|X)$, since it must include at least one of the endpoints $0$ or $1$ with high probability, and may need to cover the whole unit interval for highly-stochastic $Y$. (Our approach avoids this limitation by assuming each input $X$ is seen twice.)

\citet{gupta2020distribution} study the relationship between calibration, grouping functions, confidence intervals, and predictive sets for binary classification problems. They introduce the notion of confidence intervals and predictive sets with respect to a function $f$, where $f$ plays the same role as our grouping function $\Phi$, and study methods for bounding the true expectation $\E[Y | f(X)]$. They prove that, in general, parametric recalibration methods cannot be distribution-free calibrated, but if outputs of $f$ are discretized to a finite set of bins first, then it is possible to construct distribution-free confidence intervals for the conditional probability $\E[Y | f(X)]$. A similar guarantee about calibration error was given by \citet{kumar2019verified}, who also proposed an efficient combined scaling-binning scheme, and a refined analysis that allows re-using samples was also given by \citet{gupta2021distribution}.

We note that confidence intervals for the expectation $\E[Y | f(X)]$ with respect to $f(X)$ are not the same as confidence intervals for the true conditional probability $p(Y=1|X) = \E[Y|X]$. Constructing a confidence interval for $\E[Y | f(X)]$ allows you to guarantee that your model is nearly first-order calibrated; it gives an interval of values that is likely to contain the answer to the question ``across all of the inputs for which my model's output is $\phi$, how many will have $Y = 1$?'' However, it does not tell you whether your model is doing a good job at separating examples with different true conditional probabilities. In contrast, our procedure directly produces a confidence interval for $p(Y=1|X)$; it gives an interval of values in answer to the question ``what is the chance that $Y=1$ for this specific $x$?'' such that the answer is likely\footnote{With probability at least $1 -  \alpha$.} to be correct for most\footnote{At least $1 - \beta$ of them.} randomly-chosen $x$.

For an estimator with finitely-many bins, and an infinite number of recalibration examples, the confidence intervals for $\E[Y | f(X)]$ will eventually converge on the exact value of $\E[Y | f(X)]$. However, our confidence intervals for $p(Y=1|X)$ may \emph{never} converge to the exact value of $p(Y=1|X)$ if the model is unable to distinguish $X$s with different label probabilities. This is unavoidable, since our model may not have capacity to express $p(Y=1|X)$ without additional assumptions (whereas a lookup table always has enough capacity to estimate $\E[Y | f(X)]$ over finitely many bins). Nevertheless, if we happen to be lucky, and our model \emph{is} actually able to predict the exact value for $p(Y=1|X)$ (and is both calibrated and confident about this prediction, i.e. $\vcheat(y|x) = 0$), then our confidence intervals will converge to that value. Our procedure is thus \emph{adaptive} to the complexity of the specific dataset being used while remaining correct without additional assumptions, a desirable property for a distribution-free algorithm.

\paragraph{Previous uses of ``second-order calibration'' terminology:}
The term ``second-order calibration'' has been previously used by \citet{muralidharan2015second} to refer to a particular method for adjusting an arbitrary system to give approximate posteriors over a latent real-valued parameter, under strong distributional assumptions. Here ``calibration'' is used in the sense of ``a calibration procedure'' rather than as ``the property of being calibrated'', and the goal is to distinguish error in estimating the parameter from the intrinsic noise in the response-generating process, using an estimate of variance. The calibration procedure relies on binning examples $x$ based on the output $t$ of a learned model, assuming that the real-valued true parameter $\theta$ of interest follows a simple parameteric family, and then fitting the parameters of the family for each bin separately using maximum (marginal) likelihood; this is an extension of (re)calibration procedures that try to post-process a non-calibrated model to make it more (first-order) calibrated (e.g. \citet{kumar2019verified}). Since true data for the parameter is not available, the focus of the work is primarily on ensuring that the simple parametric model fits the data well rather than on measuring the accuracy of the variance estimates.

We are not aware of any previous work that uses ``second-order calibration'' to refer to a formally-defined \emph{property} of a predictive model rather than to a technique for postprocessing an existing model, nor any that considers it in the sense of predicting the squared error between a predicted discrete distribution and an unknown ground-truth discrete distribution (e.g. for a classifier or generative model) without distributional assumptions.

\paragraph{Joint, marginal, and class-wise calibration:} Our definition of first-order calibration requires that all elements of the output joint distribution match their true expectation conditional on a \emph{single} grouping function (or, equivalently, conditional on the full \emph{vector} of model outputs). This is sometimes referred to as being ``jointly calibrated''. There are also weaker definitions of calibration. Following the terminology of \citet{perez2022beyond}, ``classwise calibrated'' models make individually-calibrated binary predictions about each possible class $y$  \citep{zadrozny2002transforming}, and ``top-label calibrated'' models first identify a most likely label and then make a calibrated binary predictions about the correctness of that guess \citep{guo2017calibration}.

Our technique fundamentally requires making predictions about a \emph{pair} of outcomes $(y_1, y_2)$. In particular, it is not enough to make separately-calibrated predictions $\phatym(y_1|x)$ and $\phatyc(y_2|y_1, x)$, since the procedure works by comparing how much more likely any given outcome $y$ would be to occur a second time.
In principle, however, we could still transform a multi-class classification problem into a set of binary classification problems, then apply our technique to the binary problems. In this setting, instead of predicting a full joint $\phatyy(Y_1, Y_2 | x)$, we could introduce binary outcome variables $O^y_i$ such that $O^y_i = 1$ whenever $Y_i = y$, then make a set of individually-calibrated pair predictions $p(O^y_1,O^y_2 | x)$, one for each $y$. This could then be used to construct second-order marginally-calibrated versions of classwise calibration or top-label calibration. Note that this approach would still allow you to estimate the epistemic variance for any particular class, but would not tell you a full covariance matrix.

\paragraph{How much does the choice of $X$ matter?} Our work has assumed the existence of a joint distribution of variables $X, Y$ given by a conditional $\pgt(Y|X)$ and a distribution of queries $P(X)$. However, a first-order calibrated model is free to condition on an arbitrary \emph{function} of $X$ instead of $X$ itself. This means that, in a standard machine learning setup, there may be some metaphysical ambiguity about what  $X$ ``really'' refers to.

A concrete example of this is our ``Frozen Lake'' experiments, where we randomly sample an environment $X$, then occasionally hide information about the unsafe patch to obtain $X_\textsc{Partial}$, and finally train a model to map $X_\textsc{Partial}$ to a distribution over expert trajectories $Y$. If all we care about is first-order calibration, we could think of the procedure that transforms $X$ into $X_\textsc{Partial}$ as either being part of the model's grouping function $\Phi$ or as being part of the ambient probability space. Similarly, we could either think of this as learning to approximate $p(Y | X)$ with a misspecified model, or as learning to approximate $p(Y|X_\textsc{Partial})$ directly. These two are in a sense equivalent, since they produce the same samples of $X_\textsc{Partial}$ and would use the same cross entropy loss over $Y$, and a perhaps more standard choice would be to think of the hidden information as being some other variable $Z$, and think of the ``true conditional'' the model is learning as being $p(Y|X_\textsc{Partial})$. You could make a similar argument for ordinary classifiers also, e.g. are feature normalization or augmentation strategies part of the data distribution or are they part of the model?

Once we involve second-order calibration, however, this distinction becomes practical rather than metaphysical: the query $X$ is whatever information makes the two responses $Y_1, Y_2$ independent and identically distributed. In other words, if we construct a process that samples two responses $Y_1, Y_2$ that are i.i.d. given $Z$, the true conditional we are estimating will then be the conditional $p(Y | Z)$ regardless of what transformation we apply to $Z$ before giving it to our model.

We believe this is a powerful strength of our approach, because it allows you to specify the ``boundaries'' of your desired conditional distribution by example rather than by assumption. If you wish to imitate a set of experts, you can collect a dataset by asking those experts, and any ``common knowledge'' that those experts have will become ``part of $X$'', regardless of whether or not you can encode it as part of the input to the model itself; a pair-predictor model will thus be incentivised to estimate whether or not it also knows that common knowledge. Similarly, anything that is independent between those experts will be treated as part of the aleatoric uncertainty in $Y$, since it cannot be used to help predict the answer of a different expert.

\paragraph{How tight is \cref{thm:conf_hallucination} (hallucination rate)?} Our results in \cref{thm:conf_hallucination} provide an upper bound on the rate of statistical hallucinations when using a sufficiently well-behaved decoding algorithm. A natural question is whether this bound is tight, and in what circumstances.

The bound in \cref{thm:conf_hallucination} will be tight if there are exactly two values for $\pgt$ conditioned on what the model ``knows'': zero, and some nonzero constant value. In this case, all of the variance in $\pgt$ conditioned on $\Phi(X)$ is caused by these two point masses, and the ratio of probabilities in $\Ccheat$ will tell you the fraction of inputs $X$ for which $\pgt$ takes the nonzero value. This might be the case if the model is very confident about the probability of the particular response $y$ assuming it is correct, but does not know whether or not $y$ is correct. Our experiments in the digits-of-pi task (discussed in \cref{appendix:sec:digits-of-pi}) approximately satisfy this property, since the only thing that changes between digits is the set of statements that are correct; the probability of any given statement is consistent across all queries for which it is correct.

In more realistic scenarios, there may be other aspects that influence the probability of a given response other than its correctness, e.g. the model may not know something about the typical style of answers to a particular type of question. This will lead to increased variance in $\pgt$ and a lower confidence.
The epistemic confidence may still be useful in those settings as a normalized measurement of uncertainty in general, but it will likely produce a conservative overestimate of the chance of hallucination in particular.

We also note that the bound in \cref{thm:conf_hallucination} is particularly simple because it attempts to bound the rate of generating statements whose ground truth probability was exactly zero. However, this bound is a special case of Cantelli's inequality \citep{cantelli1929sui}, a more general upper bound on a random variable given its mean and variance. We demonstrate how to use this to construct other one-sided bounds in \cref{app:sec:theory_error_bounds}.

\paragraph{Partial observability and misspecification for decision making:}
The general problem of decision making under uncertainty is a well-studied problem, with much analysis under the formalism of partially-observable Markov decision processes (POMDPs) \citep{kaelbling1998planning}. Agents acting in POMDPs must perform inference about their unknown state, based on a limited view of the environment.

Of particular relevance to our work is \emph{asymmetric} imitation learning: the problem of learning to correctly imitate expert demonstrations when the experts may have access to additional information not known to the imitation agent. Naive imitation can cause an agent to take unsafe or undesirable actions, while ``deluding itself'' into expecting that every action it takes will be safe; \citet{ortega2021shaking} demonstrate this problem and identify it as an instance of \emph{confounding} in a causal graph. This problem can be avoided if all training data is collected under the imitation-learning policy, where the expert actions are queried but only the imitation-learner's action are used. Relatedly, \citet{Warrington2020RobustAL} describe a procedure for modifying the \emph{expert} policy so that it can be safely imitated. Unfortunately, these procedures require the ability to dynamically query or adjust the expert policy, which is not always possible.

In the ``Frozen Lake'' experiment, we used the same hidden location when drawing the two expert decisions, so that making calibrated predictions about pairs of expert trajectories would requires us to quantify the influence of that extra information. As discussed above, this is essentially folding the partial observability of the ``Frozen Lake'' experiments into the grouping function $\Phi(X)$.

We think this is an interesting perspective which may be useful for thinking about the behavior of misspecified agents more broadly: training a calibrated predictor is roughly the same as having an optimal predictor that only sees some restricted view of its input, so perhaps techniques that work under partial observability could also be extended to work for arbitrary calibrated models.

\paragraph{Conditional independence requirements in Pair prediction v.s. randomized causal effect estimation:}
Our technique fundamentally assumes that $Y_1$ and $Y_2$ are independent and identically distributed according to $p(Y|X)$ for each $X$. A straightforward way to ensure this holds is to sample $Y_1$ and $Y_2$ from an explicit process for generating $Y$ from $X$, e.g. by querying a random human annotator for each. Unfortunately, if direct access to an explicit response process is not available, our technique may not be directly applicable unless conditional independence is satisfied in some other way.

This use of an explicit label process in some ways resembles the use of treatment assignment in randomized controlled trials, where treatments are explicitly chosen by a randomized algorithm to ensure that treatments are conditionally independent of the outcomes \citep{ding2023first}. In the case of causal inference, this randomness allows estimating average treatment effects without making assumptions about the causal mechanism that induces those effects. In the case of our pair-prediction technique, the randomness of the label process allows us to distinguish aleatoric uncertainty from underfitting without making assumptions about the form of the distribution $p(Y|X)$.

We note also that a variety of techniques have been proposed for estimating causal effects \emph{without} control of the treatment-assignment process, usually by making assumptions about the causal structure of the naturally-occuring data; studies that estimate causal effects in this way are referred to as ``observational'' studies \citep{ding2023first}. Such techniques can be effective if correctly designed, but can produce incorrect estimates if the causal model is misspecified (due to not accounting for all confounders). Similarly, methods such as Bayesian inference can produce good uncertainty estimates without using paired $Y$ data if they are well specified, but can fail if misspecified.

\paragraph{Handling uncertainty using privileged information:}
\citet{collier2022transfer} propose a technique (TRAM) for improving robustness to label noise by training on \emph{privileged information}. At training time, they allow the later layers of a network to condition on information such as annotator IDs, which can help explain away label noise. At inference time, this privileged information can then be marginalized out.

Similar to our method, TRAM involves collecting additional data from the response process $p(Y|X)$ at training time, but does not require this additional information when scoring new inputs. However, the additional information in TRAM can be seen as ``explaining away'' the \emph{aleatoric uncertainty} in the process, allowing the model to focus on learning the link between $X$ and $Y$; this aleatoric variation is then added back in through marginalization. In contrast, our technique conditions on a separate sample $y_1$ when predicting $y_2$, which can roughly be seen as ``explaining away'' the \emph{epistemic uncertainty}. The remaining noise in $\hat{p}_\theta(y_2 | y_1, x)$ is likely aleatoric, so we can correct for it by dividing it out. 

\paragraph{Calibration, forecasting, and game-theoretic probability:}
In machine learning, calibration is usually formulated and evaluated with respect to an i.i.d. distribution of inputs $X$ and outcomes $Y$. However, much of the initial work on calibration focused instead on \emph{sequential forecasting} \citep{dawid1984present}, where the inputs $X$ arrive sequentially and may not be identically distributed, and the goal is to produce a sequence of forecasts that are calibrated in the long run (e.g. asymtotically) and also achieve good performance according to a scoring rule. Although our contributions are focused on the i.i.d. setting, and paired responses seem difficult to extend to the sequential-forecasting setting, we briefly review some of the results on calibration for sequential forecasts for the interested reader.

\citet{dawid1982well} proved that a coherent Bayesian reasoner must assign probability 1 to being eventually well-calibrated on any sequence of outcomes. This is roughly because a coherent Bayesian must be certain about their own prior (over the set of possible sequences); observed miscalibration can sometimes provide evidence about the sequence but can never convince the Bayesian to change their inference algorithm.
\citet{dawid1985calibration} expanded the notion of calibration to range over all computable subsequences (akin to the definition of multicalibration in the i.i.d. setting \citep{HbertJohnson2018MulticalibrationCF}), and showed that any computable forecasting strategies that achieve this stronger notion of calibration must eventually agree with each other.

Unfortunately, \citet{Oakes1985SelfCalibratingPD} showed that no deterministic algorithm can be calibrated on every sequence: given any deterministic forecasting strategy, there exists an adversarial distribution of sequences for which it is miscalibrated. Interestingly, \citet{foster1998asymptotic} proved that a (non-Bayesian) forecaster can achieve asymptotic calibration on every sequence if they are allowed to add noise to their forecasts independently of the adversarially-selected outcomes in the sequence, but this comes at the cost of higher prediction error on each sequence due to the added noise. \citet{Sandroni2003CalibrationWM} strengthened this result, showing that there exist (randomized) computable forecasting strategies that are asymptotically calibrated over all computable subsets of any sequence.

The above works show that calibration can be formalized in both probabilistic and game-theoretic terms. Game theory can also be used as a foundation for probability theory and hypothesis testing \citep{shafer2019game,waudby2020estimating}, and approaches based on betting can even be used to define coherent ``probabilities'' over logical implications \citep{garrabrant2016logical}. A promising property of this kind of formalization is that it can naturally account for computational constraints, by restricting the computational capabilities of the reasoner or adversary; this is much more difficult to do from a purely Bayesian perspective.

We note that, although it is difficult to define a coherent probability system over logical statements that converges to the truth, it is fairly easy to produce nearly-calibrated predictions about the truth values of a fixed distribution of logical statements, as long as you are OK with taking a non-Bayesian perspective and having a large grouping loss: you can simply output the fraction of all statements that are true, optionally after partitioning the space of statements into groups. Our experiments with predicting digits of $\pi$ are closer to this simple procedure than they are to the algorithm of \citet{garrabrant2016logical}, and we conjecture that observed logical reasoning errors in language models can be thought of as more complex versions of this simple procedure as well.

\newpage
\section{Details about and proofs of theoretical results}\label{app:theory}
In this section, we prove our theoretical results and discuss their implications.

\subsection{First-Order Calibration}

We first prove that our definition of calibration is equivalent to the more specific definition used in previous work \citep{kumar2019verified,vaicenavicius2019evaluating,perez2022beyond}. This result was previously shown by \citet{gupta2020distribution}.

\RestatePropCalibCoarse*
\begin{proof}
Fix $\Phi$ and suppose $\phaty(Y{=}y | X{=}x) = \E\big[ p(Y{=}y | X) ~\big|~ \Phi(X)=\Phi(x)\big]$. Then
\begin{align*}
\PhiY(x) &= \Big[~ 
\phaty(Y{=}y_1 | X{=}x),
~\dots,~
\phaty(Y{=}y_{|\cY|} | X{=}x),
~\Big]
\\&= \Big[~ 
\E\big[ p(Y{=}y_1 | X) ~\big|~ \Phi(X)=\Phi(x)\big],
~\dots,~
\E\big[ p(Y{=}y_{|\cY|} | X) ~\big|~ \Phi(X)=\Phi(x)\big]
~\Big]
\end{align*}
Let $h(\phi)$ be the vector
\begin{align*}
h(\phi) &= \Big[~ \E\big[ p(Y{=}y | X) ~\big|~ \Phi(X)=\phi\big] ~\Big]_{y \in \cY}
\\&= \Big[~ 
\E\big[ p(Y{=}y_1 | X) ~\big|~ \Phi(X)=\phi\big],
~\dots,~
\E\big[ p(Y{=}y_{|\cY|} | X) ~\big|~ \Phi(X)=\phi\big]
~\Big]
\end{align*}
and observe then that $\PhiY(x) = h(\Phi(x))$. It follows that, for any $x \in \cX$ and $y_i \in \cY$,
\begin{align*}
\E\big[ p(Y{=}y_i | X) ~\big|~ \PhiY(X)=\PhiY(x)\big]
&= \E\big[ p(Y{=}y_i | X) ~\big|~h(\Phi(X))=h(\Phi(x))\big]
\\&=  \E\Big[ \E\big[ p(Y{=}y_i | X) ~\big|~\Phi(X)\big] ~\Big|~h(\Phi(X))=h(\Phi(x))\Big]
\\&= \E\Big[ h(\Phi(X))_i ~\Big|~h(\Phi(X))=h(\Phi(x))\Big]
\\&= h(\Phi(x))_i
\\&= \E\big[ p(Y{=}y_i | X) ~\big|~ \Phi(X)=\Phi(x)\big]
\\&= \phaty(Y{=}y_i | X{=}x).
\end{align*}
In words, conditioning on the output of a calibrated model $\phaty$ instead of on a more refined grouping $\Phi(X)$ only combines equivalence classes $\phi$ that have the same conditional expected value of $p(Y | X)$, so the overall expected value doesn't change in the larger equivalence classes.
\end{proof}

\subsection{Equivalence of Pair Calibration and Second-Order Calibration}

We now prove our main result \cref{thm:cheat_equivalence}, which we restate below:
\RestateThmCheatEquivalence*
\begin{proof}
Consider the mapping $f$ defined by $f(\phatyy) = (\phatym, \sigmacheat)$. We will show that $f$ is a bijection between the set of first-order-calibrated $\phatyy$ and the set of second-order-calibrated $(\phatprimey, \sigmahatprime)$.

Recall that we can decompose the (conditional) covariance into a difference of expectations:
\begin{align*}
\Cov\Big[ \pgt(y|X), \pgt(y'|X) \Big]
&= \E\Big[ \big(\pgt(y|X) - \E[\pgt(y|X)]\big)\big(\pgt(y'|X) - \E[\pgt(y'|X)]\big) \Big]
\\&= \E\Big[ \pgt(y|X) \pgt(y'|X) \Big] - \E\Big[ \pgt(y|X) \Big]\E\Big[ \pgt(y'|X) \Big]
\\&= \E\Big[ P(Y_1 = y|X) P(Y_2 = y'|X) \Big] - \E\Big[ P(Y_1 = y|X) \Big]\E\Big[ P(Y_2 = y'|X) \Big]
\\&= \E\Big[ P(Y_1 = y, Y_2 = y'|X) \Big] - \E\Big[ P(Y_1 = y|X) \Big]\E\Big[ P(Y_2 = y'|X) \Big].
\end{align*}
This also holds when conditioned on a specific equivalence class $X \in [x]_{\Phi}$.

First suppose $\phatyy(y_1, y_2|x)$ is a first-order-calibrated predictor of pairs with grouping function $\Phi$, i.e.
\begin{align*}
\phatyy(y_1, y_2|x) &= \E\big[ P(Y_1=y_1, Y_2=y_2 | X) \bigmid X \in [x]_{\Phi} \big].
\end{align*}
Marginalizing out $y_2$ gives
\begin{align*}
\phatym(y_1|x) &= \sum_{y_2} \E\big[ P(Y_1=y_1, Y_2=y_2 | X) \bigmid X \in [x]_{\Phi} \big]
\\ &= \E\big[ \sum_{y_2} P(Y_1=y_1, Y_2=y_2 | X) \bigmid X \in [x]_{\Phi} \big]
\\ &= \E\big[  P(Y_1=y_1 | X) \bigmid X \in [x]_{\Phi} \big]
\\ &= \E\big[  \pgt(y_1 | X) \bigmid X \in [x]_{\Phi} \big]
\end{align*}
so $\phatym$ is first-order calibrated at predicting $Y$. The same is true for $\phatymtwo$. We then also have
\begin{align*}
\sigmacheat(x)_{y, y'} &= \phatyy(y, y' | x) - \phatym(y|x)\,\phatymtwo(y'|x)
\\&= \E\Big[ P(Y_1=y, Y_2=y' | X) \Bigmid X \in [x]_{\Phi} \Big]
\\&\qquad - \E\Big[ P(Y_1 = y|X) \Bigmid X \in [x]_{\Phi} \Big]\E\Big[ P(Y_2 = y'|X) \Bigmid X \in [x]_{\Phi} \Big]
\\&= \Cov\Big[ \pgt(y|X), \pgt(y'|X)  \Bigmid X \in [x]_{\Phi} \Big].
\end{align*}
Thus $(\phatym, \sigmacheat)$ is second-order calibrated. We conclude that $f(\phatyy)$ is second-order calibrated whenever $\phatyy$ is first-order calibrated (with the same grouping function $\Phi$).

Now consider the mapping $g$ that maps each second-order calibrated $(\phatprimey, \sigmahatprime)$ to the $\phatyy$ given by
\[
\phatyy(y_1, y_2 | x) = \sigmahatprime(x)_{y_1, y_2} + \phatprimey(y_1|x)\,\phatprimey(y_2|x).
\]
If $(\phatprimey, \sigmahatprime)$ are second-order calibrated with respect to grouping function $\Phi$, then we can expand this as
\begin{align*}
\phatyy(y_1, y_2 | x) &= \Cov\Big[ \pgt(y_1|X), \pgt(y_2|X)  \Bigmid X \in [x]_{\Phi} \Big] + \E\big[  \pgt(y_1 | X) \bigmid X \in [x]_{\Phi} \big]\E\big[  \pgt(y_2 | X) \bigmid X \in [x]_{\Phi} \big]
\\&= \E\Big[ P(Y_1=y_1, Y_2=y_2 | X) \Bigmid X \in [x]_{\Phi} \Big].
\end{align*}
This implies that $\phatyy$ is a first-order-calibrated predictor of pairs, and thus that $g(\phatprimey, \sigmahatprime)$ is first-order calibrated whenever $(\phatprimey, \sigmahatprime)$ are second-order calibrated (with the same grouping function $\Phi$).

Finally, to show that $g$ is the inverse of $f$, let $\phatyy$ be an arbitrary first-order calibrated predictor with grouping function $\Phi$, and observe that
\begin{align*}
\Big[g(f(\phatyy))\Big](y, y' | x)
&= \sigmacheat(x)_{y, y'} + \phatym(y|x)\phatym(y'|x)
\\&= \Cov\Big[ \pgt(y|X), \pgt(y'|X)  \Bigmid X \in [x]_{\Phi} \Big]
\\&\qquad\qquad + \E\big[  \pgt(y | X) \bigmid X \in [x]_{\Phi} \big]\E\big[  \pgt(y' | X) \bigmid X \in [x]_{\Phi} \big]
\\&= \E\Big[ P(Y_1=y, Y_2=y' | X) \Bigmid X \in [x]_{\Phi} \Big]
\\&= \phatyy(y, y' | x).
\end{align*}
Thus $g(f(\phatyy)) = \phatyy$, so $f$ is a bijection with inverse $f^{-1} = g$.
\end{proof}

Since predictors can always reduce their expected loss (under a proper scoring rule) by becoming better calibrated on their training task, \cref{thm:cheat_equivalence} implies that our pair-prediction procedure incentivizes second-order calibration.

\subsection{Proofs of Error Bounds for Calibrated Pair Predictors}\label{app:sec:theory_error_bounds}

\RestateThmCheatGrouping*
\begin{proof}
We will show that these properties hold individually for every value of $\PhiYY(X)$ and $\Ydec$ conditioned on $A$, and so they must also hold overall.

Let $\phi$ and $y$ be arbitrary, and $x$ be such that $\Phi(x) = \phi$.
Since $\phatyy$ is calibrated, it must be symmetric, so $\vcheat(y|X) = \sigmacheat(X)_{y,y}$. Furthermore $(\phatym, \sigmacheat)$ must be second-order calibrated, and $A,\Ydec$ are independent of $X$ given $\Phi(X) = \phi$, so
\begin{align*}
\phatym(y|x) &= \E\!\Big[ \pgt(y | X) \Bigmid \PhiYY(X) = \phi \Big]
= \E\!\Big[ \pgt(y | X) \Bigmid \PhiYY(X) = \phi, \Ydec=y, A \Big]
\\
\vcheat(y|x) &= \Var\!\Big[ \pgt(y | X) \Bigmid \PhiYY(X) = \phi \Big]
= \Var\!\Big[ \pgt(y | X) \Bigmid \PhiYY(X) = \phi, \Ydec=y, A \Big]
\\
\end{align*}
Let $B$ be the event where $(\PhiYY(X) = \phi, \Ydec=y, A)$ all occur.

For the first part, we have
\begin{align*}
&\E\Big[
\!\big(\phatym\!(y | X) - \pgt\!(y | X) \big)^2 \!\Bigmid\! B
\Big]
\\&\qquad= \E\left[
\!\left( \E\!\Big[ \pgt(y | X) \Bigmid \PhiYY(X) = \phi, \Ydec=y, A \Big] - \pgt\!(y | X) \right)^2 \middle| B
\right]
\\&\qquad=\Var\!\Big[ \pgt(y | X) \Bigmid \PhiYY(X) = \phi, \Ydec=y, A \Big]
\\&\qquad= \vcheat(y|x) = \E\big[ \vcheat(y|X) \big| B \big]
\end{align*}
where the last step follows because
$x$ is an arbitrary input with $\PhiYY(x) = \phi$ and every such $x$ has the same value for $\vcheat(y | x)$.
Taking expectations over all values of $X$ and $\Ydec$ given $A$ yields the desired result.

For the second part, Chebyshev's inequality ensures that
\begin{align*}
P\left[ |\E[Z] - Z| \ge \sqrt{\frac{\Var(Z)}{\beta}} \right] \le \beta
\end{align*}
for any random variable $Z$ and any $\beta$. Applying this with $Z = \pgt(y | X)$ conditioned on $B$ gives
\begin{align*}
P\left[\Big| \E[\pgt(\Ydec | X) | B] - \pgt(\Ydec | X) \Big| \ge \!\sqrt{\!\frac{\Var\!\Big[ \pgt(y | X) \Bigmid B\Big]}{\beta}}\, \!\middlemid\! B \right] \le \beta.
\end{align*}
so
\begin{align*}
P\left[\Big| \phatym(y|x) - \pgt(\Ydec | X) \Big| \ge \!\sqrt{\!\frac{\vcheat(y|x)}{\beta}}\, \!\middlemid\! B \right] \le \beta.
\end{align*}
We can now marginalize over $\PhiYY(X)$ and $\Ydec$ conditioned on $A$ to obtain the desired result.
\end{proof}

\RestatePropConfBounded*
\begin{proof}
$\Ccheat(y | x) \ge 0$ because both its numerator and denominator are nonnegative. Furthermore, if $\Ccheat(y | x) = 1$, the numerator and denominator must be equal.

To show that $\Ccheat(y | x) \le 1$, algebraic manipulation allows us to write it in the form
\begin{align*}
\Ccheat(y | x) &=
\frac{\phatym(y | x)}{\phatyc(y|y,x)}
= \frac{\phatym(y | x)^2}{\phatyy(y,y|x)}
= \left( \frac{\phatyy(y,y|x)}{\phatym(y | x)^2} \right)^{-1}
\\&= \left(1 + \frac{\phatyy(y,y|x) - \phatym(y | x)^2}{\phatym(y | x)^2} \right)^{-1}
=\left( 1 + \frac{\vcheat(y|x)}{\phatym(y|x)^2} \right)^{-1}.
\end{align*}
If $\phatyy$ is calibrated, it must be symmetric, so $\vcheat(y|X) = \sigmacheat(X)_{y,y}$, which is a conditional variance and thus cannot be negative. It follows that $\frac{\vcheat(y|x)}{\phatym(y|x)^2} \ge 0$, so $\Ccheat(y | x) \le 1$.
\end{proof}

\textbf{Note of caution:}
When $\phatyy$ is \emph{not} calibrated, it is no longer true that $\vcheat(y|X)$ is necessarily equal to $\sigmacheat(X)_{y,y}$, because $\phatyy$ is not necessarily symmetric. It is also not necessarily true that $\sigmacheat(X)_{y,y}$ is an epistemic variance, since \cref{thm:cheat_equivalence} does not hold. This can mean that, for miscalibrated $\phatyy$, it is possible to observe $\vcheat(y|x) < 0$ and $\Ccheat > 1$, and we do observe this in some of our experiments. We discuss this further in \cref{app:experiments}.
\vspace{1em}

\RestateThmConfHallucination*
\begin{proof}
Similar to our proof of \cref{thm:conf_cheat_grouping}, we can prove that this holds individually for every value of $\PhiYY(X)$ and $\Ydec$ conditioned on $A$ and then take an expectation.
As before, let $\phi$ and $y$ be arbitrary, and $x$ be such that $\Phi(x) = \phi$.
By Cantelli's inequality \citep{cantelli1929sui} (also known as the one-sided Chebyshev's inequality),
\begin{align*}
P\left[ Z  \le \E[Z] - \lambda \right] \le \frac{\Var(Z)}{\Var(Z) + \lambda^2}
\end{align*}
for any random variable $Z$ and any $\beta$. Substituting $\lambda = \E[Z]$,
\begin{align*}
P\left[ Z  \le 0 \right] \le \frac{\Var(Z)}{\Var(Z) + \E[Z]^2} = \frac{\E[Z^2] - \E[Z]^2}{\E[Z^2]} = 1 - \frac{\E[Z]^2}{\E[Z^2]}
\end{align*}
Now letting $Z = \pgt(y | X)$ and conditioning on $B = (\PhiYY(X) = \phi, \Ydec=y, A)$ as before, and using the fact that $\phatyy$ is calibrated,
\begin{align*}
P\big[ \pgt(y | X)  \le 0 \bigmid B \big]
&\le 1 - \frac{\E[\pgt(y | X) | B]^2}{\E[\pgt(y | X)^2 | B]}
= 1 - \frac{\E[\pgt(y | X) | \Phi(X) = \phi]^2}{\E[\pgt(y | X)^2 | \Phi(X) = \phi]}
= 1 - \frac{\phatym(y | x)^2}{\phatyy(y, y | x)}
\\&= 1 - \frac{\phatym(y | x)}{\phatyc(y| y, x)}
= 1 - \Ccheat(y | x)
= 1 - \E[\Ccheat(y | X) | B]
\end{align*}
where here $x$ is an arbitrary input with $\PhiYY(x) = \phi$, since every such $x$ has the same value for $\Ccheat(y | x)$. Taking expectations of both sides over all values for $\phi$ and $y$ completes the proof.
\end{proof}

As an aside, we note that it's possible to prove a one-sided bound on $\pgt(y | X)$ by combining the proof ideas from \cref{thm:conf_cheat_grouping} and \cref{thm:conf_hallucination}:
\begin{proposition}\label{app:thm:cantelli_bound}
Suppose $\phatyy$ is calibrated.
Let $A$ be any event and $\Ydec \in \cY$ be any (possibly random) output such that $\Ydec, A \indep X \mid \PhiYY(X)$. Then for any $\beta \in (0,1)$,
\begin{align*}
P\left[
\pgt(y | X)  \le \phatym(y|X) - \sqrt{\vcheat(y|x) \left(\frac{1}{\beta} - 1\right)}
\middlemid A \right] \le \beta.
\end{align*}
\end{proposition}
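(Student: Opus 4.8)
The plan is to reuse, almost verbatim, the conditioning argument behind \cref{thm:conf_cheat_grouping} and \cref{thm:conf_hallucination}: I would establish the bound separately for each fixed value of $\PhiYY(X)$ and each fixed value of $\Ydec$ (conditioned on $A$), and then take an expectation over these to obtain the stated unconditional inequality. Concretely, fix $y \in \cY$ and a value $\phi$ in the range of $\PhiYY$, let $x$ be any input with $\PhiYY(x) = \phi$, and let $B$ denote the event $\{\PhiYY(X) = \phi,\ \Ydec = y,\ A\}$.

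Since $\phatyy$ is calibrated it is symmetric, so $\vcheat(y|X) = \sigmacheat(X)_{y,y}$; by \cref{thm:cheat_equivalence} the pair $(\phatym,\sigmacheat)$ is second-order calibrated with the same grouping function $\Phi$; and the hypothesis $\Ydec, A \indep X \mid \PhiYY(X)$ is exactly what lets me further condition on $\{\Ydec = y, A\}$ without changing the conditional law of $\pgt(y|X)$ given $\PhiYY(X) = \phi$. Hence
\[
\E\big[\pgt(y|X) \mid B\big] = \phatym(y|x), \qquad \Var\big[\pgt(y|X) \mid B\big] = \vcheat(y|x).
\]
Next I would apply Cantelli's inequality $P[Z \le \E[Z] - \lambda] \le \Var(Z)/(\Var(Z)+\lambda^2)$ to $Z = \pgt(y|X)$ under the conditional law given $B$, with the choice $\lambda = \sqrt{\vcheat(y|x)\,(1/\beta - 1)}$; this makes the right-hand side exactly $\beta$, since solving $\Var(Z)/(\Var(Z)+\lambda^2) = \beta$ gives $\lambda^2 = \Var(Z)(1/\beta - 1)$. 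Substituting the conditional moments above yields
\[
P\!\left[\pgt(y|X) \le \phatym(y|x) - \sqrt{\vcheat(y|x)\,(1/\beta - 1)} \;\middlemid\; B\right] \le \beta.
\]

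Finally, every $x$ with $\PhiYY(x) = \phi$ has the same values of $\phatym(y|x)$ and $\vcheat(y|x)$, so the event inside the probability is a function of $\PhiYY(X)$ and $\Ydec$ only; taking the expectation of both sides over $\PhiYY(X)$ and $\Ydec$ conditioned on $A$ recovers the claimed bound. I don't expect a genuine obstacle here — the result is essentially Cantelli's inequality with the conditional mean and variance supplied by second-order calibration — but two minor points deserve care: the degenerate case $\vcheat(y|x) = 0$, where $\lambda = 0$ and the inequality reads $P[\pgt(y|X) \le \phatym(y|x) \mid B] \le \beta$ (which holds since $\pgt(y|X) = \phatym(y|x)$ almost surely on $B$ in that case), and justifying that the independence assumption is precisely what permits reading off $\E$ and $\Var$ on the event $\{\PhiYY(X)=\phi\}$ rather than on the full event $B$.
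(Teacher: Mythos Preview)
Your proposal is correct and follows essentially the same route as the paper: condition on $B = \{\PhiYY(X)=\phi,\ \Ydec=y,\ A\}$, use second-order calibration to identify the conditional mean and variance of $\pgt(y|X)$, apply Cantelli's inequality with $\lambda = \sqrt{\vcheat(y|x)(1/\beta - 1)}$, and average over $\phi$ and $y$ given $A$. One small caveat: your treatment of the degenerate case $\vcheat(y|x)=0$ is not quite right, since with $\lambda=0$ the event $\{\pgt(y|X)\le\phatym(y|x)\}$ has probability $1$, not $\le\beta$; the paper simply does not address this boundary case (Cantelli's inequality is stated for $\lambda>0$), so the cleanest fix is to note that the proposition is intended for the nondegenerate case or with a strict inequality.
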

\begin{proof}
As above, but let $\lambda = \sqrt{\vcheat(y|x) \left(\frac{1}{\beta} - 1\right)}$ in Cantelli's inequality. Then substituting $\vcheat$ as before we obtain
\begin{align*}
P\big[ \pgt(y | X)  \le \phatym(y|X) - \lambda \bigmid B \big]
\le \frac{\vcheat(y|x)}{\vcheat(y|x) + \vcheat(y|x) \left(\frac{1}{\beta} - 1\right) }
= \beta,
\end{align*}
and taking expectations completes the proof.
\end{proof}
This is a better bound than the one in \cref{thm:conf_cheat_grouping} in the case where we want a conservative estimate of how small $\pgt(y | X)$ could be with confidence $1 - \beta$, instead of wanting to bound the distance to $\phatym(y|X)$.

\subsection{Distribution-Free Bounds on $p(Y|X)$}\label{appendix:distnfree_theory}

Suppose $\cY = \{0, 1\}$.
Our distribution-free high-probability bound on $p(Y|X)$ is based on the random variable
\[
D_{\varepsilon} = \frac{\big( \pgt(1 | X) - \phaty(1 | X) \big)^2}{\max \{ \vhat(1 | X), \varepsilon\}}.
\]
$D_{\varepsilon}$ is always nonnegative, and if $\phaty$ and $\vhat$ are second-order well-calibrated (or, more precisely, if $\vhat$ is the diagonal of the epistemic covariance matrix) the expected value $\E[D_\varepsilon]$ should be at most 1.

The following lemma shows that we can use $\E[D_\varepsilon]$ to bound $p(Y|X)$, even if $\phaty$ is not well calibrated:

\begin{lemma}\label{lemma:threshold_bound}
Suppose $\E[D_\varepsilon] \le \gamma_\varepsilon$. Then for a randomly sampled input $X \sim p(X)$ and any $\beta \in [0, 1)$, the true conditional $p(Y=1 | X)$ lies within
\begin{align}
&\phaty(1 | X) \pm \sqrt{\max\{\vhat(Y=1 | X), \varepsilon\}\,\gamma_\varepsilon  / \beta}\label{eqn:distfree_bound}
\end{align}
with probability at least $1 - \beta$.
\end{lemma}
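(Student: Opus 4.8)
The plan is to derive the bound from a single application of Markov's inequality to the nonnegative random variable $D_\varepsilon$, with no structural assumptions on $\pgt$ beyond the hypothesis $\E[D_\varepsilon] \le \gamma_\varepsilon$. First I would observe that $D_\varepsilon$ is well-defined and almost surely nonnegative: the denominator satisfies $\max\{\vhat(1|X),\varepsilon\} \ge \varepsilon > 0$, so division is always legitimate (even at inputs where $\vhat$ happens to vanish or be negative), and the numerator $(\pgt(1|X) - \phaty(1|X))^2$ is a square. Hence $D_\varepsilon \ge 0$.

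Next I would apply Markov's inequality: for any threshold $t > 0$, $P[D_\varepsilon \ge t] \le \E[D_\varepsilon]/t \le \gamma_\varepsilon/t$, where the second inequality is exactly the hypothesis. Taking $t = \gamma_\varepsilon/\beta$ for $\beta \in (0,1)$ gives $P[D_\varepsilon \ge \gamma_\varepsilon/\beta] \le \beta$, equivalently $P[D_\varepsilon < \gamma_\varepsilon/\beta] \ge 1 - \beta$. Finally, on the event $\{D_\varepsilon < \gamma_\varepsilon/\beta\}$ I would unwind the definition of $D_\varepsilon$: multiplying through by the positive denominator yields $(\pgt(1|X) - \phaty(1|X))^2 < \max\{\vhat(1|X),\varepsilon\}\,\gamma_\varepsilon/\beta$, and taking square roots gives $|\pgt(1|X)-\phaty(1|X)| < \sqrt{\max\{\vhat(1|X),\varepsilon\}\,\gamma_\varepsilon/\beta}$, which is precisely the two-sided interval in \eqref{eqn:distfree_bound}. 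The degenerate case $\beta = 0$ is vacuous, since the interval then becomes all of $\R$ and the claimed probability is $1$.

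There is essentially no obstacle in the argument; the only points requiring care are bookkeeping ones. The clamping by $\varepsilon$ is what keeps the ratio finite and the bound meaningful regardless of the quality of $\vhat$, and the assumption $\E[D_\varepsilon] \le \gamma_\varepsilon$ is what substitutes for any distributional assumption on $\pgt$ — it is exactly the quantity that \cref{alg:distfree_bound} will later estimate from a calibration set (with $\gamma_\varepsilon^+$ an upper confidence bound for $\E[D_\varepsilon]$), so this lemma is the population-level core of \cref{thm:distnfreebound}.
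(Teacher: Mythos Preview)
Your proposal is correct and follows essentially the same route as the paper: apply Markov's inequality to the nonnegative variable $D_\varepsilon$, substitute the hypothesis $\E[D_\varepsilon]\le\gamma_\varepsilon$, and unwind the definition to obtain the two-sided interval. Your version is in fact slightly more careful, since you explicitly justify nonnegativity and address the degenerate endpoint $\beta=0$.
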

\begin{proof}
By Markov's inequality, for any $\beta \in [0, 1)$,
$p\Big( D_\varepsilon \ge \E[D_\varepsilon]/\beta \Big) \le \beta.$
In other words, with probability at least $1 - \beta$, $D_\varepsilon < \E[D_\varepsilon]/\beta$. Since $\E[D_\varepsilon] \le \gamma_\varepsilon$, this means that
\begin{align*}
\frac{\big( \pgt(1 | X) - \phaty(1 | X) \big)^2}{\max \{ \vhat(1 | X), \varepsilon\}} <  \gamma_\varepsilon / \beta
\end{align*}
with probability at least $1 - \beta$, in which case
\[
\big| \pgt(1 | X) - \phaty(1 | X) \big| < \sqrt{\max \{ \vhat(1 | X), \varepsilon\} \gamma_\varepsilon / \beta}.\qedhere
\]
\end{proof}

We can use this to prove \cref{thm:distnfreebound}, which we restate below:
\RestateThmDistnFreeBound*
\begin{proof}
It remains to show that the $\gamma_\varepsilon$ produced by \cref{alg:distfree_bound} is an upper bound on $\E[D_\varepsilon]$, at which point we can apply \cref{lemma:threshold_bound}.

Define the random variable
\[
S_\varepsilon = \frac{\big( Y_1 - \phaty(1 | X) \big)\big( Y_2 - \phaty(1 | X) \big)}{\max \{ \vhat(1 | X), \varepsilon\}}
\]
and observe that
\begin{align*}
\E[S_\varepsilon]
&= \E\left[
\frac{\big( Y_1 - \phaty(1 | X) \big)\big( Y_2 - \phaty(1 | X) \big)}{\max \{ \vhat(1 | X), \varepsilon\}}
\right]
\\&= \E\left[
\frac{\E\big[\big( Y_1 - \phaty(1 | X) \big)\big( Y_2 - \phaty(1 | X) \big) ~\big|~ X \big]}{\max \{ \vhat(1 | X), \varepsilon\}}
\right]
\\&= \E\left[
\frac{\big( \E[Y_1|X] - \phaty(1 | X) \big)\big( \E[Y_2|X] - \phaty(1 | X) \big)}{\max \{ \vhat(1 | X), \varepsilon\}}
\right]
\\&= \E\left[
\frac{\big( p(Y=1|X) - \phaty(1 | X) \big)^2}{\max \{ \vhat(1 | X), \varepsilon\}}
\right]
= \E[D_\varepsilon].
\end{align*}
This means that any confidence interval for $\E[S_\varepsilon]$ is also a confidence interval for $\E[D_\varepsilon]$.
Furthermore, we know that $-\frac{1}{\varepsilon} \le S_\varepsilon \le \frac{1}{\varepsilon}$, and we can construct samples of $S_\varepsilon$ by using $\phaty$ and samples $(X, Y_1, Y_2)$, as described in \cref{alg:distfree_bound}.

By assumption, the subroutine $\textsc{MeanConfItvl}$ constructs a confidence interval for the mean of a bounded random variable. In other words, it satisfies the property that, for any bounded i.i.d. random variables $V^{(i)} \in (a, b)$, if we let $(L, U) = \textsc{MeanConfItvl}\big( \{ V^{(1)}, \dots, V^{(N)}\}, a, b, \alpha \big)$, then $L \le \E[V] \le U$ with probability at least $\alpha$. 

\cref{alg:distfree_bound} then applies $\textsc{MeanConfItvl}$ to the samples of $S_\varepsilon^{(i)}$, which are each bounded between $-1/\varepsilon$ and $1/\varepsilon$. As such, we know that the returned value $\gamma_\varepsilon^+$ will satisfy  $\E[S_\varepsilon] \le \gamma_\varepsilon^+$ with probability at least $(1-\alpha)$. This confidence interval must also be a bound on $\E[D_\varepsilon]$, so we can apply \cref{lemma:threshold_bound}, which completes the proof.
\end{proof}

There are multiple possible implementations of the subroutine $\textsc{MeanConfItvl}$, based on different confidence intervals for the mean of a bounded random variable. A particularly simple implementation is based on Hoeffding's inequality \citep{Hoeffding1963ProbabilityIF}:
\begin{proposition}[Confidence interval via Hoeffding's inequality]
In \cref{alg:distfree_bound}, an implementation of subroutine
$\textsc{MeanConfItvl}\big(\{s_\varepsilon^{(i)}\}_{i=1}^N, -\frac{1}{\varepsilon}, \frac{1}{\varepsilon}, \alpha\big)$
that returns the values
\[
\gamma_\varepsilon^- = -\frac{1}{\varepsilon},
\qquad
\gamma_\varepsilon^+ = \frac{1}{N} \sum_{i=1}^N s_\varepsilon^{(i)} + \sqrt{2 \frac{-\log \alpha}{n \varepsilon^2} }
\]
guarantees that $\gamma_\varepsilon^- \le \E[S_\varepsilon] \le \gamma_\varepsilon^+$ with probability at least $1 - \alpha$ (and thus that \cref{thm:distnfreebound} holds).
\end{proposition}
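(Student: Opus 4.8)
The plan is to invoke a \emph{one-sided} Hoeffding bound on the i.i.d.\ samples $s_\varepsilon^{(1)},\dots,s_\varepsilon^{(N)}$ produced in \cref{alg:distfree_bound}, and then pick the slack term so that the failure probability is exactly $\alpha$.

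First I would check boundedness. In \cref{alg:distfree_bound} we have $\hat v^{(i)}_\varepsilon = \max\{\vhat(1\mid x^{(i)}),\varepsilon\}\ge \varepsilon$, and since $y_1^{(i)},y_2^{(i)}\in\{0,1\}$ and $\hat p^{(i)}\in[0,1]$ we get $|y_j^{(i)}-\hat p^{(i)}|\le 1$, so $|s_\varepsilon^{(i)}| = |(y_1^{(i)}-\hat p^{(i)})(y_2^{(i)}-\hat p^{(i)})|/\hat v^{(i)}_\varepsilon \le 1/\varepsilon$. Thus each $s_\varepsilon^{(i)}$ lies in an interval of width $w=2/\varepsilon$; being a deterministic function of the i.i.d.\ calibration triple $(x^{(i)},y_1^{(i)},y_2^{(i)})$, the $s_\varepsilon^{(i)}$ are i.i.d.\ with common mean $\E[S_\varepsilon]$.

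Next, Hoeffding's inequality applied to the sample mean $\bar S := \frac1N\sum_i s_\varepsilon^{(i)}$ gives, for any $t>0$,
\[
P\big(\E[S_\varepsilon] - \bar S \ge t\big) \le \exp\!\left(-\frac{2Nt^2}{w^2}\right) = \exp\!\left(-\frac{Nt^2\varepsilon^2}{2}\right).
\]
Setting the right-hand side equal to $\alpha$ and solving yields $t = \sqrt{2(-\log\alpha)/(N\varepsilon^2)}$, which is exactly the slack term in the stated $\gamma_\varepsilon^+$. Hence with probability at least $1-\alpha$, $\E[S_\varepsilon]\le \bar S + t = \gamma_\varepsilon^+$. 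The lower endpoint needs nothing probabilistic: $S_\varepsilon \ge -1/\varepsilon$ pointwise, so $\E[S_\varepsilon]\ge -1/\varepsilon = \gamma_\varepsilon^-$ always. Intersecting, $\gamma_\varepsilon^-\le \E[S_\varepsilon]\le \gamma_\varepsilon^+$ with probability at least $1-\alpha$, so this choice of $\textsc{MeanConfItvl}$ satisfies the interface assumed in the proof of \cref{thm:distnfreebound}, and that theorem then applies verbatim.

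I do not expect a genuine obstacle; the only points requiring care are (i) using the \emph{one-sided} Hoeffding tail bound rather than the two-sided one, since only $\gamma_\varepsilon^+$ is returned and we want to spend the whole $\alpha$ budget on a single tail, and (ii) tracking constants correctly — the support width is $2/\varepsilon$, not $1/\varepsilon$, and it is the cancellation of the resulting factor $4$ in $w^2$ against the $2$ in the exponent that produces the factor $2$ inside the square root in $\gamma_\varepsilon^+$.
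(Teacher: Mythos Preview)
Your proposal is correct and follows essentially the same route as the paper: a trivial deterministic lower bound from $S_\varepsilon\ge -1/\varepsilon$, and a one-sided Hoeffding tail for the upper bound with $t$ chosen so the failure probability equals $\alpha$. Your write-up is in fact more careful than the paper's, since you explicitly verify the support width $w=2/\varepsilon$ and track how the constants combine to give the factor $2$ under the square root.
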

\begin{proof}
For the lower bound, we know that $\gamma_\varepsilon^- = -\frac{1}{\varepsilon} \le \E[S_\varepsilon]$ due to the boundedness of $S_\varepsilon$. For the upper bound,
Hoeffding's inequality gives us
\[
p\left( \E[S_\varepsilon] - \frac{1}{N} \sum_{i=1}^N s_\varepsilon^{(i)} \ge t \right) \le \exp\left(-\frac{nt^2\varepsilon^2}{2}\right).
\]
Choosing $t = \sqrt{2 \frac{-\log \alpha}{n \varepsilon^2} }$, this becomes
\[
p\left( \E[S_\varepsilon] - \frac{1}{N} \sum_{i=1}^N s_\varepsilon^{(i)} \ge t \right) \le \alpha,
\]
so we must have
\[
\E[S_\varepsilon] \le  \frac{1}{N} \sum_{i=1}^N s_\varepsilon^{(i)} + t
= \frac{1}{N} \sum_{i=1}^N s_\varepsilon^{(i)} + \sqrt{2 \frac{-\log \alpha}{n \varepsilon^2} } = \gamma_\varepsilon^+
\]
with probability at least $1 - \alpha$.
\end{proof}

There are also more complex algorithms which may require fewer samples to give an accurate upper bound. For instance,  \citet{waudby2020estimating} describe an algorithm for constructing tighter confidence intervals based on ``betting strategies''. This algorithm is available in the \texttt{confseq} Python package\footnote{\url{https://github.com/gostevehoward/confseq}}.
See \cref{appendix:distnfree_experiment} for additional experiments studying the convergence of our bound in practice, and comparing the bounds constructed using Hoeffding's inequality to bounds using \texttt{confseq}.

\subsubsection{Is \cref{thm:distnfreebound} the best we can do without distributional assumptions?}\label{appendix:distnfree_best_we_can_do}

\cref{thm:distnfreebound} does not require that the model is perfectly calibrated.
If $\phaty$ and $\vhat$ are actually epistemically perfectly calibrated, and we take $\varepsilon \to 0$, we will have $\E[S_\varepsilon] = \E[D_\varepsilon] \to 1$, so in principle we can make \cref{eqn:distfree_bound} arbitrarily close to \cref{thm:conf_cheat_grouping} by choosing a small enough $\varepsilon$ and a large enough calibration set. (This assumes that the confidence interval for $\E[S_\varepsilon]$ will converge asymptotically to the true value of $\E[S_\varepsilon]$, which is true for both Hoeffding's inequality and the betting-based algorithms in \texttt{confseq}). 

Even so, the guarantee provided by \cref{thm:distnfreebound} is somewhat weaker than that of \cref{thm:conf_bounded}, because the $1-\beta$ chance only holds for random $X \sim p(X)$ and may not hold after conditioning on additional information (e.g. the event $A$, which can be any function of the output of the model). To give some intuition of why this occurs, suppose we perturb a calibrated model with a tiny amount of per-input noise, e.g. $\phaty(y|X) = p(y|\Phi(X)) + \eta(X)$. Even if $\eta(X)$ is very small, conditioning on $\phaty(y|X)$ may then be enough to identify $X$ itself, and if there is a single such $X$ that is outside of the range given by \cref{thm:conf_bounded}, the stronger statement will no longer hold. One way to circumvent this in principle would be to explicitly bin the outputs of a model to a finite number of outputs, similar to the method proposed by \citet{kumar2019verified}; it would then be possible to construct a separate bound for each bin. Effectively, this would mean that we enumerate all of the events $A$ that we care about in advance, and then apply \cref{thm:distnfreebound} separately to each subset of the dataset. (Note that neither bound holds conditioned on $X$ itself, because once $X$ is observed then either $p(Y|X)$ is in the interval or it is not, so the conditional probability is either 0 or 1, not $1 - \beta$. This is why the event in \cref{thm:conf_bounded} must be conditionally independent of $X$ given the output of the model.)

An interesting point of comparison is Theorem 1 of \citet{barber2020distribution}, which states that any distribution-free $(1-\alpha)$-confidence interval for the probability $p(Y=1|X)$ must also be a $(1-\alpha)$-confidence interval for any random variable $Z \in [0, 1]$ for which $\E[Z | X] = p(Y|X)$, as long as the interval was constructed using only one sample $Y \sim p(Y|X)$ for each $X$ (and as long as $Z$ is conditionally independent of the interval-construction procedure given $X$). In particular, if we choose $Z = Y$, this means that any distribution-free $(1-\alpha)$-confidence interval must contain $0$ or $1$ with probability at least $(1-\alpha)$, and so the interval cannot precisely identify $p(Y|X)$ if $p(Y|X)$ is bounded away from 0 and 1.

We can roughly interpret \citeauthor{barber2020distribution}'s theorem as stating that  distribution-free confidence intervals constructed using \emph{one} $Y$ for each $X$ can only effectively estimate the \emph{first} moment of $p(Y|X)$, and must be wide enough to contain the worst-case variable $Z$ with the correct expected value. Our \cref{thm:distnfreebound}, on the other hand, uses \emph{two} $Y$s for each $X$, and converges to a bound based on the first \emph{two} moments (mean and variance). Moreover, \cref{thm:cheat_equivalence} suggests that two samples may in fact be necessary to estimate the second moment in this manner.
We conjecture that this is a general constraint for distribution-free confidence intervals based on samples: if we are allowed to use $k$ samples $Y_1, \dots, Y_k \sim p(Y|X)$ for each $X$, it seems likely that only the first $k$ moments can be identified in a distribution-free way, and thus that our confidence intervals must be wide enough to contain any random variable $Z$ with the same first $k$ moments as the true probability $p(Y|X)$ conditioned on the output of our model or algorithm. If true, this would suggest that we can't do much better than \cref{thm:distnfreebound} with only two samples of $Y$ for each $X$, unless we are willing to make distributional assumptions about the form of $p(Y|X)$, but we might be able to do better with more than two samples.

We also note that if you want a one-sided bound instead of a two-sided bound, it is possible to do better than Chebyshev's inequality by instead using Cantelli's inequality \citep{cantelli1929sui}. This inequality was used to prove \cref{thm:conf_hallucination,app:thm:cantelli_bound}, and it could likely be generalized to apply without assuming calibration using a similar technique to the proof of \cref{thm:distnfreebound}.

\clearpage
\section{Properties of Calibrated Models of Pairs}\label{appendix:properties_calibrated_of_pairs}
In this section we derive some properties that any calibrated model $\hat{p}_\theta(Y_1, Y_2 | X)$ must satisfy, which can be useful when designing neural network architectures for pair prediction.

\begin{proposition}\label{prop:properties_of_calibrated_pair_models}
Suppose $|\cY| = K$, and order it as $\cY = \{v_1, \dots, v_K\}$. If $\hat{p}_\theta(Y_1, Y_2 | X)$ is a perfectly-calibrated predictor of outcomes $(Y_1, Y_2) \in \cY \times \cY$, then
\begin{enumerate}[(i)]
    \item $\hat{p}_\theta(y_1, y_2 | x)$ is a proper probability distribution, i.e. $\hat{p}_\theta(y_1, y_2 | x) \ge 0$ for all $x \in \cX, y_1, y_2 \in \cY$ and $\sum_{y_1, y_2} \hat{p}_\theta(y_1, y_2 | x) = 1$ for all $x \in \cX$,
    \item $\hat{p}_\theta(y_1, y_2 | x)$ is symmetric, i.e. $\hat{p}_\theta(Y_1 = y_1, Y_2 = y_2 | x) = \hat{p}_\theta(Y_1=y_2, Y_2=y_1 | x)$,
    \item The joint probability matrix $\hat{P}^{[x]} \in \R^{K \times K}$ given by
    $\hat{P}_{ij}^{[x]} = \hat{p}_\theta(Y_1 = v_i, Y_2 = v_j | x)$ is positive semidefinite for each $x \in \cX$.
\end{enumerate}
\end{proposition}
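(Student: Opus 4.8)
The plan is to derive all three properties directly from the definition of first-order calibration together with the i.i.d.\ sampling convention $Y_1, Y_2 \simiid \pgt(\cdot \mid X)$ under which the pair model is trained. By \cref{defn:calibration}, if $\phatyy$ is calibrated there is a grouping function $\Phi$ with
\begin{align*}
\phatyy(y_1, y_2 \mid x) = \E\big[ P(Y_1 = y_1, Y_2 = y_2 \mid X) \bigmid X \in [x]_\Phi \big] = \E\big[ \pgt(y_1 \mid X)\,\pgt(y_2 \mid X) \bigmid X \in [x]_\Phi \big].
\end{align*}
Thus $\phatyy(\cdot,\cdot\mid x)$ is a conditional expectation --- i.e.\ a convex combination --- of the rank-one outer products $\pgtvec(\cdot\mid X)\,\pgtvec(\cdot\mid X)^\T$ for $X$ ranging over $[x]_\Phi$. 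Each of the three claimed properties is preserved under such convex combinations, so it suffices to verify it for a single outer product and then pass to the expectation by linearity.

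For (i): each term $\pgt(y_1\mid X)\,\pgt(y_2\mid X)$ is nonnegative, hence so is $\phatyy(y_1,y_2\mid x)$; and $\sum_{y_1,y_2}\pgt(y_1\mid X)\,\pgt(y_2\mid X) = \big(\sum_{y_1}\pgt(y_1\mid X)\big)\big(\sum_{y_2}\pgt(y_2\mid X)\big) = 1$ pointwise in $X$, so linearity of expectation gives $\sum_{y_1,y_2}\phatyy(y_1,y_2\mid x) = 1$. For (ii): the integrand is invariant under swapping $y_1 \leftrightarrow y_2$, so its conditional expectation is symmetric as well.

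For (iii): fix $x$ and an arbitrary $\vecc \in \R^K$, and expand the quadratic form,
\begin{align*}
\vecc^\T \hat{P}^{[x]} \vecc = \sum_{i,j} c_i c_j\, \E\big[ \pgt(v_i\mid X)\,\pgt(v_j\mid X) \bigmid X\in[x]_\Phi\big] = \E\Big[ \big(\textstyle\sum_i c_i\, \pgt(v_i\mid X)\big)^2 \bigmid X\in[x]_\Phi \Big] \ge 0,
\end{align*}
where the middle equality pulls the finite sum inside the expectation and completes the square. Since $\vecc$ was arbitrary, $\hat{P}^{[x]}$ is positive semidefinite.

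There is no substantive obstacle here; the argument is essentially a one-line consequence of calibration plus independence. The only point meriting care is the interpretation of ``calibrated predictor of outcomes $(Y_1,Y_2)$'': properties (ii) and (iii) genuinely use that the \emph{true} joint being approximated is the symmetric rank-one matrix $\pgtvec(\cdot\mid X)\pgtvec(\cdot\mid X)^\T$, so one should be explicit that the relevant calibration target is $\pgt(y_1\mid X)\pgt(y_2\mid X)$ (consistent with \cref{thm:cheat_equivalence}); property (i), by contrast, holds for \emph{any} calibrated predictor of a joint distribution, regardless of independence.
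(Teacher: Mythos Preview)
Your proof is correct and follows essentially the same approach as the paper: both derive (i)--(ii) directly from the calibration identity $\phatyy(y_1,y_2\mid x)=\E[\pgt(y_1\mid X)\pgt(y_2\mid X)\mid X\in[x]_\Phi]$, and both prove (iii) by writing $\hat{P}^{[x]}$ as the conditional expectation of the outer product $\pgtvec\pgtvec^\T$ and checking $\vecc^\T\hat{P}^{[x]}\vecc=\E[(\vecc^\T\pgtvec)^2\mid X\in[x]_\Phi]\ge 0$. Your additional remark distinguishing which properties rely on the i.i.d.\ structure versus generic calibration is a nice clarification not present in the paper's version.
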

\begin{proof}
Since $\hat{p}_\theta$ is perfectly calibrated, there exists a grouping function $\Phi$ such that
\begin{align*}
\hat{p}_\theta(Y_1=y_1, Y_2=y_2 | X=x)
&= \E\big[ p(Y_1=y_1, Y_2=y_2 | X) ~\big|~ \Phi(X)=\Phi(x)\big]
\\&= \E\big[ p(Y=y_1 | X)p(Y=y_2 | X) ~\big|~ \Phi(X)=\Phi(x)\big],
\end{align*}
which implies properties (i) and (ii).

If we let $\bm{p}^{[x]} \in \R^K$ be the vector such that $\bm{p}^{[x]}_k = p(Y=v_k|X=x)$, we can write this in matrix form as
\[
\hat{P}^{[x]} = \E\Big[ \bm{p}^{[x]} \big(\bm{p}^{[x]}\big)^T ~\Big|~ \Phi(X)=\Phi(x)\Big].
\]
For any $\bm{v} \in \R^K$, we must then have
\[
\bm{v}^T\hat{P}^{[x]}\bm{v}
= \E\Big[ \bm{v}^T\bm{p}^{[x]} \big(\bm{p}^{[x]}\big)^T\bm{v} ~\Big|~ \Phi(X)=\Phi(x)\Big]
= \E\Big[ \big(\bm{v}^T\bm{p}^{[x]}\big)^2 ~\Big|~ \Phi(X)=\Phi(x)\Big] \ge 0,
\]
so $\hat{P}^{[x]}$ is positive semidefinite.
\end{proof}

We note that a nonnegative matrix satisfying (ii) and (iii) is known as a ``doubly nonnegative'' matrix, and if $p(Y|X)$ takes only finitely many values the matrix $P^{[x]}$ will also be ``completely positive'' (i.e. factorizable as $P^{[x]} = B^T B$ where $B$ is entrywise nonnegative) \citep{berman2003completely}.

If $Y$ is a binary outcome, we can characterize the space of calibrated predictors even more precisely:
\begin{proposition}\label{prop:properties_binary}
If $\hat{p}_\theta(Y_1, Y_2 | X)$ is a perfectly-calibrated predictor of paired binary outcomes $(Y_1, Y_2) \in \{0,1\} \times \{0, 1\}$, then the matrix $\hat{P}^{[x]}$ can be written in the form

\begin{align}
\hat{P}^{[x]} = 
\rho(x) \begin{bmatrix}
1 - \mu(x) & 0 \\
0 & \mu(x)
\end{bmatrix} + \left(1 - \rho(x)\right) \begin{bmatrix}
(1 - \mu(x))^2 & \mu(x)(1 - \mu(x)) \\
\mu(x)(1 - \mu(x)) & \mu(x)^2
\end{bmatrix}
\label{eqn:binary_kappa_form}
\end{align}

for some $\mu : \cX \to [0, 1]$ and  $\rho : \cX \to [0, 1]$.
\end{proposition}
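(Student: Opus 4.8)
The plan is to reduce the $2\times 2$ matrix identity to a single scalar identity by conditioning on the equivalence class, and then assign $\mu$ and $\rho$ explicitly. Since $\hat p_\theta(Y_1,Y_2\mid X)$ is calibrated, \cref{prop:properties_of_calibrated_pair_models} (or the same computation that proves it) gives a grouping function $\Phi$ with
\[
\hat P^{[x]} = \E\big[\, \bm p^{[x]}(\bm p^{[x]})^\T \bigmid \Phi(X)=\Phi(x)\,\big],
\qquad \bm p^{[x]} = (1-Q,\ Q)^\T,\quad Q \triangleq \pgt(1\mid X).
\]
Writing $\mu(x) \triangleq \E[Q\mid \Phi(X)=\Phi(x)] = \phatym(1\mid x)$ and $\sigma^2(x) \triangleq \Var[Q\mid \Phi(X)=\Phi(x)]$ (which equals $\vcheat(1\mid x)$ by symmetry of the calibrated predictor), expanding the outer product and taking expectations gives the four entries of $\hat P^{[x]}$: the diagonal entries are $\E[(1-Q)^2] = (1-\mu)^2 + \sigma^2$ and $\E[Q^2] = \mu^2 + \sigma^2$, and the off-diagonal entry is $\E[Q(1-Q)] = \mu(1-\mu) - \sigma^2$.

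Next I would define $\mu(x)$ as above and set $\rho(x) \triangleq \sigma^2(x)\big/\big(\mu(x)(1-\mu(x))\big)$ when $\mu(x)\in(0,1)$, and $\rho(x)\triangleq 0$ (arbitrarily) when $\mu(x)\in\{0,1\}$. It then remains pure bookkeeping: expanding the right-hand side of \cref{eqn:binary_kappa_form}, the ``diagonal'' matrix contributes $\rho\,\mathrm{diag}(1-\mu,\mu)$ and the rank-one matrix contributes $(1-\rho)\,\bm m\bm m^\T$ with $\bm m = (1-\mu,\mu)^\T$, and every one of the four resulting entries matches the corresponding entry of $\hat P^{[x]}$ exactly when $\sigma^2 = \rho\,\mu(1-\mu)$, which holds by construction. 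In the degenerate cases $\mu\in\{0,1\}$ the random variable $Q$ is a.s.\ constant, so $\sigma^2 = 0$ and both matrices on the right-hand side collapse to $\hat P^{[x]}$ regardless of $\rho$, so the choice $\rho = 0$ is fine.

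The only genuine content — and the step I would be most careful about — is verifying that $\mu(x),\rho(x)\in[0,1]$ so that the claimed representation really is of the asserted form. Here $\mu(x)\in[0,1]$ because it is the mean of a $[0,1]$-valued random variable, and $\rho(x)\ge 0$ because $\sigma^2\ge 0$ and $\mu(1-\mu) > 0$ on $(0,1)$. The upper bound $\rho(x)\le 1$ is \emph{not} automatic: it is equivalent to $\sigma^2 \le \mu(1-\mu)$, i.e.\ to $\mu(1-\mu) - \sigma^2 \ge 0$. But that quantity is precisely the off-diagonal entry $\E[Q(1-Q)]$ computed above, and $Q\in[0,1]$ forces $Q(1-Q)\ge 0$ pointwise, hence $\E[Q(1-Q)]\ge 0$. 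So $\rho(x)\le 1$, which finishes the proof. In short, the whole argument is routine matrix bookkeeping once one spots that $\rho\le 1$ comes for free from nonnegativity of $\E[Q(1-Q)]$ and that the points $\mu\in\{0,1\}$ need $\rho$ assigned by hand.
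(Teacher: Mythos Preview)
Your proof is correct and defines the same $\mu$ and $\rho$ as the paper, but your justification of the bounds takes a slightly different (and arguably cleaner) route. The paper writes $\hat P^{[x]}=\begin{pmatrix}a&b\\b&c\end{pmatrix}$, sets $\mu=b+c$ and $\rho=(ac-b^2)/\big((a+b)(b+c)\big)=1-b/(\mu(1-\mu))$, then gets $\rho\ge 0$ from positive semidefiniteness of $\hat P^{[x]}$ (the determinant $ac-b^2\ge 0$) and $\rho\le 1$ from entrywise nonnegativity ($ac-b^2\le ac\le(a+b)(b+c)$). You instead keep the random variable $Q=\pgt(1\mid X)$ in play throughout: $\rho\ge 0$ falls out of $\sigma^2\ge 0$, and $\rho\le 1$ from the pointwise bound $Q(1-Q)\ge 0$ for $Q\in[0,1]$. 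Your argument therefore avoids appealing to the PSD property, is a bit more self-contained, and also handles the degenerate endpoints $\mu\in\{0,1\}$ explicitly (which the paper's formula leaves as a $0/0$). The tradeoff is that the paper's version shows the decomposition holds for \emph{any} symmetric, nonnegative, PSD probability matrix (whether or not it arises from calibration), while yours uses the calibration hypothesis more heavily.
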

\begin{proof}
Fix a particular $x$. By \cref{prop:properties_of_calibrated_pair_models} we know we can write
\[
\hat{P}^{[x]} = 
\begin{bmatrix}
a & b \\
b & c
\end{bmatrix}
\]
for some nonnegative $a, b, c \in \R$ such that $a + 2b + c = 1$. Choose
\begin{align}
    \mu(x) &= b + c,
    &
    \rho(x) &= \frac{a c - b^2}{(a+b)(b+c)}
    = 1 - \frac{b}{\mu(1-\mu)}.
\end{align}
Substituting shows that $\hat{P}^{[x]}$ can then be expressed as \cref{eqn:binary_kappa_form}. $0 \le \mu(x) \le 1$ because $b$ and $c$ are nonnegative and sum to at most 1. $\rho(x) \le 1$ because $ac - b^2 \le ac \le (a+b)(c+d)$. Finally, since $\hat{P}^{[x]}$ must be positive semidefinite, the determinant $|\hat{P}^{[x]}| = ac - b^2$ must be nonnegative, so $\rho(x) \ge 0$.
\end{proof}

Note that $\mu$ is the predicted probability of $Y=1$, and $\rho$ is the predicted correlation between $Y_1$ and $Y_2$. (In fact, $\rho$ is exactly the Pearson correlation coefficient of $Y_1$ and $Y_2$ given $\Phi(X)$, also referred to as the ``Phi coefficient'' \citep{kotz_phicoefficient_2005}.) Given this parameterization, we can efficiently compute
\[
\hat{p}_\theta(Y=1|X=x) = \mu(x),
\qquad
v^\textsc{Cheat}_{\theta}(Y=1 | X=x) = \kappa(x) \mu(x)(1 - \mu(x)).
\]

\textbf{Neural network architectures for $\cY = \{0,1\}$:} When we know $Y$ is a binary outcome, we suggest parameterizing the output head of a pair predictor $\hat{p}_\theta(y_1, y_2 | x)$ using \cref{eqn:binary_kappa_form}. Specifically, we can parameterize our model to produce two-dimensional vectors $h_\theta: \cX \to \R^2$, then set $\phi(x) = \sigma(h_\theta(x)[0])$, $\rho(x) = \sigma(h_\theta(x)[1])$, where $\sigma$ is the logistic sigmoid function.

\textbf{Neural network architectures for enumerable $\cY = \{0,1, \dots, K\}$:}
For classification tasks, where $\cY$ is a finite (and ``reasonably-sized'') set of classes, we suggest using the properties in \cref{prop:properties_of_calibrated_pair_models} to design the architecture. In particular, we can enforce property (i) by applying the \texttt{softmax} operation across the set of $\cY \times \cY$ possible outputs, and enforce property (ii) by constraining the output layer to output a symmetric matrix of logits $\R^{\cY \times \cY}$ before applying the softmax operation.

We are not aware of a simple method for strictly enforcing property (iii) as part of the architecture while simultaneously ensuring that property (i) holds. However, empirically we observe that violations of property (iii) can lead to unreasonable negative variance estimates. We thus suggest computing the eigenvalues of the post-softmax matrix $\hat{P}^{[x]}$ and adding a regularization penalty to negative eigenvalues, e.g.
\[
\mathcal{L}_\text{regularized}(x, y_1, y_2) = - \log \hat{p}_\theta(y_1, y_2 | x) + \alpha \sum_{i=1}^K \max\{0, \lambda_i(x)\}^2
\]
where $\lambda_i(x)$ is the $i$th eigenvalue of $\hat{P}^{[x]}$. (Since this regularization  penalty only applies to negative eigenvalues, and a calibrated model should never produce negative eigenvalues, this regularization penalty should not change the optimal calibrated solution if one exists.)

\textbf{Neural network architectures for sequential or exponentially-large $\cY$:}
When $\cY$ is an exponentially large set, such as the set of all sequences, it may be intractable to enforce either condition (ii) or condition (iii) of \cref{prop:properties_of_calibrated_pair_models}. For our experiments, we settled on only enforcing property (i) by concatenating the two outputs $Y_1$ and $Y_2$ together. We found that padding them to a constant length improved performance by ensuring that $Y_1$ and $Y_2$ each have consistent positional embeddings, because othewise the positional shift in $Y_2$ can make it harder to predict $Y_2$ than $Y_1$ and thus introduce additional noise into the confidence metric.
We believe adjusting the architecture for sequence models to enforce (or encourage) it to satisfy properties (ii) and (iii) is an exciting area for future work.
\newpage
\section{Details of Experimental Results}\label{app:experiments}

\subsection{One-Dimensional Binary Regression (\cref{fig:intro_1d_problem})}\label{appendix:training_1d_regression} 

\subsubsection{Data Distribution}
We choose $p(X)$ as a standard normal random variable $\mathcal{N}(0, 1)$, and define $p(Y|X)$ as a Bernoulli distribution with
\begin{align*}
p(Y=1 | X=x) &= \frac{0.98\, u(x) + 1}{2},\\
u(x) &= 0.6 \cos(v(x)) + 0.4 \cos(4.2 x),\\
v(x) &= \text{sign}(x) \cdot (120 |x| - 112 w(|x|) - 0.0635).\\
w(z) &= 0.2 \log\Big(1 + \exp\big((z - 1.0)/.2\big)\Big)\\
\end{align*}
This function was chosen to have higher-frequency variation near $x=0$ with a lower-frequency component throughout.

We construct a dataset of 25,000 samples of $X$, each of which have two corresponding samples $Y_1, Y_2 \sim_{i.i.d.} p(Y|X)$, for a total of 50,000 $Y$s.

\subsubsection{Architectures and Training Details}

For the NN Ensemble, Evidential NN, and Cheat-corrected NN models, we use a small MLP/LayerNorm/Residual architecture inspired by the MLP blocks in a Transformer \citep{vaswani2017attention}, with the following form:

\begin{algorithm}[h]
   \caption{NN architecture for 1D Binary Regression}
   \label{appendix:alg:1d_regression_nn_arch}
\begin{algorithmic}
   \STATE {\bfseries Input:} value $x \in \R$, output dimension $d$
   \STATE \textcolor{gray}{\emph{Input layer:}}
   \STATE $\bm{v}^{(0)} := \bm{w}^{(0,a)} \odot (x \cdot \bm{1} + \bm{b}^{(0,a)})$
        \qquad\textcolor{gray}{where $\bm{w}^{(0,a)} \in \R^{512}, \bm{b}^{(0,a)}\in \R^{512}$}
   \STATE $\bm{r}^{(0)} := \bm{W}^{(0,b)} \text{relu}(\bm{v}^{(0)}) + \bm{b}^{(1,b)}$
        \qquad\textcolor{gray}{where $\bm{W}^{(0,b)} \in \R^{128 \times 512}, \bm{b}^{(0,b)}\in \R^{128}$}
   \FOR{$i=1$ {\bfseries to} $3$}
       \STATE \textcolor{gray}{\emph{Residual block:}}
       \STATE $\bm{u}^{(i)} := \text{LayerNorm}^{(i)}(\bm{v}^{(i-1)})$
            \qquad\textcolor{gray}{with learnable scale and shift \citep{ba2016layer}}
       \STATE $\bm{v}^{(i)} := \bm{W}^{(i,a)} \bm{u}^{(i)} + \bm{b}^{(i,a)}$
            \qquad\textcolor{gray}{where $\bm{W}^{(i,a)} \in \R^{512 \times 128}, \bm{b}^{(i,a)}\in \R^{512}$}
       \STATE $\bm{r}^{(i)} := \bm{r}^{(i-1)} + \bm{W}^{(i,b)} \text{relu}(\bm{v}^{(i)}) + \bm{b}^{(i,b)}$
            \qquad\textcolor{gray}{where $\bm{W}^{(i,b)} \in \R^{128 \times 512}, \bm{b}^{(i,b)}\in \R^{128}$}
   \ENDFOR
   \STATE \textcolor{gray}{\emph{Output head:}}
   \STATE $\bm{u}^{(4)} := \text{LayerNorm}^{(4)}(\bm{v}^{(3)})$
            \qquad\textcolor{gray}{with learnable scale and shift}
   \STATE $\bm{v}^{(4)} := \bm{W}^{(4,a)} \bm{u}^{(4)} + \bm{b}^{(4,a)}$
        \qquad\textcolor{gray}{where $\bm{W}^{(4,a)} \in \R^{512 \times 128}, \bm{b}^{(4,a)}\in \R^{512}$}
   \STATE $\bm{o}^{(4)} := \bm{W}^{(4,b)} \text{relu}(\bm{v}^{(4)}) + \bm{b}^{(4,b)}$
        \qquad\textcolor{gray}{where $\bm{W}^{(4,b)} \in \R^{d \times 512}, \bm{b}^{(4,b)}\in \R^{d}$}
    \STATE \textbf{Return} $\bm{o}^{(4)}$
\end{algorithmic}
\end{algorithm}

\textbf{NN Ensemble}: We randomly initialize 8 copies of the architecture with output dimension $d=1$, then train each for 10,000 training iterations with a batch size of 512, randomly selecting $(X, Y_1, Y_2)$ triples from the 25,000 training examples. For each example $(x, y_1, y_2)$ we use the loss
\[
\mathcal{L}_\textsc{NN}(x, y_1, y_2, \theta_i) = \frac{1}{2}\sum_{i=1}^2 - \log \hat{p}_{\theta_i}(Y=y_i | X = x)
\]
where $\hat{p}_{\theta_i}(Y=1|X=x) = \sigma(h_{\theta_i}(x))$, $\sigma$ is the logistic sigmoid function, and $h_{\theta_i}$ is the network defined in \cref{appendix:alg:1d_regression_nn_arch}. We use the AdamW optimizer \citep{loshchilov2017decoupled} with a 100-step warmup to a 0.002 learning rate, followed by cosine decay.

\textbf{Evidential NN}: Following \citep{Sensoy2018EvidentialDL}, we set the output dimension to $d=2$ and interpret $\bm{\alpha}(x) = 1 + \texttt{softplus}(h_\theta(x))$ as the parameters of a 2-class Dirichlet distribution. (We use \texttt{softplus} rather than \texttt{relu} to stabilize learning, since otherwise we observed that output units would ``die'' and produce bad estimates.)

We then apply the regularized cross-entropy loss described by \citet{Sensoy2018EvidentialDL}:
\begin{align*}
\mathcal{L}_\textsc{EDL}(x, y_1, y_2, \theta)
&= \frac{1}{2}\sum_{i=1}^2 \left[
\E_{q \sim \text{Dirichlet}(\bm{\alpha}(x))}[ -\log q(y_i) ]
+ \lambda D_{KL}\big(\,\text{Dirichlet}(\tilde{\bm{\alpha}}(x, y_i)) ~\|~ \text{Dirichlet}([1, 1])\,\big)
\right]
\\&= \frac{1}{2}\sum_{i=1}^2 \left[
\psi( \bm{1}^T \bm{\alpha}(x) ) - \psi( \bm{e}_{y_i}^T \bm{\alpha}(x) )
+ \lambda D_{KL}\big(\,\text{Dirichlet}(\tilde{\bm{\alpha}}(x, y_i)) ~\|~ \text{Dirichlet}([1, 1])\,\big)
\right],
\end{align*}
where $\bm{\alpha}(x)$ is the two-dimensional vector of model outputs, $\bm{e}_{y_i}$ is a one-hot indicator vector (either [1,0] or [0,1] depending on $y_i$), $\psi$ is the digamma function, and $\tilde{\bm{\alpha}}(x, y_i) = \bm{e}_{y_i} + (1 - \bm{e}_{y_i}) \odot \bm{\alpha}(x)$ is a vector where the Dirichlet parameter for the correct label has been replaced with 1.

Similar to the NN ensemble, we train the model for 10,000 training iterations with a batch size of 512, randomly selecting $(X, Y_1, Y_2)$ triples from the 25,000 training examples, and use the AdamW optimizer \citep{loshchilov2017decoupled} with a 100-step warmup to a 0.002 learning rate, followed by cosine decay. We interpolate $\lambda$ from 0 to 1 over 5,000 training steps, based on the recommended values for $\lambda$ in \citep{Sensoy2018EvidentialDL}.

\citet{Sensoy2018EvidentialDL} suggest using the magnitude of $\bm{\alpha}(x)$ as a measurement of evidence, with the uncertainty corresponding to the value $2 / \bm{1}^T \bm{\alpha}(x)$ (for two classes). However, in order to treat Evidential Deep Learning in the same way as other uncertainty estimates, we instead use the variance of the predicted probability under $\text{Dirichlet}(\bm{\alpha}(x))$ as our measurement of uncertainty. 

In \cref{fig:intro_1d_problem}, we plot the mean probability $\hat{p}(x)$ under the distribution $\text{Dirichlet}(\bm{\alpha}(x))$, as well as the variance $\hat{v}(x) = \frac{\hat{p}(x) (1 - \hat{p}(x))}{S(x) + 1}$. We additionally trained variants of EDL with different maximum values for $\lambda$, and found that the magnitude of the variance estimate is highly sensitive to this, as we demonstrate in \cref{appendix:fig:edl_kl_regularization}. The model in \cref{fig:intro_1d_problem} uses $\lambda = 1.0$. (We also tried applying EDL with the MSE loss, as suggested by \citet{Sensoy2018EvidentialDL}, but saw roughly identical behavior.)

\begin{figure*}[t!]
    \centering
\includegraphics[width=\linewidth]{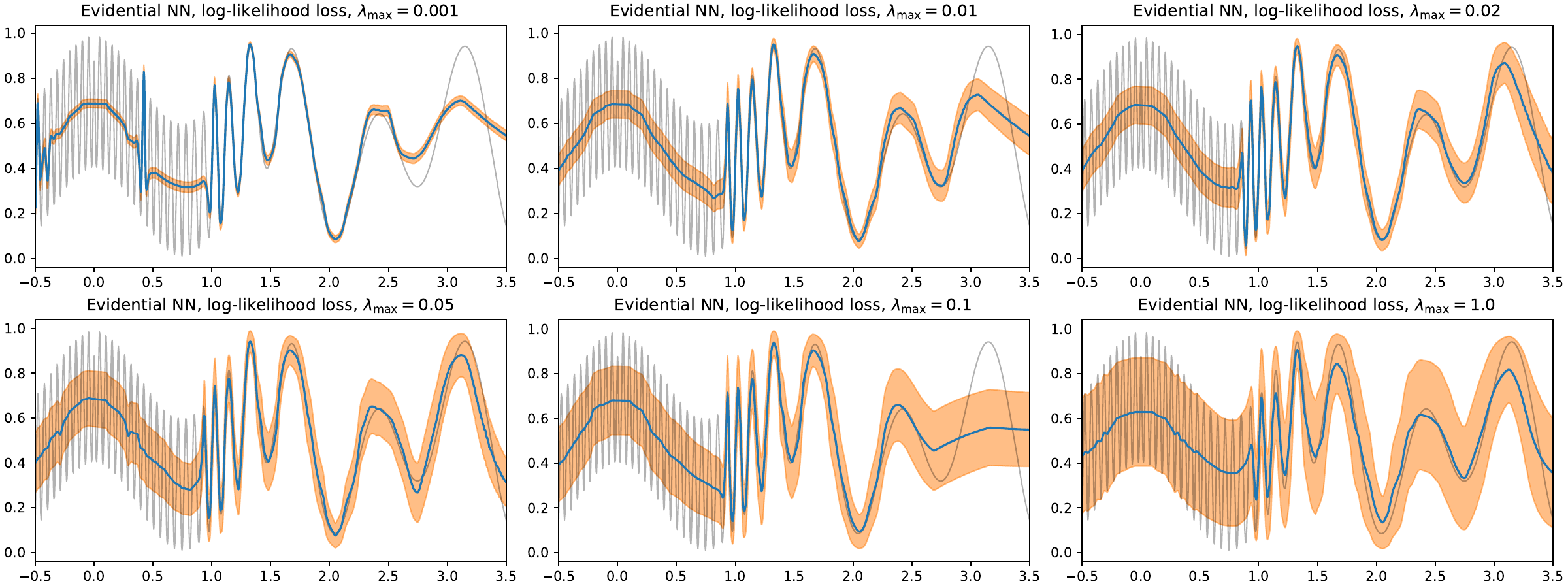}
    \vspace{-2em}
    \caption{Visualization of the dependence of the evidential deep learning technique \citep{Sensoy2018EvidentialDL} on the final regularization strength $\lambda_{\text{max}}$.  \citet{Sensoy2018EvidentialDL} recommend setting $\lambda_{\text{max}}=1$, which leads to high-uncertainty predictions even in regions that the model can fit well. We are unable to find any $\lambda_{\text{max}}$ value that allows the model to identify underfitting.}
    \label{appendix:fig:edl_kl_regularization}
\end{figure*}

\textbf{Cheat-corrected NN}: We parameterize our version of the architecture by setting $d=2$ and applying the decomposition in \cref{eqn:binary_kappa_form} of \cref{appendix:properties_calibrated_of_pairs}. We then train it to predict pairs using the loss
\[
\mathcal{L}_\textsc{Cheat}(x, y_1, y_2, \theta) = - \log \hat{p}_{\theta}(Y_1=y_1, Y_2=y_2 | X = x).
\]
We again train for 10,000 training iterations with a batch size of 512, randomly selecting $(X, Y_1, Y_2)$ triples from the 25,000 training examples, and use the AdamW optimizer \citep{loshchilov2017decoupled} with a 100-step warmup to a 0.002 learning rate, followed by cosine decay.
We then compute $\hat{p}_\theta(Y=1|X=x)$ and $v^\textsc{Cheat}_{\theta}(Y=1 | X=x)$ as described in \cref{appendix:properties_calibrated_of_pairs}.

\begin{figure*}[t!]
    \centering
\includegraphics[width=\linewidth]{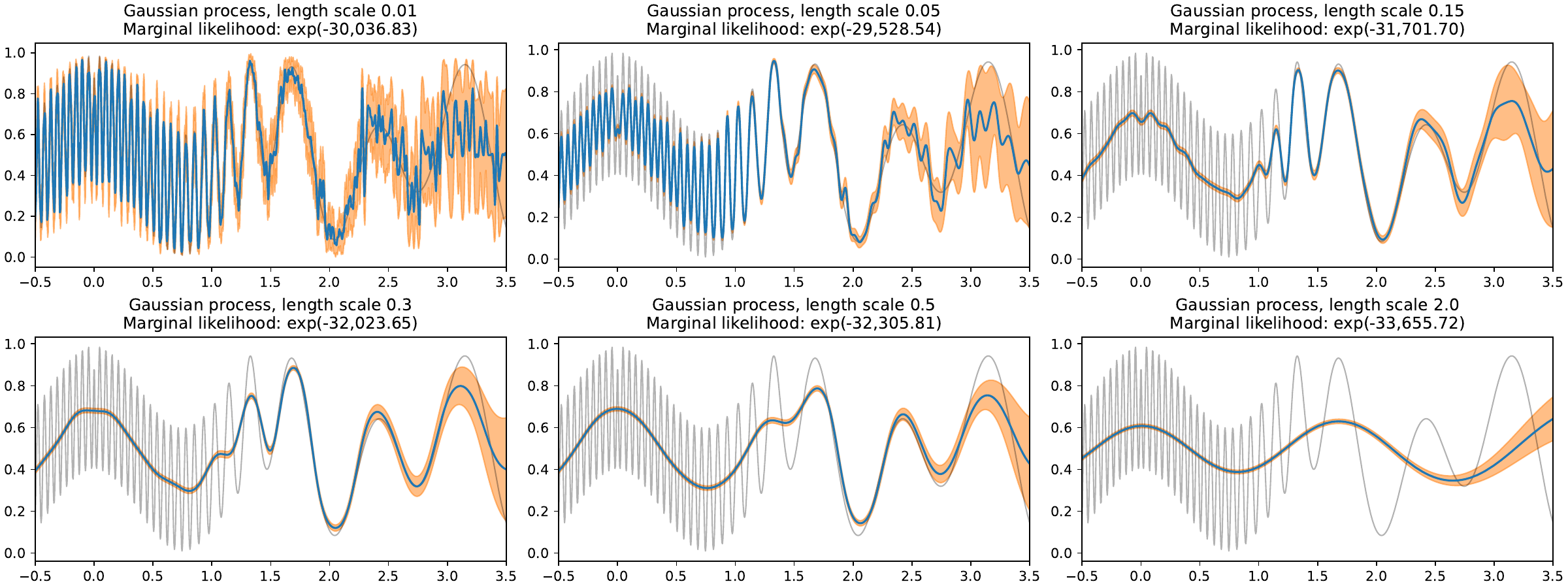}
    \caption{Visualization of the dependence of the Gaussian process inducing-points classifier on the length scale, and the estimated marginal likelihood of the dataset points under each prior.}
    \label{appendix:fig:gp_length_scale}
\end{figure*}

\textbf{Gaussian process classifier}: For our Gaussian process experiment, we follow a standard discriminative Gaussian process classifier setup \citep{rasmussen2005gpclassification}: we impose a Gaussian process prior over a latent ``logit'' function $f : \R \to \R$, then feed it through the logistic sigmoid transformation to obtain a conditional likelihood
\[
p(Y=1|f,x) = \sigma(f(x)) = \frac{1}{1 + \exp(-f(x))}.
\]
Given an observed dataset $\mathcal{D} = \left\{\big(x^{(i)}, y^{(i)}\big)\right\}_{i=1}^N$ of $(X, Y)$ pairs, we can then approximately compute the posterior distribution
\[
p\Big(f \Big| \left\{\big(x^{(i)}, y^{(i)}\big)\right\}_{i=1}^N\Big) \propto p(f) \prod_{i=1}^N p(y^{(i)} |f, x^{(i)})
\]
and use it to compute the posterior mean and variance for a new data point $x$:
\begin{align*}
p(Y=1|x, \mathcal{D}) &= \int_f \sigma(f(x)) p(f|\mathcal{D})\,df,
\\
\Var\big[ p(Y=1|f, x) \big| \mathcal{D}\big] &= \int_f \sigma(f(x))^2 p(f|\mathcal{D})\,df - p(Y=1|x, \mathcal{D})^2.
\end{align*}
For this task, we select a rational-quadratic kernel with standard deviation 2.0, mixture parameter 1.0, and length scale $0.15$:
\[
\text{Cov}\big(f(a), f(b)\big)
= 2.0^2 \left( 1 + \frac{(b - a)^2}{2 \times 0.15^2} \right)^{-1}
\]

Our training set includes 50,000 $Y$ samples, so computing an analytic posterior over $f$ is computationally difficult. We instead use a variational approximation using inducing points, following \citep{hensman2015scalable}: we choose $K$ inducing points $z^{(1)}, \dots, z^{(K)}$, let $u=\{ f(z^{(k)}) \}_{k=1}^{K}$ be the latent function values for those points, then impose an approximate posterior
\[
q(f) = \int_u p(f | u) q(u) \,du,
\]
which can be used to construct an evidence lower bound on the likelihood
\[
\log p(\mathcal{D}) \ge \E_{q(f)}\left[\sum_{i} \log p(y^{(i)}|f,x^{(i)})\right] + D_{KL}(q(u) \| p(u)).
\]
We select $K=512$ inducing points evenly spaced between -4 and 4, parameterize $q(u)$ as a Cholesky-factorized multivariate normal distribution $q(u) = \mathcal{N}(u; \mu, LL^T)$ where $\mu \in \R^{512}, L \in \R^{512\times 512}$, and use the Cholesky factorization to analytically compute the KL divergence. 
(For numerical stability purposes, we add 0.001 to the diagonal of the prior covariance matrix.) We then maximize the evidence lower bound above, approximating the expectation by subsampling 512 $(x, y_1, y_2)$ triples per iteration and using Gauss-Hermite quadrature over the distribution $q(f(x^{(i)})|u)$; we treat the two samples $y_1$ and $y_2$ for each $x$ as independent observations $(x, y_1)$, $(x, y_2)$.
We optimize $\mu$ and $L$ for 20,000 training iterations using stochastic gradient descent, with a maximum learning rate of 0.05, 100 steps of warmup, and a cosine decay schedule, although we observe that the approximate posterior converges within about half of that time.

We note that the degree of misspecification varies based on the length scale, as shown in \cref{appendix:fig:gp_length_scale}, because the true function does not have a consistent length scale and was not chosen from the prior. If we know in advance which length scales to try, the marginal likelihood estimates (our bound on $p(\mathcal{D})$) may allow us to identify the best-fitting model (in this case, the version with length scale 0.05, although it is imperfect). However, the estimates of the variance of $f$ does not provide a good estimate of pointwise misspecification; we chose to use length scale 0.15 in \cref{fig:intro_1d_problem} to emphasize this. (In real world settings, misspecification would likely be much harder to detect or correct, especially without a thorough hyperparameter sweep.) We also note that the marginal likelihoods of each approach are roughly of the same order; similar-looking data could plausibly have been generated by even the misspecified models because the data itself consists of binary outputs.

An expanded version of each of the parts of \cref{fig:intro_1d_problem} is shown in \cref{appendix:fig:big_1d_problem}.

\begin{figure*}[p]
    \centering
\includegraphics[width=0.95\linewidth]{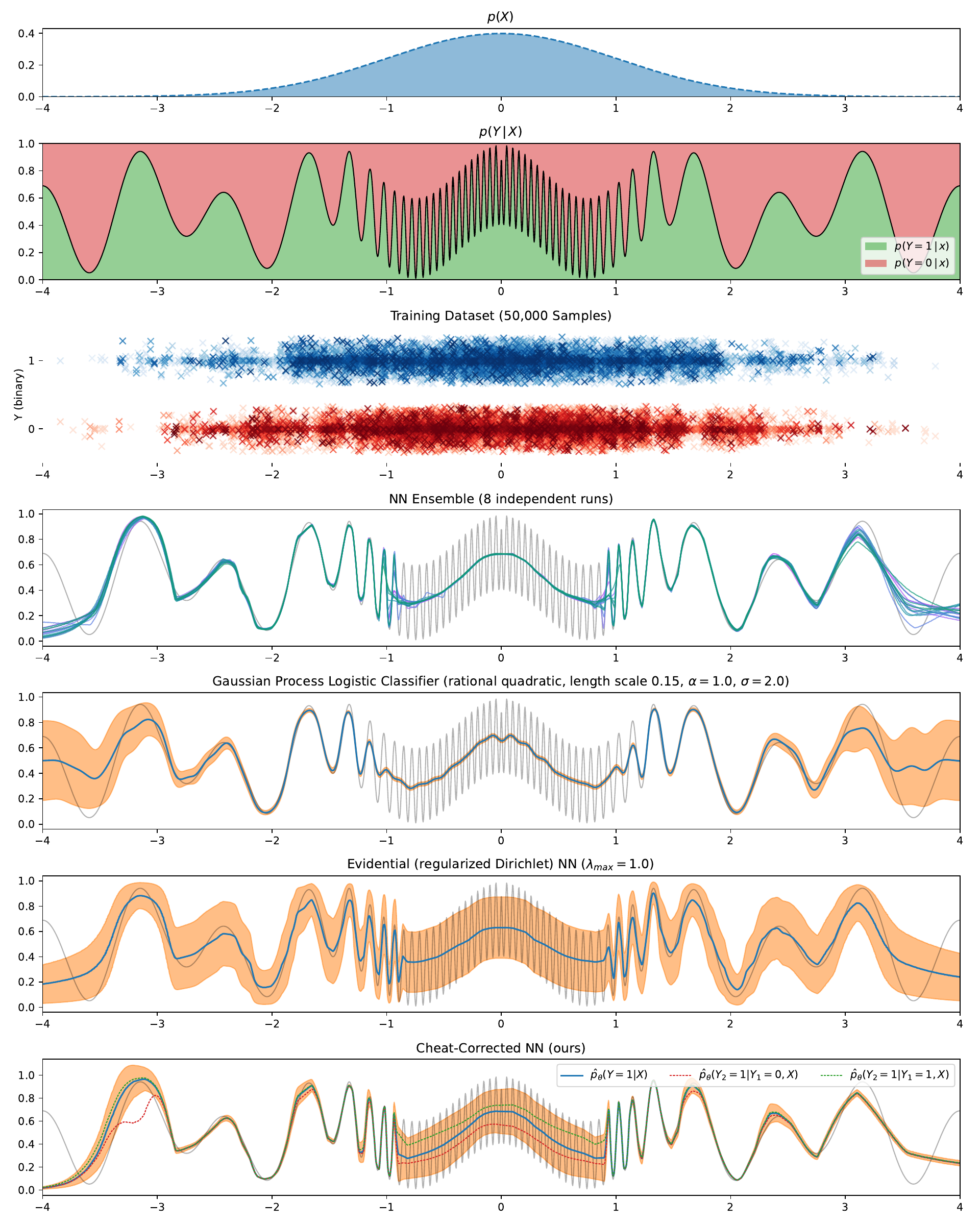}
    \vspace{-2em}
    \caption{Expanded comparison of methods for the 1D binary regression task shown in \cref{fig:intro_1d_problem}, showing both sides of the normal distribution $p(X)$. Note that our method is the best at distinguishing underfitting (near 0) from accurate estimation (1 to 3). Our technique does not directly do out-of-distribution detection, so this model's uncertainty estimates are overconfident outside the range of the dataset ($x > 3$); this could be fixed by combining our method (which detects underfitting) with another method that improves out-of-distribution calibration (as we did with the Cheat-SNGP model in the CIFAR-10H task).}
    \label{appendix:fig:big_1d_problem}
\end{figure*}

\subsubsection{Additional results}

\cref{appendix:fig:intro_problem_ece1_reliability} shows a reliability diagram for first-order calibration for the task in \cref{fig:intro_1d_problem}, demonstrating that all methods are close to first-order calibrated on this task. 
\cref{appendix:fig:intro_problem_ece2_reliability} shows a similar plot for second-order calibration, which indicates that our method is indeed better second-order calibrated. Each point in these figures was computed by aggregating over 20 equal-probability-mass bins; perfect calibration would correspond to a diagonal line with slope 1.

To show that these results are not specific to the particular sinusoidal function we chose, we also present results for a randomly-selected piecewise linear function, shown in \cref{appendix:fig:piecewise_linear_problem}. Similar to the sinusoidal function in \cref{fig:intro_1d_problem}, the ensemble and Gaussian process methods tend to overestimate their confidence for high-probability inputs around $x = 0$. Our technique gives a better estimate for common $x$, although it is overconfident for $x < -3$. Overall, it is better second-order calibrated and similarly first-order calibrated, as shown in \cref{appendix:fig:piecewise_linear_ece1_reliability,appendix:fig:piecewise_linear_ece2_reliability}.

\begin{figure*}[t]
    \centering
\includegraphics[width=0.95\linewidth]{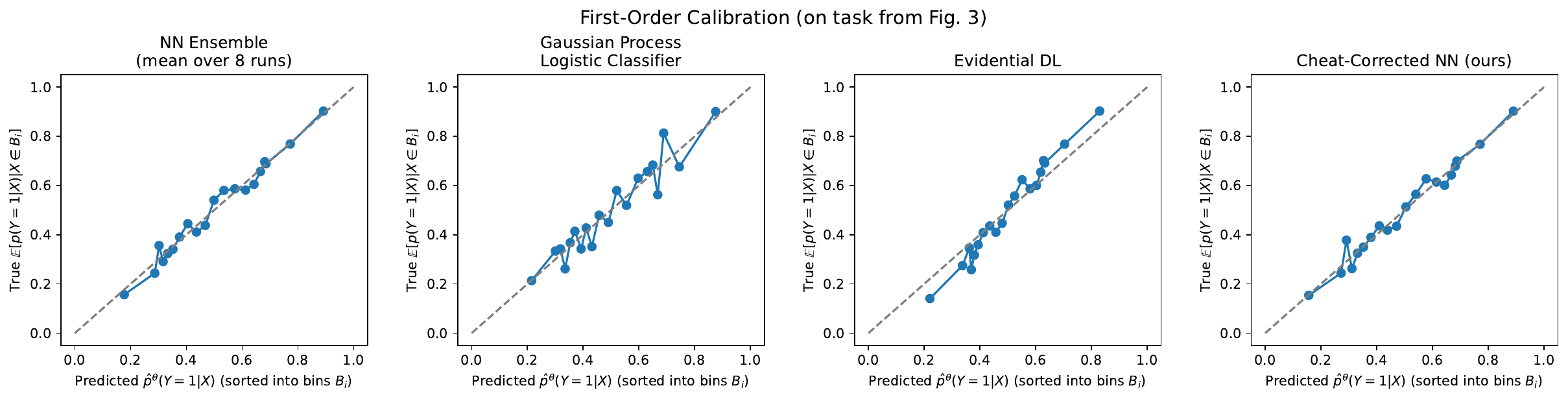}
    \caption{First-order calibration reliability diagram for the task in \cref{fig:intro_1d_problem}. All methods are close to first-order calibrated on this task.}
    \label{appendix:fig:intro_problem_ece1_reliability}
\end{figure*}
\begin{figure*}[t]
    \centering
\includegraphics[width=0.95\linewidth]{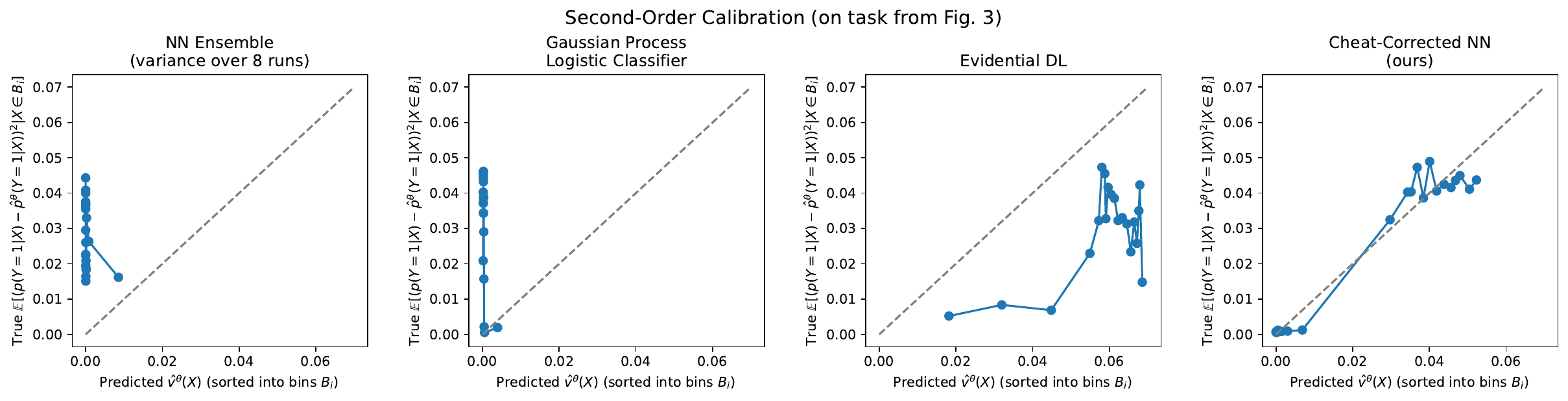}
    \caption{Second-order calibration reliability diagram for the task in \cref{fig:intro_1d_problem}. Our method is better second-order-calibrated than the baseline approaches.}
    \label{appendix:fig:intro_problem_ece2_reliability}
\end{figure*}

\begin{figure*}[p]
    \centering
\includegraphics[width=0.95\linewidth]{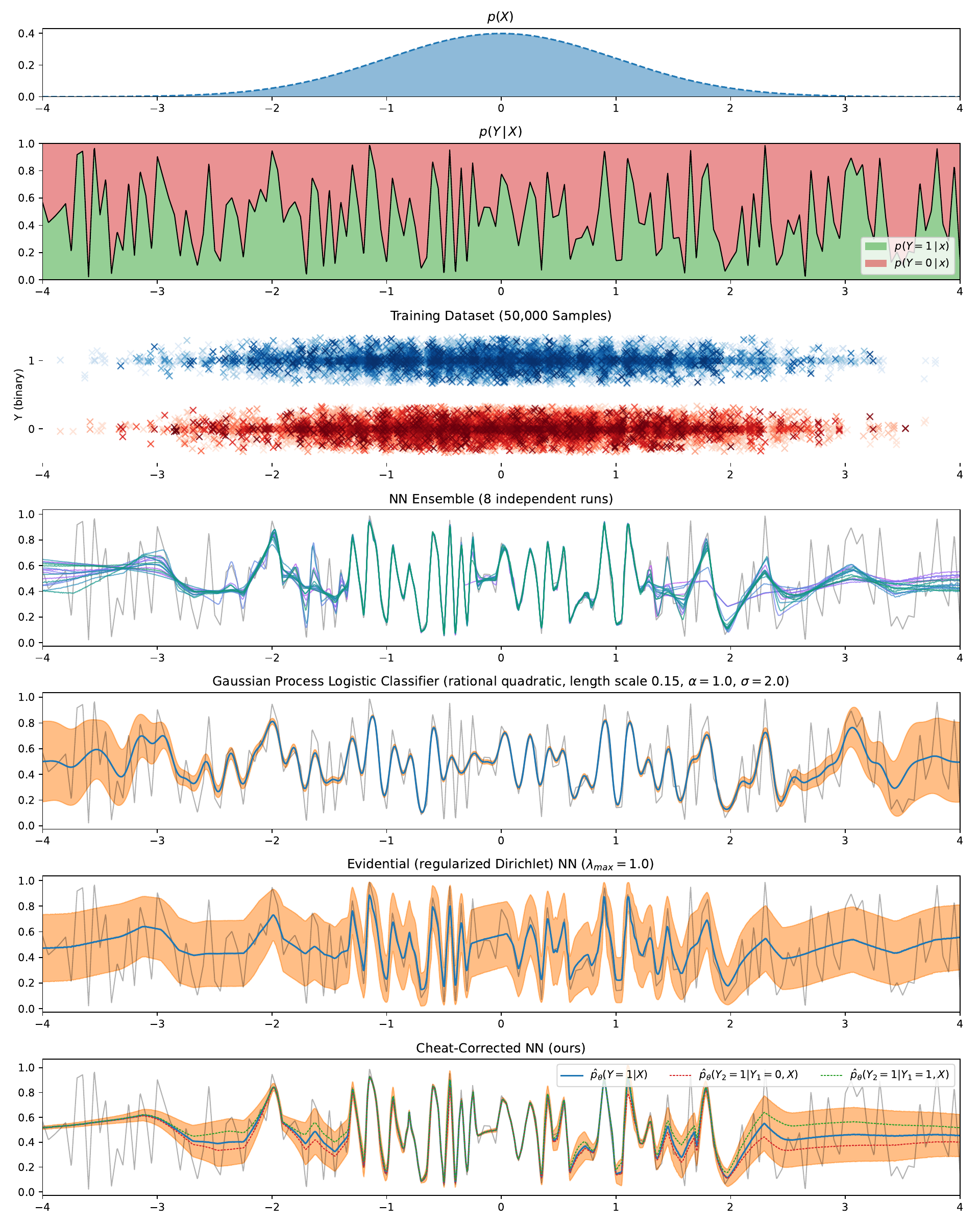}
    \vspace{-2em}
    \caption{Variant of \cref{fig:intro_1d_problem} for a toy task with a random piecewise-linear function.}
    \label{appendix:fig:piecewise_linear_problem}
\end{figure*}

\begin{figure*}[t]
    \centering
\includegraphics[width=0.95\linewidth]{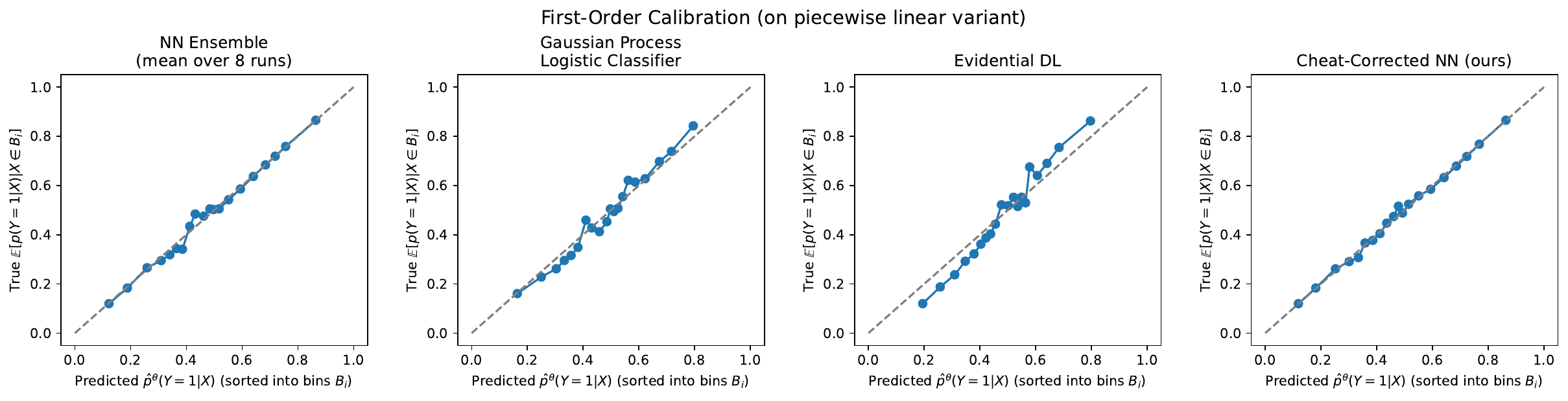}
    \caption{First-order calibration reliability diagram for the task in \cref{appendix:fig:piecewise_linear_problem}. All methods are close to first-order calibrated on this task.}
    \label{appendix:fig:piecewise_linear_ece1_reliability}
\end{figure*}
\begin{figure*}[t]
    \centering
\includegraphics[width=0.95\linewidth]{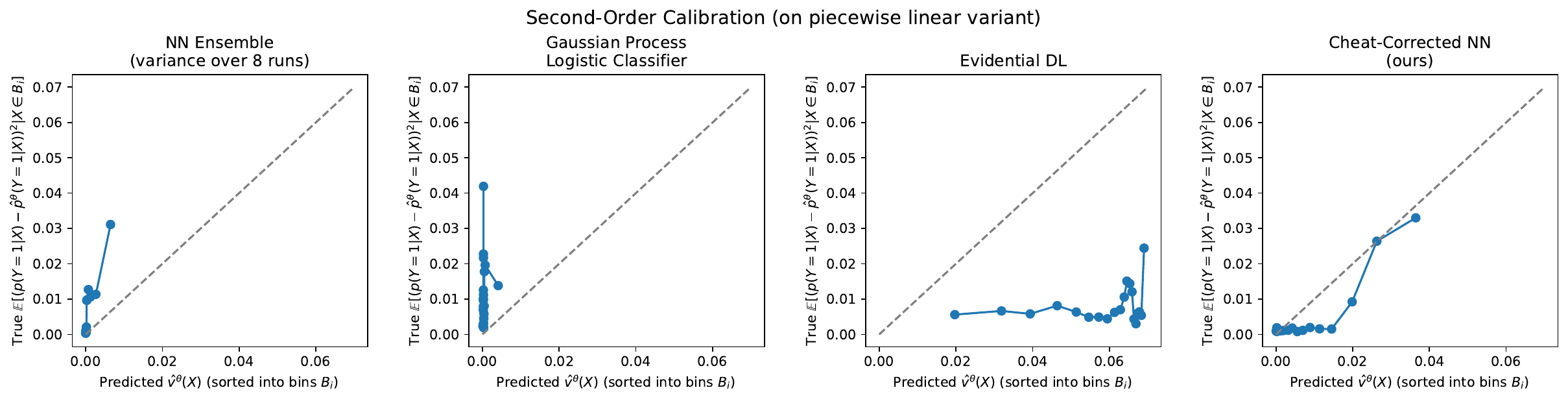}
    \caption{Second-order calibration reliability diagram for the task in \cref{appendix:fig:piecewise_linear_problem}. Our method is better second-order-calibrated than the baseline approaches.}
    \label{appendix:fig:piecewise_linear_ece2_reliability}
\end{figure*}

\clearpage
\subsection{CIFAR-10H}\label{appendix:sec:cifar10h}

\subsubsection{Training and hyperparameter tuning strategy}

Due to the small size of CIFAR-10H \citep{peterson2019human}, we split the dataset into multiple parts and combine it with CIFAR-10N \citep{Wei2021LearningWN}:

\begin{itemize}
    \item We use the 50,000 images in CIFAR-10N as a \emph{pretraining set}; this corresponds to the training set of the original CIFAR-10 dataset. Each image has three  labels from random annotators; we select two from these randomly at each training step. (Note that the distribution of aleatoric noise in CIFAR-10N is not exactly the same as the aleatoric noise in CIFAR-10H, likely due to being labeled at different times by different annotators, under the direction of different authors.)
    \item We use the first 3,000 images in CIFAR-10H as \emph{finetuning set 1}. Each image has at least 50 annotations; we select two from these randomly at each training step.
    \item We use the next 2,000 images in CIFAR-10H as our \emph{validation set}. We use the 50 labels per image as a proxy for the true distribution $p(Y|X=x)$, and take the KL divergence between this empirical distribution and the model's predictions as our hyperparameter tuning objective. After hyperparameter tuning, we reuse this set of images as \emph{finetuning set 2}, combining it with the first 3,000 images in CIFAR-10H to form a 5,000-image set.
    \item Finally, we use the last 5,000 images in CIFAR-10H as our \emph{test set}, and use it to evaluate our final metrics.
\end{itemize}

We apply the AugMix augmentation strategy \citep{hendrycks2019augmix} when sampling examples from the pretraining and finetuning sets, to improve robustness of all methods. We also normalize pixel values based on the mean and standard deviation of pixels across the full CIFAR-10 dataset, following the implementation in \texttt{uncertainty\_baselines}.

We train each method using the AdamW optimizer \citep{loshchilov2017decoupled} with batch size 512. We divide our training and hyperparameter tuning into the following phases:

\begin{itemize}
    \item \emph{Phase 1: Pretraining hyperparameter sweep.} We train each method on the CIFAR-10N pretraining set for 50 epochs. We perform a random search over learning rate and weight decay strength with 250 trials: we choose learning rate logarithmically spaced between $10^{-5}$ and $5 \times 10^{-3}$, and we either sample weight decay uniformly between 0.05 and 0.5, or logarithmically between $10^{-6}$ and $0.05$, since different tasks and methods may benefit from either very strong or very weak weight decay. We use a linear warmup for the learning rate during the first epoch, then use cosine weight decay.
    \item \emph{Phase 2: Pretraining extended.} We take the best-performing hyperparameters from phase 1, as judged by validation KL divergence, and retrain that configuration from scratch for both 500 and 200 epochs.
    \item \emph{Phase 3: Fine-tuning hyperparameter sweep.} We train each method on our CIFAR-10H finetuning set 1. We perform a random search over learning rate and weight decay, as in phase 1, and also randomly search over the number of epochs to use, either 30, 50, 100, or 200. We initialize the parameters from either of the checkpoints from Phase 2, using 250 trials in the random sweep for each checkpoint. (Effectively, we do 500 trials, where we are also tuning the number of epochs of pretraining.)
    \item \emph{Phase 4: Expanded fine-tuning.} We take the best-performing configuration from Phase 3, again judged by validation KL divergence. We then reset to the checkpoint from Phase 2 (depending on the Phase 3 configuration), and train it on the combination of of finetuning set 1 and finetuning set 2 (the validation set), so that we maximize the amount of finetuning data.
    \item \emph{Phase 5: Evaluation.} We take the resulting model from Phase 4 and evaluate our metrics on our test set (the second half of CIFAR-10H).
\end{itemize}

The best-performing hyperparameters for each method are shown in \cref{appendix:tab:cifar10h-hyperparameters}. Interestingly, we found that very strong weight decay was the most effective during pretraining for all models, perhaps because of the relatively small dataset and large number of epochs.
 
\begin{table*}[t]
\caption{
Best-performing hyperparameters for each model architecture, based on our hyperparameter sweeps.
}
\label{appendix:tab:cifar10h-hyperparameters}
\begin{center}
\begin{small}
\begin{sc}
\begin{tabular}{lcccccc}
\toprule
& \multicolumn{3}{c}{Pretrain (CIFAR-10N)}& \multicolumn{3}{c}{Finetune (CIFAR-10H)}
\\
\cmidrule(lr){2-4}\cmidrule(lr){5-7}
Method & \scriptsize Learning rate & \scriptsize Weight decay & \scriptsize Epochs & \scriptsize Learning rate & \scriptsize Weight decay & \scriptsize Epochs \\
\midrule
Naive NN / NN Ensemble & 3.799e-03 & 3.656e-01 & 50 & 9.071e-05 & 2.363e-01 & 50 \\
Evidential DL & 7.465e-04 & 2.950e-01 & 200 & 1.455e-04 & 2.146e-03 & 200 \\
SNGP Cov. & 1.221e-03 & 4.742e-01 & 200 & 6.032e-05 & 1.457e-01 & 100 \\
Epinet & 1.327e-03 & 4.513e-01 & 50 & 7.345e-05 & 2.954e-01 & 50 \\
Cheat NN & 1.113e-03 & 4.835e-01 & 50 & 4.265e-05 & 1.865e-01 & 50 \\
Cheat SNGP & 1.687e-03 & 4.411e-01 & 200 & 4.972e-05 & 7.621e-05 & 30 \\
\bottomrule
\end{tabular}
\end{sc}
\end{small}
\end{center}
\end{table*}

\subsubsection{Model architectures}

We implement all of our models and baselines using the \texttt{uncertainty\_baselines} Python library \citep{nado2021uncertainty}, building on TensorFlow \citep{tensorflow2015whitepaper} and Keras \citep{chollet2015keras}. All methods are based on the wide ResNet architecture \citep{zagoruyko2016wide} as implemented in \texttt{uncertainty\_baselines}, with depth 28 and a width multiplier of 10.

\textbf{Naive NN:} We configure the wide ResNet with 10 output classes and a softmax output layer, and train it using the ordinary cross entropy (negative log likelihood) loss.

At test time, we use $\vhat(y|x) = \phaty(y | x)(1-\phaty(y | x))$ as an estimate of variance. This corresponds to the assumption that all noise is epistemic, and would make sense if we knew $Y$ was a deterministic function of $X$. (However, for this task we know this is not the case, so this will overestimate uncertainty.)

\textbf{NN Ensemble:} We use the same configuration as for Naive NN, but train eight copies of the model. (We do not perform a separate hyperparameter tuning sweep for NN ensemble, since it would be the same as the sweep for Naive NN.)

At test time, we take the empirical mean and variance across the ensemble as our prediction and epistemic uncertainty estimates:
\begin{align*}
    \hat{p}_{\theta_1, \dots, \theta_8}(y|x) &= \frac{1}{8} \sum_{i=1}^8 \hat{p}_{\theta_i}(y | x)
    \\
    \hat{v}_{\theta_1, \dots, \theta_8}(y|x) &= \frac{1}{7} \sum_{i=1}^8 \left( \phaty(y | x) - \hat{p}_{\theta_1, \dots, \theta_8}(y|x) \right)^2
\end{align*}
We divide by 7 so that our sample variance estimator is an unbiased estimate of the variance under a hypothetical infinite ensemble.

\textbf{SNGP Cov.}: We use the SNGP variant of a wide ResNet from \texttt{uncertainty\_baselines}, which applies spectral normalization to the intermediate layers of the ResNet and replaces the normal linear output layer with a random-feature Gaussian process approximation \citep{Liu2020SimpleAP}. We configure it using the default configuration for CIFAR-10: 1024 orthogonal random features, a 1.0 ridge penalty, a 20 mean-field factor, and a spectral-norm bound of 6.0.

Following the training script for SNGP in \texttt{uncertainty\_baselines}, we use the ordinary logits of the model during training; this roughly corresponds to using the posterior mean of the learned Gaussian process posterior. After training the model, we perform another pass over the training set to compute a Laplace approximation of the covariance matrix. To transform the mean and covariance over the logits into a mean and variance over output probabilities, we use a Monte Carlo approximation by applying softmax to each of 1000 samples, then taking the mean and variance of the resulting probability vectors.

\textbf{Epinet:} We augment our ResNet base network architecture with a MLP epinet head, using the implementation in the official \texttt{enn} library\footnote{https://github.com/google-deepmind/enn} \citep{Osband2021EpistemicNN}, which we wrap using the \texttt{jax2tf} library in JAX \citep{jax2018github}. We copy the hyperparameters from the CIFAR-10 checkpoints in that repository: a 20-dimensional index vector, 50-dimensional hiddens, an epinet prior scale of 4.0, and no additional convolutional prior network. The epinet head takes the penultimate layer features from the ResNet and makes an additive contribution to the ResNet's outputs, indexed by a random input. (Note that, although the \texttt{enn} library checkpoints are for a non-wide ResNet, we instead use our wide ResNet backbone for consistency with the other baselines, and train the epinet from scratch.)

We train the base network and epinet head jointly from scratch, taking an average of the ordinary cross-entropy loss across five randomly-sampled ``index'' vectors, as recommended by \citep{Osband2021EpistemicNN}.
We then evaluate the epinet predictions on our test set by taking the mean and variance of the post-softmax probabilities across 1000 sampled index vectors for each input.

\textbf{Evidential NN:} We set up the wide ResNet architecture with 10 outputs, and convert them into parameters for a Dirichlet distribution according to $\bm{\alpha}(x) = 1 + \texttt{softplus}(h_\theta(x))$. 
We then apply the regularized cross-entropy loss described by \citet{Sensoy2018EvidentialDL}:
\begin{align*}
\mathcal{L}_\textsc{EDL}(x, y_1, y_2, \theta)
&= \frac{1}{2}\sum_{i=1}^2 \left[
\E_{q \sim \text{Dirichlet}(\bm{\alpha}(x))}[ -\log q(y_i) ]
+ \lambda D_{KL}\big(\,\text{Dirichlet}(\tilde{\bm{\alpha}}(x, y_i)) ~\|~ \text{Dirichlet}([1, 1])\,\big)
\right]
\\&= \frac{1}{2}\sum_{i=1}^2 \left[
\psi( \bm{1}^T \bm{\alpha}(x) ) - \psi( \bm{e}_{y_i}^T \bm{\alpha}(x) )
+ \lambda D_{KL}\big(\,\text{Dirichlet}(\tilde{\bm{\alpha}}(x, y_i)) ~\|~ \text{Dirichlet}([1, 1])\,\big)
\right],
\end{align*}
where $\bm{e}_{y_i} \in \R^{10}$ is a one-hot indicator vector for the correct class, $\psi$ is the digamma function, and $\tilde{\bm{\alpha}}(x, y_i) = \bm{e}_{y_i} + (1 - \bm{e}_{y_i}) \odot \bm{\alpha}(x)$ is a vector where the Dirichlet parameter for the correct label has been replaced with 1. We interpolate $\lambda$ from 0 to 1 over 10 epochs as recommended by \citet{Sensoy2018EvidentialDL}.

At test time, we compute the average probabilities and variances as
\begin{align*}
    \phaty(Y=y | X) &= \frac{\bm{e}_{y}^T \bm{\alpha}(x)}{\bm{1}^T \bm{\alpha}(x)},
    \\
    \vhat(Y=y | X) &= \frac{\phaty(Y=y | X) \big( 1- \phaty(Y=y | X) \big)}{\bm{1}^T \bm{\alpha}(x) + 1}.
\end{align*}
This is the mean and variance of the probability $q(y)$ when $q \sim \text{Dirichlet}(\bm{\alpha}(x))$, as described by \citet{Sensoy2018EvidentialDL}.

We were initially surprised to find that the Evidential NN has significantly worse KL divergence and calibration scores in \cref{tab:cifar10h} compared to other methods, but only a moderate reduction in classification error. After further investigation, we determined that this is because, in the presence of label noise between a few classes, the Evidential NN's regularization causes it to predict an almost-uniform distribution across all classes, even classes that never appear (as shown in \cref{appendix:fig:edl_weird_behavior}), due to predicting a Dirichlet distribution that is nearly uniform over the simplex. Additionally, because the denominator $\bm{1}^T \bm{\alpha}(x) + 1$ is always at least \texttt{num\_classes} + 1, the variance estimate will never be larger than $\frac{0.5^2}{11} \approx 0.022$, and if the predicted probability is close to uniform, the variance estimate will be around $\frac{0.1 \cdot 0.9}{11} \approx 0.0082$. This may be smaller than the actual squared error of the predictor. (This occurs because the uniform distribution over a high-dimensional simplex actually has very low variance along each dimension.)

\begin{figure*}[t]
    \centering
\includegraphics[width=\linewidth]{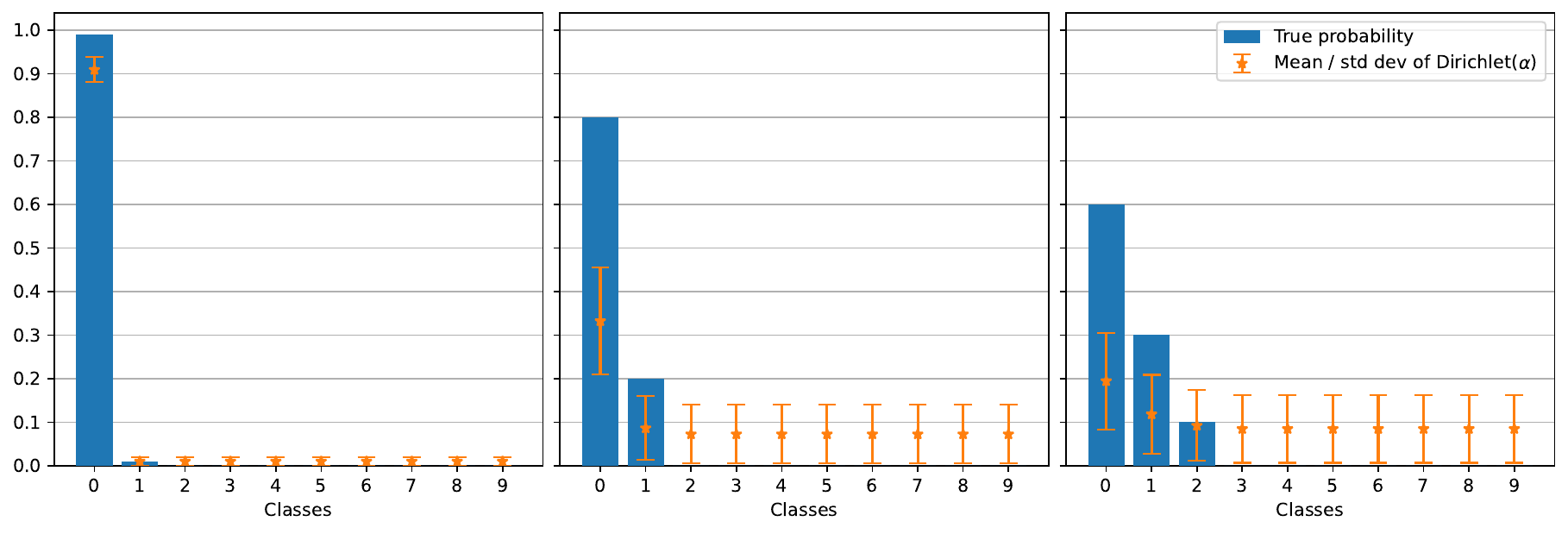}
    \vspace{-2em}
    \caption{Optimizing the Evidential Deep Learning objective \citet{Sensoy2018EvidentialDL} produces biased probability estimates in the presence of label noise. We let $\bm{\alpha} = 1 + \text{softplus}(\bm{v})$ and directly minimize the EDL objective with respect to $\bm{v}$, taking an expectation over a synthetic ground-truth label distribution (blue bars). We then visualize the mean and standard deviation of the learned distribution (orange). When the ground-truth distribution has aleatoric uncertainty, EDL both over-estimates the probability for never-observed classes, and under-estimates the distance from the true label distribution.}
    \label{appendix:fig:edl_weird_behavior}
\end{figure*}

\textbf{Cheat NN:} We parameterize the output layer of the wide ResNet with $10 \times 10 = 100$ output classes. We then reshape them into a $10 \times 10$ matrix and add it to its transpose to enforce that it is symmetric, and then take a softmax over all rows and columns. To regularize this output and prevent negative variance estimates due to overfitting, we compute the eigenvalues of this probability matrix at each training step, then regularize any negative eigenvalues by  multiplying their squared norm by 10.0. See \cref{appendix:properties_calibrated_of_pairs} for additional discussion.

In contrast to the previous approaches, which average the loss over the two samples $y_1, y_2$ for each example seen, for our method we directly compute the log-likelihood of the \emph{pair} of outputs, by indexing into the appropriate row and column of our joint probability matrix.

At test time, we compute $\phatym(y | x)$ by marginalizing out one of the axes of our symmetric $10 \times 10$ matrix. We compute $\vcheat(y | x)$ using
\[
\vcheat(y | x) = \phatyy(y, y | x) - \phatym(y | x)^2.
\]

Note that, for hyperparameter tuning, we compute the KL divergence according to the predicted marginal distribution of $Y_1$ only, and compare it to the empirical distribution of all 50 annotator labels; we do not use the conditional $\phatyc$ (or the joint $\phatyy$) for hyperparameter tuning.

\textbf{Cheat SNGP:} We follow the same procedure as for Cheat NN, but use the SNGP variant of the wide ResNet architecture. This means we use the spectral normalization layers and random-feature Gaussian process output head. However, we do not compute any posterior covariance using the Gaussian process output head, and instead merely use the random features as a convenient parameterization for a deterministic output layer. This allows us to take advantage of the distance-awareness inductive biases in the SNGP architecture \citep{Liu2020SimpleAP} without needing to approximate an actual posterior distribution.

\subsubsection{Dataset variants}

In addition to the original dataset, we consider three dataset variants: extra classes, scrambled, and extra classes + scrambled. Each of these variants is implemented by transforming either the images or the annotator labels for the tasks, and we apply the transformations to all of the dataset splits (pretraining, fine-tuning, validation, and test).

For the original and the extra classes + scrambled variants, we aggregate over eight independent training and evaluation runs. The average performance is shown in the main paper in \cref{tab:cifar10h}, and the standard deviations are given in \cref{tab:cifar10h-standard-deviations}. For the other two variants, we only conduct a single training run; the results for these variants are given in \cref{tab:cifar10h-extra-variants}.

We do not perform separate hyperparameter sweeps for each variant; instead we re-use the optimal hyperparameters for the unmodified dataset, and just run training phases 2, 4, and 5 (as described above). Example images from each of these dataset variants are shown in \cref{appendix:fig:cifar_variants}.

\begin{figure*}[p]
    \centering
\includegraphics[width=\linewidth]{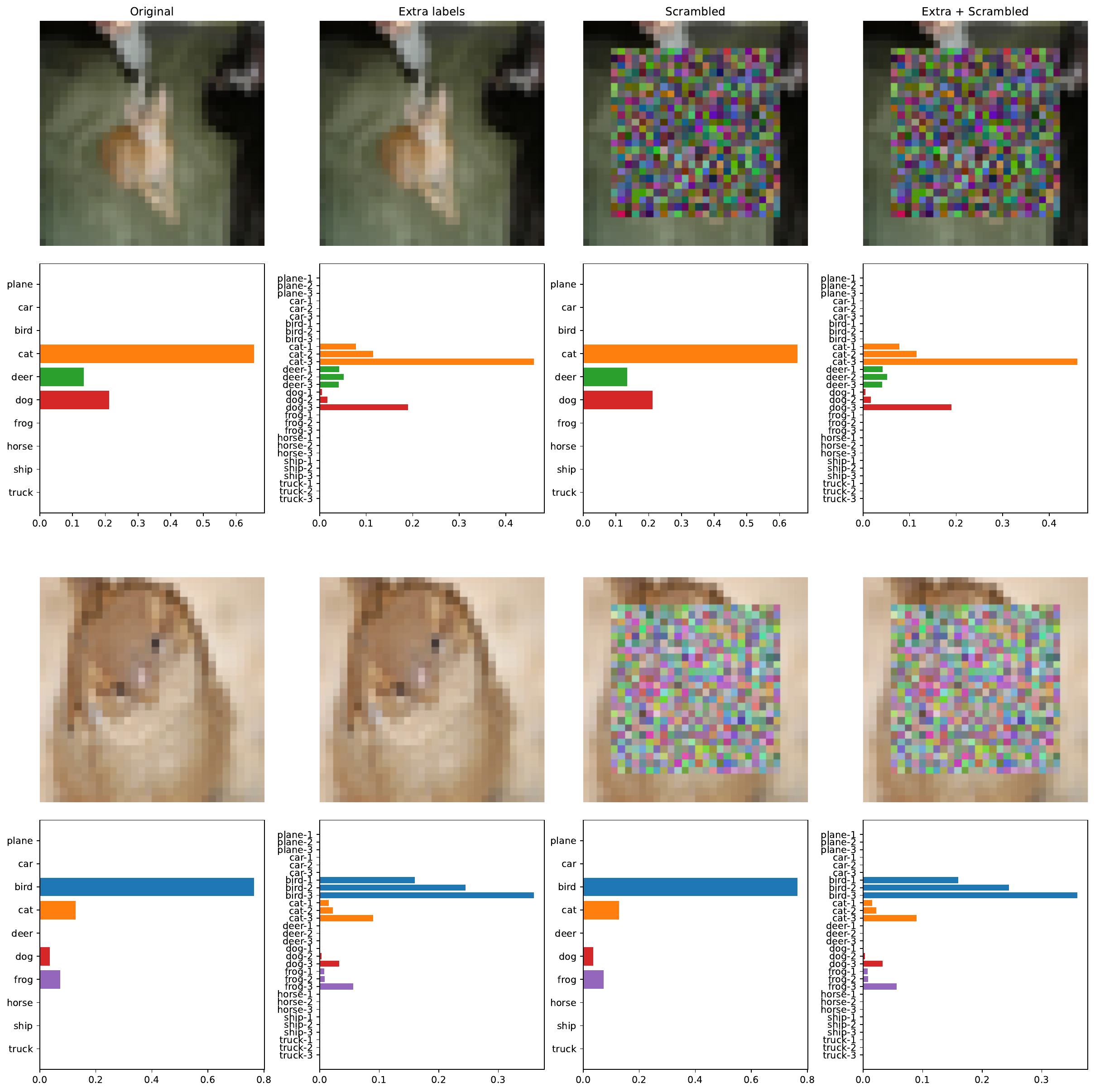}
    \vspace{-2em}
    \caption{Input images $X$ and ground-truth annotator label distributions $p(Y|X)$ for two images in the CIFAR-10H dataset, selected due to having natural aleatoric uncertainty in the label distribution. The ``Extra classes'' variant modifies the label distribution, whereas the ``Scrambled'' variant scrambles the center patch of the image.}
    \label{appendix:fig:cifar_variants}
\end{figure*}

\textbf{Extra Classes:} In this variant, we add label noise by artificially increasing the number of classes. For each of the classes in the original CIFAR-10 dataset, we create three new classes (e.g. dog $\to$ \{ dog-1, dog-2, dog-3\} ), and arbitrarily construct a distribution over them (e.g. $p(Y' | Y = \text{dog})$ for $Y' \in \{ \text{dog-1, dog-2, dog-3} \}$) by sampling from Dirichlet([1, 1, 1]). This produces a classification problem with 30 classes instead of 10, where there is aleatoric variation between the sub-classes even for unambiguous images. The conditional distribution for each class is held fixed for all models and all dataset splits (i.e. it is treated as part of the data distribution itself).

When training with this variant, we increase the number of outputs of each method accordingly; methods that had output dimension 10 instead use output dimension 30, and our cheat-corrected models are configured to produce $30 \times 30$ joint matrices. We then sample a sub-class for each class in each training iteration, potentially using different sub-classes for the same image in different epoch.
When evaluating metrics, we multiply the empirical distribution over the original labels and the closed-form conditional distribution for the sub-labels to construct a partially-empirical distribution over the sub-labels.

We note that this dataset has more aleatoric variation, but it shouldn't require more capacity, since the noise is added in an image-agnostic way.

\textbf{Scrambled:} In this variant, we increase the task difficulty by applying a fixed permutation to the pixels in the center of the image. This permutation applies across all channels and all pixels except for a 4-pixel border around the sides of the image. Since the content is usually in the center of the image, and since a ResNet is a convolutional architecture, this permutation is likely to interfere with the inductive biases of all of the methods, and may cause them to focus on less-informative features in the image border.

Since our permutation is invertible, it is always possible in principle to reconstruct the original image from the scrambled form. This means that the true conditional $p(Y|X)$ is not affected by using a scrambled view of $X$, so all of the additional uncertainty from this transformation is epistemic in nature.

We apply this permutation after the AugMix augmentations during training.

\textbf{Extra + Scrambled:} We apply both of the transformations above together.

\begin{table*}[t]
\caption{
Full results from \cref{tab:cifar10h} with mean and standard deviation across eight training runs. All metrics summed across classes except Acc and KL.
}
\label{tab:cifar10h-standard-deviations}
\vskip 0.15in
\begin{center}
\begin{small}
\begin{sc}
\begin{tabular}{l@{\hskip 2em}cc@{\hskip 1.5em}c@{\hskip 1.5em}ccc}
\toprule
& \multicolumn{6}{c}{CIFAR-10H}
\\
\cmidrule(l{-2pt}r{2em}){2-7}
Method & \scriptsize \textbf{ECE-2} & \scriptsize $\E[\hat{v}^\theta]$ & \clap{\scriptsize $\E[(\hat{p}^\theta{-}p)^2]$} & \scriptsize ECE-1 & \scriptsize Acc & \scriptsize KL
\\
\midrule
Naive NN & 0.076 $\pm$ 0.003 & 0.142 $\pm$ 0.001 & 0.065 $\pm$ 0.002 & 0.017 $\pm$ 0.003 & 93.94 $\pm$ 0.17 & 0.179 $\pm$ 0.003 \\
NN Ensemble & 0.039 $\pm$ 0.000 & 0.014 $\pm$ 0.000 & 0.053 $\pm$ 0.000 & 0.029 $\pm$ 0.002 & 94.90 $\pm$ 0.09 & 0.152 $\pm$ 0.001 \\
Evidential DL & 0.377 $\pm$ 0.001 & 0.053 $\pm$ 0.000 & 0.430 $\pm$ 0.002 & 1.038 $\pm$ 0.004 & 88.45 $\pm$ 0.23 & 1.087 $\pm$ 0.003 \\
SNGP Cov. & 0.048 $\pm$ 0.001 & 0.005 $\pm$ 0.000 & 0.052 $\pm$ 0.001 & 0.020 $\pm$ 0.001 & 94.89 $\pm$ 0.15 & 0.153 $\pm$ 0.001 \\
Epinet & 0.056 $\pm$ 0.002 & 0.015 $\pm$ 0.002 & 0.071 $\pm$ 0.001 & 0.020 $\pm$ 0.002 & 93.40 $\pm$ 0.26 & 0.189 $\pm$ 0.002 \\
\midrule
Cheat NN & 0.018 $\pm$ 0.001 & 0.052 $\pm$ 0.000 & 0.068 $\pm$ 0.001 & 0.029 $\pm$ 0.003 & 93.60 $\pm$ 0.18 & 0.182 $\pm$ 0.002 \\
Cheat SNGP & 0.009 $\pm$ 0.002 & 0.054 $\pm$ 0.001 & 0.052 $\pm$ 0.001 & 0.022 $\pm$ 0.002 & 94.90 $\pm$ 0.14 & 0.149 $\pm$ 0.001 \\
\bottomrule
\end{tabular}
\\[2em]
\begin{tabular}{l@{\hskip 2em}cc@{\hskip 1.5em}c@{\hskip 1.5em}cc}
\toprule
& \multicolumn{5}{c}{w/ Extra Classes, Scrambled}
\\
\cmidrule(l{-2pt}r{2em}){2-6}
Method & \scriptsize \textbf{ECE-2}& \scriptsize $\E[\hat{v}^\theta]$ & \clap{\scriptsize $\E[(\hat{p}^\theta{-}p)^2]$} & \scriptsize ECE-1 & \scriptsize KL \\
\midrule
Naive NN & 0.521 $\pm$ 0.002 & 0.682 $\pm$ 0.003 & 0.161 $\pm$ 0.002 & 0.068 $\pm$ 0.004 & 0.706 $\pm$ 0.009 \\
NN Ensemble & 0.134 $\pm$ 0.001 & 0.014 $\pm$ 0.000 & 0.148 $\pm$ 0.001 & 0.032 $\pm$ 0.001 & 0.647 $\pm$ 0.003 \\
Evidential DL & 0.387 $\pm$ 0.000 & 0.031 $\pm$ 0.000 & 0.418 $\pm$ 0.000 & 0.794 $\pm$ 0.009 & 2.356 $\pm$ 0.000 \\
SNGP Cov. & 0.112 $\pm$ 0.003 & 0.033 $\pm$ 0.003 & 0.145 $\pm$ 0.001 & 0.057 $\pm$ 0.003 & 0.634 $\pm$ 0.006 \\
Epinet & 0.089 $\pm$ 0.009 & 0.087 $\pm$ 0.009 & 0.163 $\pm$ 0.003 & 0.075 $\pm$ 0.003 & 0.712 $\pm$ 0.010 \\
\midrule
Cheat NN & 0.022 $\pm$ 0.001 & 0.134 $\pm$ 0.001 & 0.154 $\pm$ 0.001 & 0.072 $\pm$ 0.005 & 0.672 $\pm$ 0.005 \\
Cheat SNGP & 0.011 $\pm$ 0.001 & 0.153 $\pm$ 0.002 & 0.150 $\pm$ 0.001 & 0.044 $\pm$ 0.003 & 0.650 $\pm$ 0.006 \\
\bottomrule
\end{tabular}
\end{sc}
\end{small}
\end{center}
\vskip -0.1in
\end{table*}
\begin{table*}[t]
\caption{
Results for the additional CIFAR-10H dataset variants we consider, which include only one of the two increased difficulty types each. Results within 2x of best ECE-2 in bold. All metrics summed across classes except Acc and KL.
}
\label{tab:cifar10h-extra-variants}
\vskip 0.15in
\begin{center}
\begin{small}
\begin{sc}
\begin{tabular}{l@{\hskip 2em}cc@{\hskip 1.5em}c@{\hskip 1.5em}cc@{\hskip 2em}cc@{\hskip 1.5em}c@{\hskip 1.5em}cc}
\toprule
& \multicolumn{5}{c}{Extra Classes}& \multicolumn{5}{c}{Scrambled}
\\
\cmidrule(l{-2pt}r{2em}){2-6}\cmidrule(l{-2pt}r){7-11}
Method & \scriptsize \textbf{ECE-2} & \scriptsize $\E[\hat{v}^\theta]$ & \clap{\scriptsize $\E[(\hat{p}^\theta{-}p)^2]$} & \scriptsize ECE-1 & \scriptsize KL & \scriptsize \textbf{ECE-2}& \scriptsize $\E[\hat{v}^\theta]$ & \clap{\scriptsize $\E[(\hat{p}^\theta{-}p)^2]$} & \scriptsize ECE-1 & \scriptsize KL \\
\midrule
Naive NN & 0.540 & 0.574 & 0.034 & 0.02 & 0.18 & \textbf{0.051} & 0.354 & 0.314 & 0.07 & 0.69 \\
NN Ensemble & 0.020 & 0.007 & 0.027 & 0.03 & 0.15 & 0.261 & 0.028 & 0.289 & 0.04 & 0.64 \\
Evidential DL & 0.386 & 0.031 & 0.417 & 1.14 & 2.34 & 0.561 & 0.068 & 0.629 & 0.97 & 1.59 \\
SNGP Cov. & \textbf{0.017} & 0.030 & 0.026 & 0.03 & 0.15 & 0.271 & 0.014 & 0.285 & 0.06 & 0.63 \\
Epinet & 0.054 & 0.083 & 0.041 & 0.04 & 0.20 & 0.271 & 0.044 & 0.314 & 0.08 & 0.69 \\
\midrule
Cheat NN & \textbf{0.010} & 0.029 & 0.037 & 0.04 & 0.19 & \textbf{0.056} & 0.252 & 0.303 & 0.08 & 0.66 \\
Cheat SNGP & \textbf{0.009} & 0.032 & 0.028 & 0.02 & 0.15 & \textbf{0.029} & 0.280 & 0.286 & 0.05 & 0.62 \\
\bottomrule
\end{tabular}
\end{sc}
\end{small}
\end{center}
\vskip -0.1in
\end{table*}

\subsubsection{Evaluation metrics}

\textbf{ECE-2:} Our primary evaluation metric is the expected second-order calibration error between the predicted epistemic variance and the actual squared difference between the predicted probability and $p(Y|X)$. In other words, we wish to evaluate how closely the following correspondence holds:
\begin{align*}
\E\Big[\big( p(y | X) - \phaty(y | X) \big)^2 ~\Big|~ \vhat(y | X)\Big] \overset{?}{\approx} \vhat(y | X).
\end{align*}
(We expect this correspondence to hold because it is a special case of \cref{thm:conf_cheat_grouping} where the event $A$ is $\vhat(y | X) = c$ for each possible $c \in \R$.)
Specifically, we focus on expected calibration error, which has the form
\begin{align*}
\E\Bigg[
\bigg|
\E\Big[\big( p(y | X) - \phaty(y | X) \big)^2 ~\Big|~ \vhat(y | X)\Big] - \vhat(y | X)
\bigg|
\Bigg]
\end{align*}
We estimate this using expected calibration error over 100 equal-probability bins, based on the quantiles of the predicted probability $\vhat(y | X)$, which we compute as follows:

\begin{enumerate}
    \item For each example $x$, for each class $y$, compute the variance estimate $\vhat(y | x)$, and an unbiased estimate of the squared error $p(y | X) - \phaty(y | X) \big)^2$ using the $K$ annotations for this image $x$:
    \begin{align*}
        \textsc{SqErrEst}(y, \phaty(y|x), [y_i]_{i=1}^K)
        &= \phaty(y|x)^2 - 2 \phaty(y|x) \frac{|\{i : y_i = y\}|}{K} + \frac{|\{(i, j) : i \ne j, y_i = y, y_j = y\}|}{K(K-1)}
        \\&\approx \phaty(y|x)^2 - \phaty(y|x)p(y|x) + p(y|x)^2 = (\phaty(y|x)^2 - p(y|x))^2
    \end{align*}
    $K$ is the number of annotator labels for this image, which is always at least 50 but is sometimes greater.
    When evaluating for the ``extra classes'' variant, we compute the modified estimate
    \begin{align*}
    \textsc{SqErrEst}(y', \phaty(y'|x), [y_i]_{i=1}^K)
    &= \phaty(y'|x)^2 - 2 \phaty(y'|x) p(y'|y)\frac{|\{i : y_i = y\}|}{K} \\&\qquad\qquad+ p(y'|y)^2\frac{|\{(i, j) : i \ne j, y_i = y, y_j = y\}|}{K(K-1)}
    \end{align*}
    where $y'$ is one of the three sub-classes corresponding to the original class $y$.
    \item Sort all of the $(x, y)$ pairs in ascending order of $\vhat(y | x)$. Note that each $x$ appears multiple times in this ordering, due to the different possible labels $y$.
    \item Divide the examples into 100 evenly-sized bins, each of which correspond to an empirical quantile range of 1\%.
    \item Compute the average $\overline{v}_{B_i}$ of $\vhat(y | x)$ for all examples $(x, y)$ in each bin ${B_i}$. Also compute the average $\overline{\textsc{SqErrEst}}_{B_i}$ of the error estimates $\textsc{SqErrEst}(y, \phaty(y|x), [y_i]_{i=1}^K)$ for those same examples.
    \item Let \[T = \frac{1}{N} \sum_{B_i} |B_i| \cdot \left|\overline{v}_{B_i} - \overline{\textsc{SqErrEst}}_{B_i}\right|, \]
    where $N$ is the total number of test set examples and $|B_i|$ is the size of the $i$th bin (approximately N/100).
    \item Return $\textsc{ECE-V} = C \cdot T$ where $C$ is the number of classes.
\end{enumerate}

We scale up the expected calibration error by the number of classes in the last step so that our final metric represents a sum over classes instead of an average over classes, since we usually care about the full vector of predictions rather than the prediction for a single random class.
Note that some papers evaluate expected calibration error for the most likely predicted class only, which avoids this problem \citep{perez2022beyond}. However, it is not obvious that this criterion makes sense when evaluating epistemic uncertainty, especially in the presence of significant aleatoric uncertainty. In principle, we could also compute a separate calibration score for each class and then add them together at the end, instead of averaging over them and then scaling the average, but we choose not to do this because of the small size of our dataset (which would potentially make per-class calibration error estimates very noisy).

We also note that, in general, the binning procedure will under-estimate the exact expected calibration error, although this can be avoided by using the binned outputs instead of the original ones \citep{kumar2019verified}.

\textbf{$\bm{\E[\hat{v}^\theta]}$, $\bm{\E[(\hat{p}^\theta{-}p)^2]}$:}
To compute these values, we use the same estimates from the ECE-V computation, but instead of averaging differences over each bin, we simply compute separate sums of $\vhat(y | x)$ and $\textsc{SqErrEst}(y, \phaty(y|x), [y_i]_{i=1}^K)$. We divide by the total number of examples, so that these numbers again reflect sums over all classes (30 classes in this case).

\textbf{ECE-1:} We also estimate the expected calibration error for the ordinary predictions, again using 100 quantile bins, combining classes together, and scaling up by the number of classes. We apply the same procedure as for ECE-2, except that we use the predicted probability estimates $\phaty(y|x)$ instead of the variance estimates $\vhat(y | x)$, and we use the empirical probability $\frac{|\{i : y_i = y\}|}{K}$ instead of the squared error measurement $\textsc{SqErrEst}(y, \phaty(y|x), [y_i]_{i=1}^K)$.

\textbf{KL:} Finally, we compute the average KL divergence between the empirical probability distribution of the annotator labels and the model predictions
\[
D_{KL}\Big(\, p_{\mathcal{D}}(y|x) \,\Big\|\, \phaty(y|x) \,\Big)
= \frac{|\{i : y_i = y\}|}{K} \left( \log \phaty(y|x) - \log \frac{|\{i : y_i = y\}|}{K} \right)
,
\]
where $p_{\mathcal{D}}(y|x)$ is the distribution of annotator labels for image $x$, e.g.
\[
p_{\mathcal{D}}(y|x) = \frac{|\{i : y_i = y\}|}{K}
\]
where $[y_i]_{i=1}^K$ is the collection of annotator labels for image $x$.

We use this KL divergence metric to tune the hyperparameters of each method.

\clearpage
\subsection{Digits of $\pi$}\label{appendix:sec:digits-of-pi}

\subsubsection{Training data}\label{appendix:sec:digits-of-pi:pcfg}
To construct the queries $X$, we sample digit offsets $I$ according to a mixture of geometric random variables: we sample
\[
Q \sim \text{Uniform}(0.001, 0.1),
\qquad
I \sim \text{Geometric}(Q),
\]
then keep $I$ if it is less than 10,000 and reject it otherwise. We then embed this as a tokenized sequence of the form ``Tell me about digit 0 0 1 4 of pi.'', where $I$ is zero-padded to four digits long.

Given $I$, we then look up the actual $I$th digit after the decimal point in $\pi$ (so digit 1 is 1, digit 2 is 4, digit 3 is 1, digit 4 is 5, etc.).

Depending on the value $d$ of this digit, we then sample responses $Y$ from the following hand-written probabilistic context-free grammar (with mostly randomly-chosen weights):

\begin{empheq}[box={\fboxsep=6pt\fbox}]{align*}
\text{STATEMENT}(d) &\to \text{INTRO}~\text{VALUE}(d) && \text{with probability 0.99}
\\&\to \text{``Reply hazy, try again''} && \text{with probability 0.01}
\\\\
\text{INTRO} &\to \text{``It's''} && \text{with probability 0.138}
\\&\to \text{``It is''} && \text{with probability 0.086}
\\&\to \text{``That's''} && \text{with probability 0.218}
\\&\to \text{``That is''} && \text{with probability 0.185}
\\&\to \text{``Sure, it's''} && \text{with probability 0.096}
\\&\to \text{``Sure, it is''} && \text{with probability 0.02}
\\&\to \text{``Sure, that's''} && \text{with probability 0.17}
\\&\to \text{``Sure, that is''} && \text{with probability 0.087}
\\\\
\text{VALUE}(d) &\to \text{SAY-DIGIT}(d) && \text{with probability 0.56}
\\&\to \text{``an''}~\text{EVEN-ODD}(d)~\text{``number''} && \text{with probability 0.19}
\\&\to \text{``spelled''}~\text{SPELL}(d) && \text{with probability 0.13}
\\&\to \text{``spelled with''}~\text{SPELL-LENGTH}(d)~\text{``letters''} && \text{with probability 0.9}
\\\\
\text{SAY-DIGIT}(d) &\to \text{DIGIT}(d) && \text{with probability 0.616}
\\&\to \text{``the number''}~\text{DIGIT}(d) && \text{with probability 0.384}
\\\\
\text{DIGIT}(d) &\to \text{DIGIT-NAME}(d) && \text{with probability 0.323}
\\&\to \text{DIGIT-VAL}(d) && \text{with probability 0.677}
\end{empheq}

The nonterminals DIGIT-NAME$(d)$, DIGIT-VAL$(d)$, EVEN-ODD$(d)$, SPELL$(d)$, and SPELL-LENGTH$(d)$ depend on the digit:

\begin{itemize}
    \item DIGIT-NAME takes values ``zero'', ``one'', ``two'', \dots
    \item DIGIT-VAL takes values ``0'', ``1'', ``2'', \dots
    \item EVEN-ODD takes values ``even'', ``odd'', ``even'', \dots
    \item SPELL takes values ``Z E R O'', ``O N E'', ``T W O'', \dots
    \item SPELL-LENGTH takes values ``4'' (the number of letters in ``zero''), ``3'' (the number of letters in ``one''), ``3'' (the number of letters in ``two''), \dots
\end{itemize}

Using this context free grammar, we can both sample sequences and look up the true probability of any given sequence. In particular, given a statement, we can look up whether it is in the support of the grammar given the true digit value.

We also maintain a lookup table of semantically-equivalent sentences. When doing this, we treat as equivalent any two sentences that differ only based on how they expanded INTRO, SAYDIGIT, and DIGIT. So, for instance, ``Sure, it's the number 3'' and ``That is three'' are equivalent, and ``It's spelled T H R E E'' and ``Sure, it's spelled T H R E E'' are equivalent, but ``Sure, it's the number three'' and ``It's spelled T H R E E'' are \emph{not} judged equivalent (one is about the value and the other is about the spelling). Similarly ``It's spelled F O U R'' and ``It's spelled with 4 letters'' are not equivalent.

We train our model by concatenating $X$ and two samples $Y_1$, $Y_2$. Before concatenation, we pad $Y_1$ and $Y_2$ out to a constant length, producing examples of the form

\begin{Verbatim}[frame=single,samepage=true]
<BOS> Tell me about digit 0 1 2 6 of pi. <SEP>
That's the number four _ _ _ _ _ _ That's spelled with 4 letters _ _ _ _ _
\end{Verbatim}
\begin{Verbatim}[frame=single,samepage=true]
<BOS> Tell me about digit 0 0 4 8 of pi. <SEP>
Sure, that is an odd number _ _ _ _ It's 5 _ _ _ _ _ _ _ _
\end{Verbatim}
\begin{Verbatim}[frame=single,samepage=true]
<BOS> Tell me about digit 0 0 1 2 of pi. <SEP>
Sure, that's 9 _ _ _ _ _ _ _ It is the number nine _ _ _ _ _
\end{Verbatim}
\begin{Verbatim}[frame=single,samepage=true]
<BOS> Tell me about digit 0 0 1 5 of pi. <SEP>
Sure, that is 3 _ _ _ _ _ _ That is spelled T H R E E _ _
\end{Verbatim}

We use a tabular vocabulary, where each word or letter that could possibly be generated by the above process has its own token index.

\subsubsection{Model architecture and training details}

Our model architecture is a 6-layer causally-masked pre-LayerNorm Transformer \citep{vaswani2017attention,xiong2020layer}, with 8 attention heads, an embedding dimension of 512, an MLP dimension of 2048, a per-head embedding dimension of 64, and fixed sinusoidal positional embeddings.

We train this model for 50,000 training iterations at batch size 1024 using the AdamW optimizer \citep{loshchilov2017decoupled} with 1,000 warmup steps, a maximum learning rate of $2 \times 10^{-5}$, and a cosine decay schedule.
We use the ordinary maximum-likelihood objective, and apply masking so that only the tokens after \texttt{<SEP>} are scored (so the model does not have to learn to predict $X$). Our implementation is in JAX \citep{jax2018github} and uses Optax for optimization \citep{deepmind2020jax}.

\subsubsection{Sampling and filtering}

After training our model, we iterate through all of the digit offsets from 1 to 3000, and sample 120 statements from the model's approximate conditional $\phatym(\cdot | x)$.  We sample at temperature 1 but mask out any tokens with predicted probability less than 0.005 during sampling.
We note that the model has learned to sample pairs $(Y_1, Y_2)$, but we interrupt generation after it has generated $Y_1$; thus each of the 120 statements are drawn independently and identically distributed from $\phatym(\cdot | x)$, not $\phatyc$).

We first check whether each sample was correct, where we judge a sample as correct if it had a nonzero probability under $p(Y|X)$ (the context-free grammar described in \cref{appendix:sec:digits-of-pi:pcfg}).
We then assign scores to each sample:

\begin{itemize}
\item For the total probability ranking, we rank by the probability $\phatym(y_1 | x)$ of the sample under the model.
\item For average token log probability, we divide $\log \phatym(y_1 | x)$ by the length $|y_1|$ of the sample; this is thus an average of the log probabilities of each token, and approximates a ``rate'' of log probability \citep{malinin2020uncertainty}.
\item When clustering into groups of 10, we split the 120 samples for each digit into 12 groups of size 10 each, then assign a score to each sample based on the number of other samples in the same group that were semantically equivalent (according to the criterion in \cref{appendix:sec:digits-of-pi:pcfg}). So, if 4 out of the first 10 samples were semantically equivalent, each of those samples would get a score of 4. (Malformed samples that could not possibly appear under the data distribution are given a score 1.)
\item When clustering into groups of 120, we assign a score to each sample based on the number of other samples among all 120 that were semantically equivalent.
\item When ranking based on our epistemic confidence measure $\Ccheat$, we evaluate $\log \phatym(y | x)$ and $\log \phatyc(y|y, x)$ by concatenating the padded output $y$ with itself when scoring using the model, and separately summing the log-probabilities for $Y_1$ and $Y_2$. We then exponentiate the difference of these probabilities to evaluate $\Ccheat$, and finally assign a score of $-|1 - \Ccheat|$ so that examples closer to 1 have a higher score.
\end{itemize}

\begin{figure*}[p]
    \centering
\includegraphics[width=\linewidth]{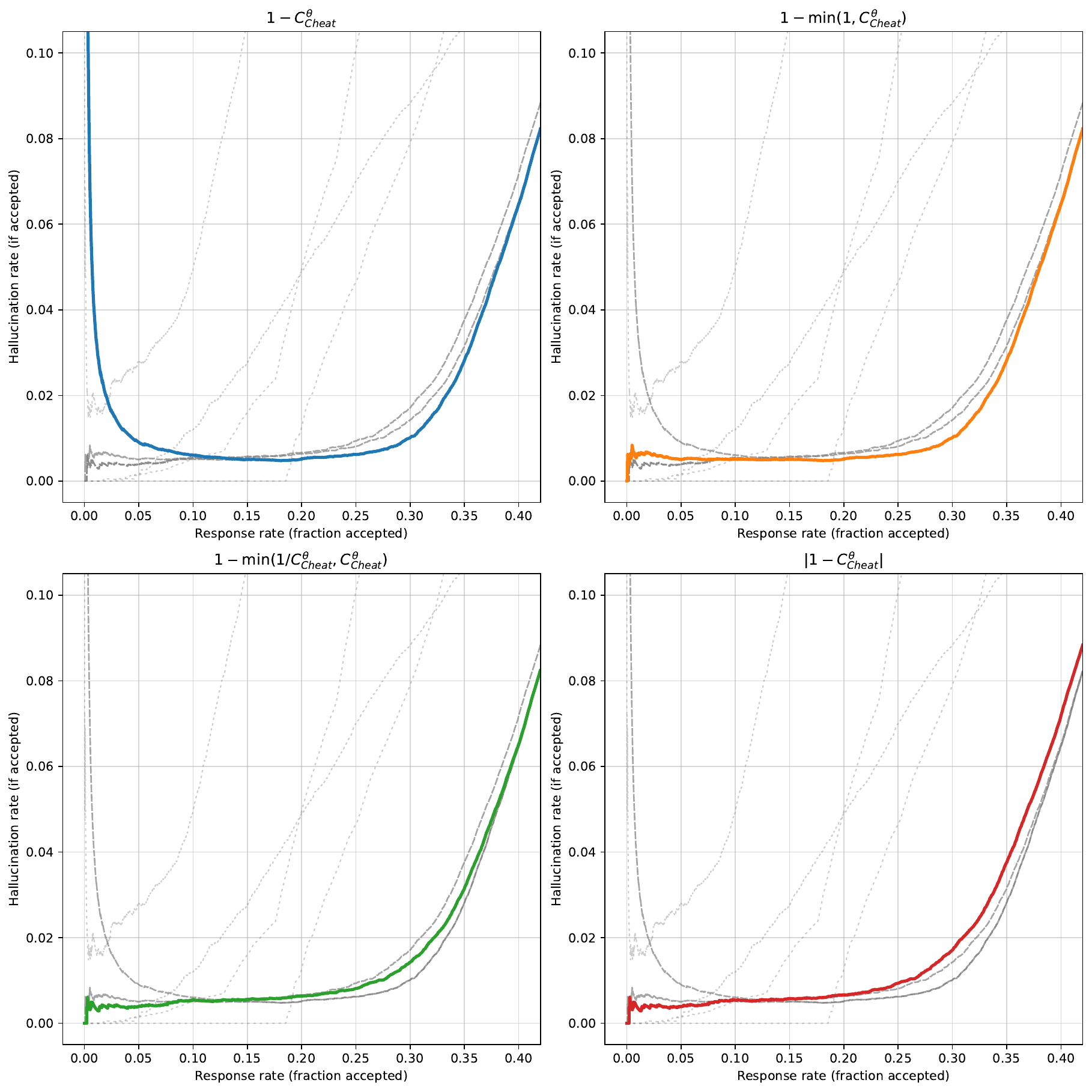}
    \caption{Hallucination rates when ranking by alternative versions of $\Ccheat$, each of which agree with $\Ccheat$ when it is less than 1 but handle $\Ccheat > 1$ differently. For comparison, the original baselines from \cref{fig:digitsofpi-precision-recall} are shown as dotted lines.}
    \label{appendix:fig:pi_rank_alternatives}
\end{figure*}

We additionally explored a number of alternative ranking strategies for the case where $\Ccheat > 1$. According to \cref{thm:conf_bounded}, $\Ccheat$ will never be greater than 1 if $\phatyy$ is calibrated, so our theoretical results do not provide any particular guidance for how to rank these samples. We plot four options in \cref{appendix:fig:pi_rank_alternatives}: $1 - \Ccheat$ (the simplest, but nonsensical when $\Ccheat$ is very large), $|1 - \Ccheat|$ (used in the main paper results), $1 - \min\{1, \Ccheat\}$, and $1 - \min\left\{\Ccheat, \frac{1}{\Ccheat}\right\}$. Using $1 - \Ccheat$ alone leads to high hallucination rates when using very strict thresholds, because the only samples that are kept are the outliers.

\subsubsection{Investigating model samples}

For each score type, we sort the samples (randomizing in the case of ties), and then compute the running hallucination rate (fraction of samples seen so far that actually had $\pgt(y|x) = 0$) and response rate (total fraction of samples seen so far) over prefixes of the sorted ordering; the results are shown in \cref{fig:digitsofpi-precision-recall} (right) in the main paper.

To get a better understanding of the relationship between the model's behavior, its accuracy, and its cheat-corrected epistemic confidence, we conduct a deeper study of the scores and accuracies assigned to each of the digits.

We start by visualizing some samples drawn from the model directly, colored based on the log-probability of each token. In \cref{appendix:fig:pi_samples_independent_1,appendix:fig:pi_samples_independent_2} (in \cref{appendix:sample_visualizations}), we show samples of $Y_1$ and $Y_2$ generated by the model. Note that the samples of $Y_2$ are not actually used under our uncertainty-quantification scheme. However, visualizing these samples reveals an interesting behavior: when querying digits that the model does not know, the samples $Y_1$ often show ``hallucinations'' of plausible facts, but the corresponding $Y_2$ is almost always \emph{consistent} with $Y_1$; the two samples do not contradict one another. This means that the model has learned to ``cheat'' well; it is able to condition on the information in $Y_1$ to make a more consistent prediction of $Y_2$. There are, however, a few exceptions where $Y_1$ and $Y_2$ are inconsistent or incoherent; in this case we find that $Y_2$ tends to be more correct than $Y_1$.

In \cref{appendix:fig:pi_samples_scored_1,appendix:fig:pi_samples_scored_2} (in \cref{appendix:sample_visualizations}), we next visualize the log-probabilities of each token when reuse the sampled $Y_1$ sequences as $Y_2$, by concatenating each $Y_1$ with itself; this is how we compute our confidence scores $\Ccheat(y|x)$. We see that conditioning on $Y_1$ does not usually significantly alter the probability of tokens with aleatoric variation (e.g. the initial stylistic tokens), but usually raises the probability of tokens with epistemic prediction error (e.g. tokens that state facts about the digit). This means the difference between the two log probabilities is usually a good indicator of the unknown parts of the original samples.

There are a few samples which show the opposite pattern, where the likelihood \emph{decreases} in the second sample (in the last row of each figure). This seems to occur when the originally sampled $Y_1$ was incorrect and had a very low probability, whereas the prediction for $Y_2$ was more accurate.
This pattern of an incorrect $Y_1$ but correct $Y_2$ is an indication of \emph{miscalibration} with respect to the paired outcomes $(Y_1, Y_2)$: if the model was truly conditioning on some property $\Phi(X)$ of the input query, its predictions on $Y_1$ should be just as accurate as its prediction of $Y_2$, and it should never predict an inconsistent $Y_1, Y_2$ pair that cannot occur under the data distribution. These samples tend to interfere with our epistemic confidence metric, and result in predicted confidences greater than 1 (and negative predicted variances).

\begin{figure*}[t]
    \centering
\includegraphics[width=\linewidth]{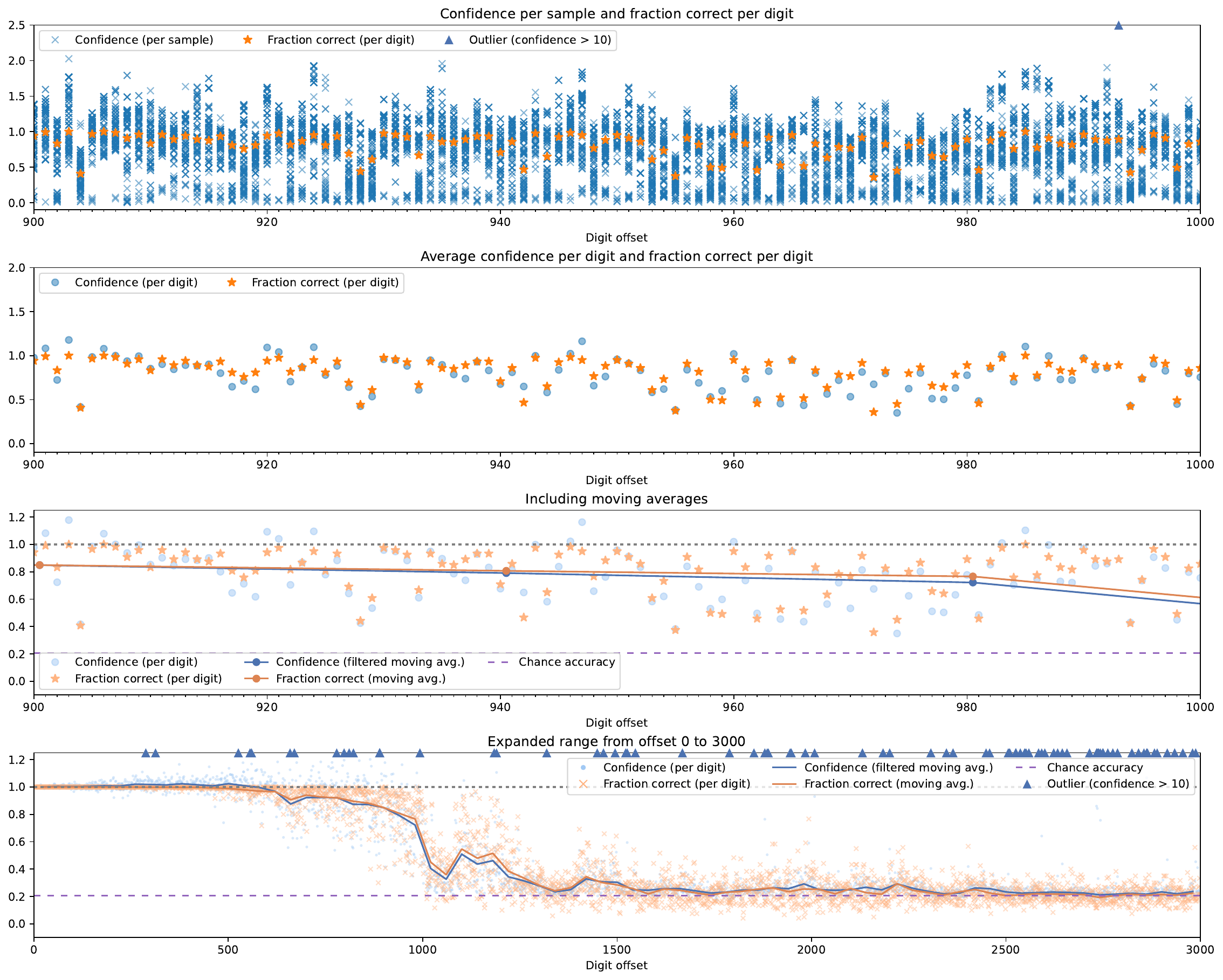}
    \vspace{-2em}
    \caption{Predicted epistemic confidence closely tracks the model's actual accuracy after removing outliers.
Row 1: We compute the confidence for each of the 120 samples for each digit, and also compute the fraction of those 120 samples that are correct. Row 2: We take the average confidence over all of the samples for each digit, ignoring outliers. Row 3: We divide digit offsets into groups of 40 digits, and compute the average of both confidence and fraction correct. Row 4: Zoomed out version of row 3, showing the full sequence.}
    \label{appendix:fig:pi_by_digit_detailed_steps}
\end{figure*}

\subsubsection{Relationship between confidence and correctness}

We also investigate the relationship between the epistemic confidence and the correctness of generated samples. This is somewhat complex, because each individual sample is either correct or not correct, and the model may assign different confidences to each sample. Determining whether the model's confidence was appropriate thus requires some sort of aggregation.

\begin{figure*}[t]
    \centering
\includegraphics[width=\linewidth]{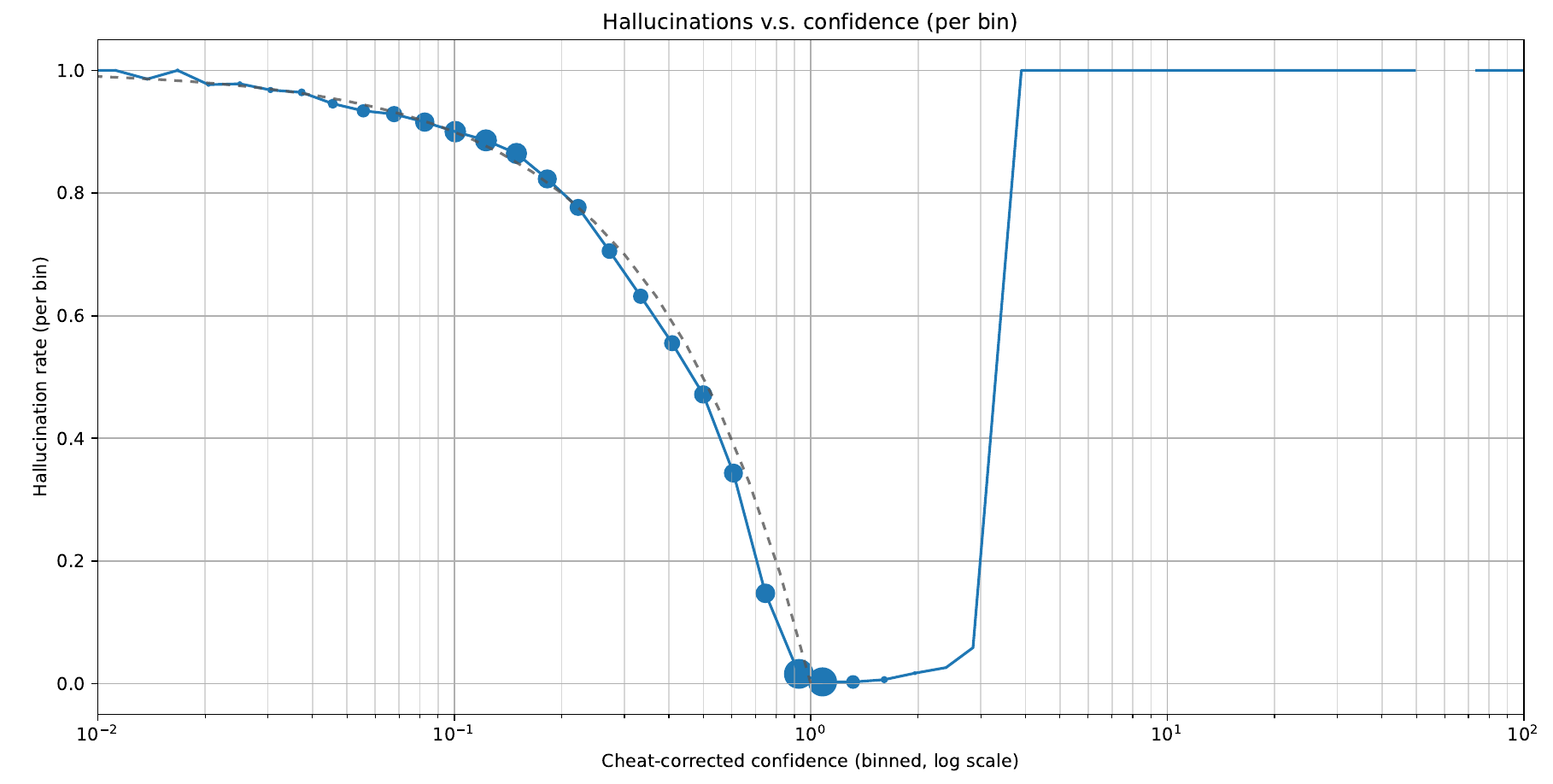}
    \vspace{-2em}
    \caption{Expanded version of \cref{fig:digitsofpi-precision-recall} (left), which shows the hallucination rate of the samples in different confidence bins, with dot size proportional to the number of samples with that confidence. We use a logarithmic scale for $\Ccheat$ to include the outliers which are much larger than 1; the prediction of \cref{thm:conf_hallucination} is shown as a dashed line. We see that confidences between 1 and 2 usually indicate a correct answer, but this quickly falls off, and most samples with extremely large $\Ccheat$ values are usually incorrect.}
    \label{appendix:fig:pi_reliability_log}
\end{figure*}

\cref{appendix:fig:pi_reliability_log} shows an expanded version of the confidence-vs-hallucination-rate plot in \cref{fig:digitsofpi-precision-recall} (left). Overall, when the confidence is less than 1, the quantity $1 - \Ccheat$ is a good estimate of hallucination rate. Samples with confidence close to 1 tend to be correct, but samples with extremely large confidence values tend to be wrong. 
We find that most of these outliers are due to having an incorrect $Y_1$ but fixing the mistake in $Y_2$, as detailed in the previous section.

To show how confidence varies based on digit offset, we additionally aggregate these across nearby digits, since we know from the data distribution that nearby digits appear with similar frequency and should thus be similarly difficult. We compute the fraction of samples that are correct for each digit, and compare it to the average confidence of the samples. To get a meaningful aggregate confidence measurement, we need to throw out samples with extremely large confidences due to miscalibration. The results are shown in \cref{appendix:fig:pi_by_digit_detailed_steps}.

\clearpage
\subsection{Offline RL - Frozen Lake}

\subsubsection{Environment and Expert Policies}

\begin{figure*}[t]
    \centering
\includegraphics[width=0.5\linewidth]{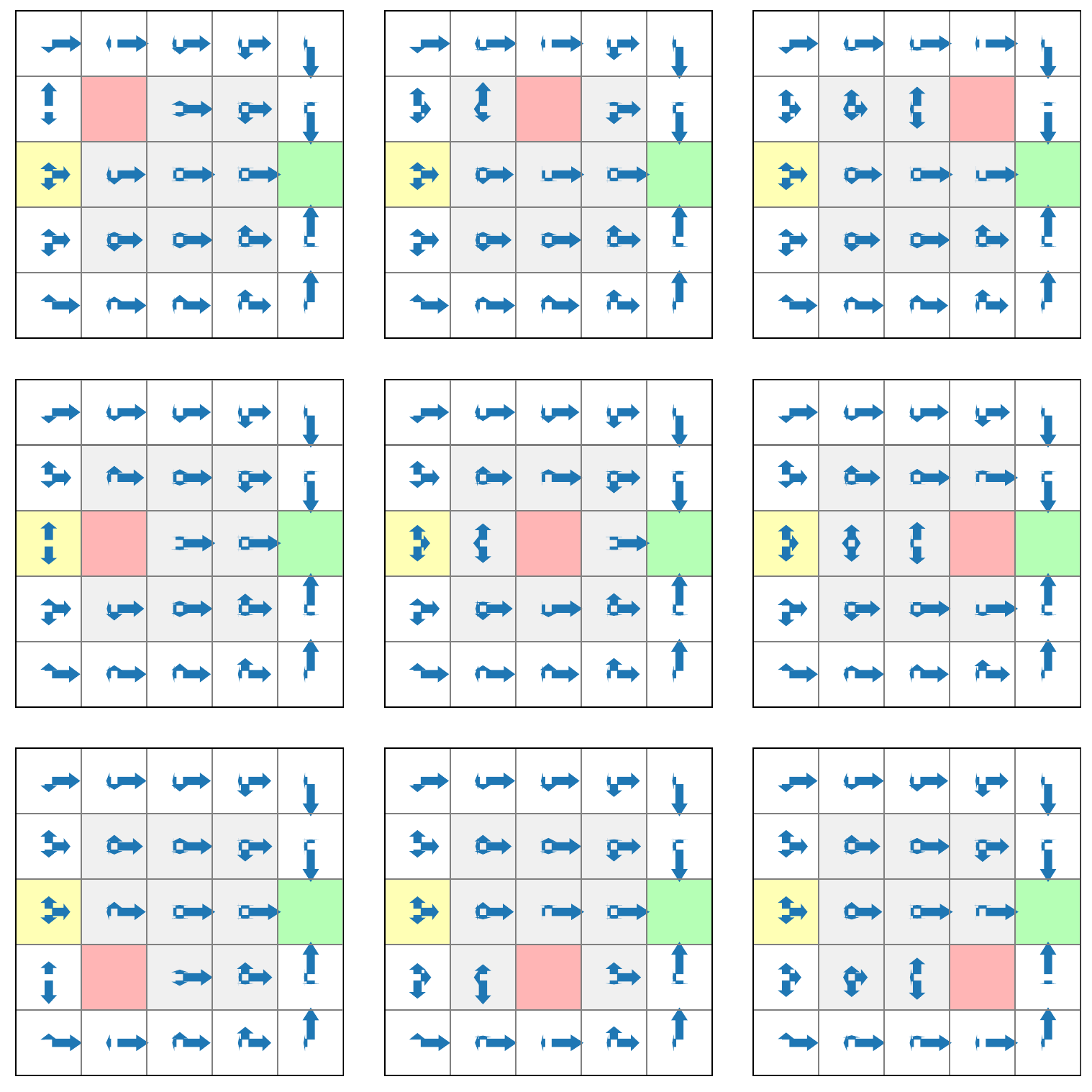}
    \caption{Expert policies for the frozen lake task, depending on the location of the unsafe patch. Arrow length is proportional to probability of taking that action in each state.}
    \label{appendix:fig:frozen_lake_expert}
\end{figure*}

We represent the Frozen Lake environment (\cref{fig:frozenlake}) as a graph, where each gridworld cell is a node in the graph, and there are four outgoing connections from each node to the adjacent nodes in each direction. Costs of each transition are determined by the destination state:
\begin{itemize}
    \item Moving onto the goal state (green square) gives a reward of 40 and ends the episode.
    \item Moving onto one of the non-lake squares (white border) gives a reward of -3.
    \item Moving onto one of the eight non-central lake squares gives a reward of -5.
    \item Moving onto the center lake square gives a reward -10.
\end{itemize}
These costs were chosen so that cutting across the center of the lake gives slightly more reward than going around it ($-5 - 10 - 5 + 40 = 20$ vs $-3 \times 7 + 40 = 19$), but only slightly.

For each of the nine possible locations of the unsafe patch, we then solve for the optimal entropy-regularized tabular policy by iterating the soft Q-learning Bellman backup operator \citep{schulman2017equivalence} with a discount rate of 0.9 and a temperature of 2.5. We explicitly disallow moving onto the unsafe patch or leaving the bounds of the gridworld by assigning $-\infty$ reward to each; the resulting expert policy never takes those actions.

The resulting expert policies for each of the 9 possible unsafe patch locations are shown in \cref{appendix:fig:frozen_lake_expert}.

\subsubsection{Imitation Learning}

Our model architecture for this task is a 12-layer causally-masked pre-LayerNorm Transformer \citep{vaswani2017attention,xiong2020layer} based on GPT-2 \citep{radford2019language}, with 12 attention heads, an embedding dimension of 768, an MLP dimension of 3072, a per-head embedding dimension of 64, and fixed sinusoidal positional embeddings.

For each example, we first tokenize the model's view of the environment, using a single token per gridworld cell. In 50\% of the examples, we mark the unsafe region with a token "C" identifying it, and the safe parts of the lake with "I". In the other 50\%, we mark all potentially-unsafe regions with a "?". We also include tokens for the start state ("S"), goal state ("G") and border states ("P"). (Since these tokens are constant across all examples, they are likely not important to include, but we include them for simplicity.)

Loosely inspired by the setup of \citet{chen2021decision}, we next represent the expert trajectories as sequences of states and actions. We do not tokenize the rewards, since our objective is simply to imitate the expert trajectories. We also concatenate \emph{two} independent samples together, padding them to a maximum length of 16 steps each.

This produces examples of the following form (with each word mapped to its own token index, and padding denoted by ``\texttt{\_}''):

\begin{Verbatim}[frame=single,samepage=true]
P P P P P 
P I I I P 
S I I C G 
P I I I P 
P P P P P 
<SEP> c0 r2 down c0 r3 up c0 r2 down c0 r3 up c0 r2 down c0 r3 down c0 r4 right
c1 r4 up c1 r3 right c2 r3 right c3 r3 right c4 r3 up FINISH _ _ _ _ _ _ _ _ _ _
_ _ _
<SEP> c0 r2 right c1 r2 right c2 r2 down c2 r3 up c2 r2 down c2 r3 left c1 r3
down c1 r4 right c2 r4 right c3 r4 right c4 r4 left c3 r4 right c4 r4 up c4 r3
up FINISH _ _ _ _ _ _ _
\end{Verbatim}

\begin{Verbatim}[frame=single,samepage=true]
P P P P P 
P ? ? ? P 
S ? ? ? G 
P ? ? ? P 
P P P P P 
<SEP> c0 r2 right c1 r2 right c2 r2 right c3 r2 right FINISH _ _ _ _ _ _ _ _ _ _
_ _ _ _ _ _ _ _ _ _ _ _ _ _ _ _ _ _ _ _ _ _ _ _ _ _ _
<SEP> c0 r2 up c0 r1 right c1 r1 down c1 r2 up c1 r1 down c1 r2 up c1 r1 up c1
r0 down c1 r1 down c1 r2 right c2 r2 right c3 r2 right FINISH _ _ _ _ _ _ _ _ _
_ _ _ _
\end{Verbatim}

We train by maximizing log-likelihood under a standard autoregressive training setup. We only train it to imitate the sequences of states and actions, by masking out all tokens prior to the \texttt{<SEP>} token.
We train this model for 50,000 training iterations at batch size 512 using the AdamW optimizer \citep{loshchilov2017decoupled} with 1,000 warmup steps, a maximum learning rate of $2 \times 10^{-5}$, and a cosine decay schedule.
Our implementation is in JAX \citep{jax2018github} and uses Optax for optimization \citep{deepmind2020jax}.

We then sample trajectories from the model at temperature 0.9, conditioning on either a full or partial view of the environment, and compute the cheat-corrected epistemic confidence for each. For our ``cheat-corrected rejection sampling'' decoding strategy, we reject any sample with $|1 - \Ccheat| > 0.05$ and resample it; rejected samples are shown as dashed lines in \cref{fig:frozenlake} in the main paper. For our ``cheat-corrected top-1 search'' decoding strategy, we sample 6400 samples, then identify the sample $y$ with the largest predicted probability $\phatym(y|x)$ subject to the constraint $|1 - \Ccheat| \le 0.05$.

We show additional trajectories sampled by our model, along with their confidences, in \cref{appendix:fig:frozen_lake_samples_r1c1,appendix:fig:frozen_lake_samples_r3c2,appendix:fig:frozen_lake_samples_hidden} (in \cref{appendix:sample_visualizations}).

\end{document}